\documentclass{article}


\usepackage{PRIMEarxiv}

\usepackage[utf8]{inputenc} 
\usepackage[T1]{fontenc}    
\usepackage{hyperref}       
\usepackage{url}            
\usepackage{booktabs}       
\usepackage{amsfonts}       
\usepackage{nicefrac}       
\usepackage{microtype}      
\usepackage{lipsum}
\usepackage{fancyhdr}       
\usepackage{graphicx}       
\usepackage{amsmath,amsfonts,bm}
\usepackage{hyperref}
\usepackage{url}
\usepackage{mathrsfs}
\usepackage{amsthm}%
\usepackage{amscd}
\usepackage{amssymb}%
\usepackage{graphicx}
\usepackage{todonotes}
\usepackage{mathrsfs}
\usepackage{subcaption}
\usepackage{comment}

\usepackage{tcolorbox}
\usepackage{booktabs}
\usepackage{natbib}
\usepackage{float}     
\usepackage[section]{placeins} 
\usepackage{graphicx}
\usepackage{subcaption}
\usepackage{float}           
\usepackage[section]{placeins} 
\usepackage{caption}         
\raggedbottom                
\captionsetup{skip=4pt}      
\setlength{\textfloatsep}{10pt plus 2pt minus 2pt}
\setlength{\intextsep}{8pt plus 2pt minus 2pt}

\usepackage{tikz}
\usetikzlibrary{matrix,arrows}

\usepackage{tikz-cd}

\newtheorem{theorem}{Theorem}
\newtheorem{proposition}{Proposition}
\newtheorem{lemma}{Lemma}
\newtheorem{corollary}{Corollary}

\newtheorem{remark}{Remark}

\newtheorem{definition}{Definition}
\newtheorem{example}{Example}

\newcommand{\R}{\mathbb{R}}

\newcommand{\Hcal}{H}

\newcommand{\softmax}{\mathrm{softmax}}

\definecolor{tabred}{rgb}{0.8, 0.18, 0.15}
\definecolor{tabblue}{rgb}{0.14, 0.46, 0.68}
\definecolor{tabgreen}{rgb}{0.14, 0.65, 0.15}
\definecolor{taborange}{rgb}{0.90,0.49,0.13} 


\newcommand{\val}{\textsc{Val}}
\newcommand{\fpos}{f_{\textsc{Pos}}}
\newcommand{\fsym}{f_ {\textsc{Sym}}}

\newcommand{\spos}{s_\textsc{Pos}}
\newcommand{\ssym}
{s_\textsc{Sym}}
\newcommand{\smix}
{s_\textsc{Mix}}
\newcommand{\fmix}{f_{\textsc{Mix}}}
\newcommand{\hsym}{\Hcal_\textsc{Sym}}
\newcommand{\hpos}{\Hcal_{\textsc{Pos}}}
\newcommand{\hpost}{\hpos^{\theta}}
\newcommand{\hsymt}{\hsym^{\theta}}
\newcommand{\hmix}{\Hcal_{\textsc{Mix}}^{0,\theta_2}}
\newcommand{\onehot}{\textsc{OneHot}}
\newcommand{\embed}{\textsc{Emb}}
\newcommand{\model}{M}
\newcommand{\toymodel}{\model_{\embed}}
\newcommand{\toymodeloh}{\model_{\textsc{\onehot}}}
\newcommand{\ksym}{K_{\textsc{Sym}}}
\newcommand{\qsym}{Q_{\textsc{Sym}}}

\newcommand{\kmix}{K_{\textsc{Mix}}}
\newcommand{\qmix}{Q_{\textsc{Mix}}}

\newcommand{\jorge}[1]{\textcolor{black}{#1}}







\def\eqref#1{equation~\ref{#1}}









\def\1{\bm{1}}










\DeclareMathAlphabet{\mathsfit}{\encodingdefault}{\sfdefault}{m}{sl}
\SetMathAlphabet{\mathsfit}{bold}{\encodingdefault}{\sfdefault}{bx}{n}











\newcommand{\Var}{\mathrm{Var}}



\pagestyle{fancy}
\thispagestyle{empty}
\rhead{ \textit{ }} 


\title{Decoupling Positional and Symbolic Attention Behavior in Transformers}

\author{
  Felipe Urrutia \\
  University of Chile \& CENIA\\
  Santiago, Chile\\
  \texttt{furrutia@dim.uchile.cl} \\
   \And
   Jorge Salas \\
  CENIA \\
  Santiago, Chile\\
  \texttt{jorge.salas@cenia.cl} \\
  \AND
  Alexander Kozachinskiy\\
  CENIA \\
  Santiago, Chile \\
  \texttt{alexander.kozachinskyi@cenia.cl} \\
  \And
  Cristian Buc Calderon \\
  CENIA \\
  Santiago, Chile \\
  \texttt{cristian.buc@cenia.cl} \\
  \And
  Hector Pasten \\
  CENIA \& Faculty of mathematics UC\\
   Santiago, Chile \\
  \texttt{hector.pasten@uc.cl} \\
  \And
  Cristobal Rojas \\
  IMC UC \& CENIA \\
   Santiago, Chile\\
  \texttt{luis.rojas@uc.cl} \\
}

\begin{document}
\maketitle

\begin{abstract}
An important aspect subtending language understanding and production is the ability to independently encode positional and symbolic information of the words within a sentence. In Transformers, positional information is typically encoded using Positional Encodings (PEs). One such popular PE, namely Rotary PE (RoPE), has been widely used due to its empirical success. 
Recently, it has been argued that part of RoPE's success emerges from its ability to encode robust positional and semantic information using large and small frequencies, respectively. In this work, we perform a deeper dive into the positional versus symbolic dichotomy of attention heads behavior, both at the theoretical and empirical level. We provide general  definitions of what it means for a head to behave positionally or symbolically, prove that these are two mutually exclusive behaviors and develop a metric to quantify them.  
We apply our framework to analyze Transformer-based LLMs using RoPE and find that all heads exhibit a strong correspondence between behavior and frequency use.  
Finally, we introduce canonical tasks designed to be either purely positional or symbolic, and demonstrate that the Transformer performance causally relates to the ability of attention heads to leverage the appropriate frequencies. In particular, we show that we can control the Transformer performance by controlling which frequencies the attention heads can access. Altogether, our work provides a detailed understanding of RoPE, and how its properties relate to model behavior.  
\end{abstract}

\keywords{Positional encodings \and Transformers \and Interpretability}

\section{Introduction}
	\begin{quote} ``[...]Man is not truly one, but truly two.''
\end{quote}
\hfill --- \textit{Robert Louis Stevenson, The Strange Case of Dr. Jekyll and Mr. Hyde.}

Positional Encodings (PEs) play a critical role in how modern Transformer-based Large Language Models (LLMs) function. Such criticality is, for example, illustrated by the fact that encoder-only Transformers with No Positional Encoding (NoPE) are \emph{invariant} under permutations of prompt's tokens \citep{perez2021attention}, severely limiting their expressivity. While decoder-only Transformers using NoPE can in theory recover positional information \citep{kazemnejad2023impact}, they are unable to learn some matrices observed in practice \citep{barbero2024round}. Regardless, at the practical scale, adding a component to explicitly inject positional information is the current standard choice in state-of-the-art Transformer-based LLMs. 

Rotary PE (RoPE) has increasingly become one of the most popular choices to inject positional information. These PEs are given by a sequence $\theta_1,\dots,\theta_{d/2}$ of $d/2$ angles (where the embedding dimension $d$ is assumed to be even), each of which acts as a corresponding rotation on a different rotational plane. By rotating query and key vector pairs, they account for the relative positional distance between them, thereby incorporating the positional information into the computed representations. 
Conventionally, RoPE is believed to be helpful because it facilitates token dependency decay with increasing distance. Yet, more recent evidence suggests that RoPE's utility lies in its ability to leverage different frequency bands for different purposes. In \cite{barbero2024round} for instance, the authors show that while certain models largely prefer to use low frequencies (which they conjecture to implement ``information channels''), they also find that specific particular heads use high frequencies to produce ``robust positional attention patterns'' that also play a key role in the overall workings of the model.   

Moreover, the choice of the base in RoPE (which determines the total frequency range) has also been shown to play an important role when extrapolating models to longer contexts. On the one hand, there is evidence that improved performance in long-context fine-tuning can be achieved by decreasing the base (i.e., increasing the frequencies) in RoPE \citep{liu2023scaling}, a phenomenon often attributed to the fact that higher frequencies bias the Transformer towards \emph{attending more to closer tokens}. On the other hand, decreasing the base negatively affects the ability to accurately retrieve information in such long contexts \citep{men2024base}. Indeed, information retrieval (in long contexts) requires attending to similar tokens \emph{even when their relative distance is large}, an ability suggested to be improved by selecting larger (rather than smaller) bases \citep{men2024base}, thereby shifting the range towards lower frequencies. 

At a conceptual level, all these works seem to point towards the fact that different frequency bands in RoPE are used by attention heads to implement two opposite core abilities: one more concerned with positional information and one with symbolic information, creating a tension between the two. 
Achieving a deeper understanding of this tension is a crucial step towards improving the mechanisms of Transformers by optimally striking the positional-symbolic balance in a given task context.


 In particular, the following questions remain open: 

\begin{itemize}
    \item What are the mathematical properties that underlie  these two core abilities?  
    \item How can we measure if an attention head's behavior is exhibiting either of these abilities? 
    \item How do these abilities relate to the use of the different frequencies in RoPE? 
    \item How is model performance affected by the choice of these frequencies?  
\end{itemize}

\noindent\textbf{Contributions. } In this work, we perform a detailed theoretical and empirical\footnote{Code for our experiments can be found at \href{https://anonymous.4open.science/r/positional-and-symbolic-iclr2026-5B3E/}{\includegraphics[height=1em]{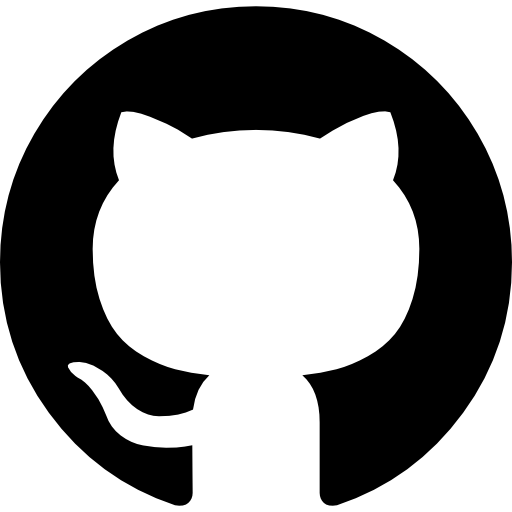} \texttt{positional-and-symbolic-iclr2026-5B3E}}.} study of all these questions. Due to space constrains, we defer our further related work discussion to Appendix \ref{App:related_work}. We rely on the \textsc{Gemma-2}, \textsc{Qwen-2}, and \textsc{LLaMA-3} families of large language models for our empirical analysis. In summary:

\begin{enumerate}

  \item We provide formal definitions of what it means for an arbitrary attention head to behave \emph{purely positionally} or \emph{purely symbolically} on a given input. 
  We prove that these behaviors are mutually exclusive, unless the attention pattern follows a uniform distribution. Moreover, we prove that certain intrinsically positional operations cannot be implemented by heads that behave symbolically, whereas certain symbolic operations cannot be implemented by heads that behave positionally. 
   
    \item We introduce a novel metric intended to provide \emph{scores} that reflect the extent to which a given head behaves positionally or symbolically on a given input. Our metric can be flexibly applied to analyze a given model at different levels of granularity, from specific heads on specific inputs and at specific frequencies in the case of RoPE-based attention, to a single pair of scores per head, enabling the characterization and visualization of the \emph{positional-symbolic profile} of the model.  
    
    \item We apply our metric to inspect the behavior of attention heads in Transformer-based LLMs using RoPE, specifically the \textsc{Gemma-2}, \textsc{Qwen-2}, and \textsc{LLaMA-3} families. By analyzing these scores along the different frequency ranges, we reveal a surprisingly sharp connection between frequencies and attention head behavior: lower frequencies are preferred by symbolic heads, while moderately larger frequencies tend to correspond to positional heads. 
    On the other hand, larger frequencies are used by heads achieving high values in both scores, a phenomenon that forces an approximately uniform distribution of attention weights. 

    \item We then study attention heads in order to understand how frequency choices affect their performance in canonical tasks designed to intrinsically be either positional or symbolic. We find that a head can act positionally (and thus perform well on the respective task) only when given access to relatively large frequencies, while forcing it to work with lower frequencies inevitably leads to poor performance. We show that in this case, a $U$-shaped accuracy pattern tends to emerge. Similarly, attention heads can perform well on intrinsically symbolic tasks if given access to low frequencies. Interestingly, forcing it to rely on higher frequencies also harms performance, but this time leading to an \emph{inverted} $U$-shaped accuracy pattern. Our observations are supported by both theoretical and empirical results.

\end{enumerate}

\subsection{Related Work}\label{App:related_work}
	
Given the success of RoPE, several studies have dived into its mechanisms to provide: (\textit{i}) better long-context performance, and (\textit{ii}) a deeper knowledge of how the RoPE's mechanisms relate to the type of information encoded in the Transformer.

\paragraph{Extending context windows.} Much work has been done to modify or extend the mechanisms of RoPE in order to extend the context window length of LLMs. It has been shown down‑scaling position indices on RoPE, within the pre-trained context-length limit and alongside small amounts of fine-tuning, avoids catastrophic attention scores blow‑ups seen in naive extrapolation, thereby promoting extrapolation in unseen ranges \citep{chen2023extending}. In the same vein, \citet{peng2023yarn} propose YaRN, a method that combines Neural-Tangent-Kernel aware position interpolation with attention scaling to improve long context window performance. Other researchers have found that increasing the RoPE base frequency improves long-context information retrieval \citep{xiong2023effective}. More recently, LongRoPE proposes the addition of non‑uniform position interpolation with a fine-tuning curriculum strategy as shown to further increase the ability of long-context performance (up to 2,048k tokens) \citep{ding2024longrope}. Finally, by analyzing attention scores in the Needle In A Haystack \footnote{\url{https://github.com/gkamradt/LLMTest_NeedleInAHaystack}} benchmark, \citet{yang2025rope} observed that NoPE was related to higher attention mass over the needle (compared to RoPE), leading them to propose RNoPE, i.e., alternating NoPE and RoPE layers. RNoPE resulted in better performance, particularly within question-answering contexts . 

\paragraph{Mechanisms of RoPE and their relation to information encoding.} Despite its importance to improve model performance, much less work has focused on generating a deeper theoretical knowledge of the mechanisms subtending RoPE, and how these relate to the behavior of attention heads, and in turn to model performance. \citet{chen2024rotary} observed that the 2d pair rotation direction highly depends on Query/Key pair weight angle. These authors observe empirically, and theoretically demonstrate, that non-orthogonal weight vector pairs are more sensitive to positional information, while orthogonal ones focus rather on semantics. Their theoretical advances was then used to devise an efficient fine-tuning method (Angle-based Weight Masking) that primarily focuses on updated semantics-related weights. Closest to our work is that of \citet{barbero2024round}. In their work, 
they theoretically prove (and empirically observe in Gemma 7B) that RoPE can learn specific positional (diagonal and off-diagonal) attention head patterns, which is not the case for NoPE. Furthermore, they relate the ability to learn these position attention patterns as leveraging high frequencies in RoPE. Conversely, semantic attention patterns are attributed to leveraging low frequencies. Furthermore, they suggest to truncate low-level frequencies to avoid arbitrary (irrational) rotation values in long contexts, and observe better performance when doing so (when keeping 75\% of the highest RoPE frequencies).

Our work extends current knowledge by: (\textit{i}) mathematically defining positional or symbolic attention head behavior, (\textit{ii}) offering a novel metric to quantify these behaviors at different levels of granularity (from heads to full model), (\textit{iii}) theoretically proving that position and symbolic attention heads are mutually exclusive, (\textit{iv}) theoretically proving that some positional tasks cannot be performed by symbolic heads, and vice-versa, and (\textit{v}), empirically demonstrating that we can causally control the accuracy pattern in by gating access to specific frequencies.

Moreover, as described in the main text, we use a toy 1-head decoder-only transformer to gain a deeper understanding of the RoPE properties. Such a model allows us to have a strong hold on the interpretation of the RoPE mechanism, otherwise lost when working with deeper and wider models. Other works following this approach have demonstrated important features of Transformers. For instance, such toy models have been used to characterize transformer loss landscapes \citep{makkuva2024local,makkuva2025attention}. Furthermore, 1-layer transformers have also been used to evaluate their expressivity \cite{li2024one,quirke2023understanding,sanford2023representational,sanford2024one,kozachinskiy2025strassen}. Moreover, these models have been used to understand the learning dynamics and properties in transformers \citep{tian2023scan,li2024mechanics,huang2024non,yang2024context,chen2024training}.
	\section{Preliminaries}\label{sec:preliminaries}



\begin{definition}[Attention Head]
\label{def_layer}
    A $d$-dimensional decoder-only \emph{attention head} is a function $\Hcal\colon(\mathbb{R}^d)^* \to(\mathbb{R}^d)^*$ given by a continuous ``Logits function'' $L\colon (\mathbb{R}^d\times \mathbb{N})^2 \to \mathbb{R}$, a continuous ``Value function'' $\val\colon\mathbb{R}^d\to\mathbb{R}^d$, and a continuous ``activation function'' $F\colon\mathbb{R}^d\times\mathbb{R}^d \to\mathbb{R}^d$.
    Given an input sequence of vectors $\bar x = (x_1, \ldots, x_n)\in(\mathbb{R}^d)^n$, the head $H$ outputs a sequence of vectors $\bar y = (y_1, \ldots, y_n) = H(\bar x)$, computed as follows.     
    First, the ``attention weights'' are computed as $ 
w_j^i = \exp(\lambda_j^i)/\sum_{k\leq i}\exp(\lambda_k^i), \,\, \text{with} \,\, \lambda_j^i=L(x_i, i,x_j,j);$ then, a sequence $\bar a = (a_1, \ldots, a_n)$ of ``attention vectors'': ${a_i = \sum_{j=1}^i w_j^i\cdot \val(x_j)}$;
 and finally, one sets: $y_i = F(a_i, x_i)$, $i = 1,\ldots, n.$
\end{definition}


\noindent{\textbf{Transformer architecture}}. A (decoder-only) transformer $\toymodel$ for the vocabulary $\Sigma$ is composed of an embedding function $\embed\colon \Sigma\to \R^d$ where $d$ is the embedding dimension, a finite sequence of $d$-dimensional Attention Heads $\Hcal_1,...,\Hcal_\ell$, and a function $r\colon\R^d\to \R^{|\Sigma|}$.
Given a sequence $s = (\sigma_1,...,\sigma_n)$ in the vocabulary $\Sigma$, the transformer $\model$ returns a probability distribution on $\Sigma$ as follows:
Define $x_i=\embed(\sigma_i)$. Then, apply $\Hcal_\ell\circ...\circ \Hcal_1$ to $(x_1,...,x_n)$ to obtain $(u_1,...,u_n)$. Subsequently, define $v=r(u_n)\in\R^{|\Sigma|}$ and finally the desired probability distribution is given by $\mu=\softmax(v)$. When $\mu$ is maximized at a unique symbol $\sigma\in\Sigma$ for an input $s$, we set $\toymodel(s) = \sigma$.  

\noindent{\textbf{About the Logits function.}} Of particular interest for us are the Logits functions of following form:
\begin{equation}
\label{eq_ropelike}
    L(x_i,i,x_j,j) = \langle U^jKx_j, U^{i}Qx_i\rangle, \qquad i,j = 1,\dots,n \quad j\leq i, 
\end{equation}
    where $Q$ and $K$ are the (learnable) “query" and “key" matrices and $U:\mathbb{R}^d \to \mathbb{R}^d$ is a unitary linear operator whose role is to incorporate the relative positional information between $x_i$ and $x_j$ given by positional inputs $i>0$ and $j>0$. We note that both NoPE and RoPE are special cases. 
	
	\section{Positional versus Symbolic attention}\label{sec:posvssym_attention}
	As done in previous transformer explainability \citep{ali2025entropy,sakarvadia2023attention,ferrando2023explaining,ameisen2025circuit} and RoPE analysis work \citep{barbero2024round}, and for the sake of simplicity, we choose to base our theoretical analyses directly on the Logits function, while attention weights are used for empirical observations. All proofs are deferred to the Appendix.

\subsection{Definitions}

In this section, we provide formal definitions of what it means for a head to behave positionally or symbolically on a given input. In the following, $S_i$ will denote the set of permutations of $[i]$.


\begin{definition}[Positional and Symbolic attention]\label{def:act_pos_sym}
We say that a head $\Hcal$ acts \emph{\textbf{positionally}} on an input $\overline{x} = (x_1,x_2,...,x_n)$ at query $i\leq n$ if its logit sequence is \emph{{invariant}} under permutation of the key vectors. That is, if for all $\pi\in S_{i-1}$:
\[
L(x_i,i,x_{\pi(j)},j)=L(x_i,i,x_j,j) \qquad \forall j<i.
\]
On the other hand, we will say that $\Hcal$ acts \emph{\textbf{symbolically}} on $\overline{x} = (x_1,x_2,...,x_{n})\in(\mathbb{R}^{d})^n$ at query $i\leq n$ if its logit sequence is \emph{{equivariant}} under permutations of the key vectors. That is, if 
\[
L(x_i,i,x_j,\pi(j))=L(x_i,i,x_j,j) \qquad \forall \pi \in S_{i-1}, \quad \forall j<i.
\]
\end{definition}

In other words, suppose we are querying on a given input from position $i$ to all positions $j\leq i$. A head $\Hcal$ is positional when $L(x_i,i,x_j,j)$ depends only on the position $j$, and not on the value of the key vector $x_j$. In contrast, $\Hcal$ will be symbolic when $L(x_i,i,x_j,j)$ only depends on the key-vector $x_j$ (the ``symbol''), regardless of the position $j$. See Figure \ref{fig:remark_pos_sym} for an illustration (Appendix \ref{fig:attention_head_def}).

Let us mention a few important examples. First, it is easy to see that a NoPE head acts symbolically on every input (the logits function does not depend on $j$ in this case). Second, if an input $\bar{x}$ is such that the key vectors $Kx_j$ are all the same for $j=1\dots i-1$, say equal to some vector $v$, then any head with Logits as in \eqref{eq_ropelike} will act positionally on it, simply because $L(x_i,i,x_j,j)=\langle v, U^{i-j}Qx_i\rangle$, which only depends on $j$. Note that for such an input, a NoPE head will act both positionally and symbolically. 
In this case, the logits $L(x_i,i,x_j,j)$ will have the same value for all $x_j$ with $j < i$, yielding a \emph{uniform} attention pattern. 


\subsection{The positional-symbolic exclusion principle}

 Let $L$ be a logit function and $\bar{x}=(x_1,...,x_n)$ an input. Here, we assume we are querying from $i=n$. For a permutation $\pi\in S_{n-1}$ and $j\in[n-1]$, define 
 \begin{align*}
\delta^{\rm pos}_{L,\bar x}(\pi,j)&= L(x_n,n,x_{\pi(j)},j)-L(x_n,n,x_{j},j),\\
     \delta^{\rm sym}_{L,\bar x}(\pi,j)&= L(x_n,n,x_{j},\pi(j))-L(x_n,n,x_{j},j).
 \end{align*}
 We view $\delta^{\rm pos}_{L,\bar x}$ and $\delta^{\rm sym}_{L,\bar x}$ as real $((n-1)!\cdot (n-1))$-dimensional vectors with coordinates, indexed by elements of $S_{n -1} \times [n -1]$. By definition, a head acts positionally (respectively, symbolically) at $i = n$ on the input $\bar x$ if and only if $\delta^{\rm pos}_{L,\bar x} = 0$ (respectively, $\delta^{\rm symb}_{L,\bar x} = 0$). One could imagine that some attention heads in real-life transformers, although not being exactly positional or symbolic, have one of the vectors $\delta^{\rm pos}_{L,\bar x}$ or $\delta^{sym}_{L,\bar x}$ really close to 0, thus exhibiting ``almost'' positional or symbolic behavior. Based on this idea, in the next section we define \emph{positional and symbolic scores}.

  We now use these norms in a result that intuitively states the following: If the head's behavior on $\bar{x}$ is close to both positional and symbolic, then the attention weights form an approximately constant sequence. This is an instance of the \emph{positional-symbolic duality}: only at the cost of a uniform non-focused attention, we can have both positional and symbolic behavior.

\begin{theorem}[The positional-symbolic exclusion principle]\label{theo:exclusion_principle}
Let $\Hcal$ be an arbitrary attention head with the logit function $L$. Let $\bar x=(x_1,...,x_{n})$ be an input and let $\lambda=(\lambda_1,...,\lambda_{n-1})$ be the sequence of logits on this input, excluding $L(x_n,n,x_n,n)$; namely,
$
\lambda_j = L(x_{n},n,x_j,j).
$ 
Then, denoting by $\mu$ the average value of $\lambda_j$'s, we have
$$
{\rm Var}(\lambda)=\frac{1}{n-1}\sum_j(\lambda_j-\mu)^2\le \frac{ \|\delta^{\rm pos}_{L,\bar x}\|_2^2 + \|\delta^{\rm sym}_{L,\bar x}\|_2^2}{(n-1)!\cdot (n-1)}
$$
\end{theorem}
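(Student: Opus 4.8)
The plan is to reduce everything to the $(n-1)\times(n-1)$ matrix of logits and exploit a short ``telescoping path'' through a single off-diagonal entry. Write $m = n-1$ and abbreviate $L_{a,b} = L(x_n,n,x_a,b)$, so that the sequence in question is the diagonal $\lambda_j = L_{j,j}$ and $\mu$ is its mean.

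First I would evaluate the two squared norms appearing on the right-hand side. Fixing a coordinate $j\in[m]$ and letting $\pi$ range over $S_m$, the value $\pi(j)$ equals each $a\in[m]$ exactly $(m-1)!$ times; hence
\begin{align*}
\|\delta^{\rm pos}_{L,\bar x}\|_2^2 &= \sum_{\pi\in S_m}\sum_{j=1}^m (L_{\pi(j),j}-L_{j,j})^2 = (m-1)!\sum_{j=1}^m\sum_{a=1}^m (L_{a,j}-L_{j,j})^2,\\
\|\delta^{\rm sym}_{L,\bar x}\|_2^2 &= \sum_{\pi\in S_m}\sum_{j=1}^m (L_{j,\pi(j)}-L_{j,j})^2 = (m-1)!\sum_{j=1}^m\sum_{b=1}^m (L_{j,b}-L_{j,j})^2.
\end{align*}
Dividing by $m!\cdot m=(m-1)!\cdot m^2$ turns the target right-hand side into $\tfrac{1}{m^2}$ times the sum of a ``column-deviation'' term $\sum_{j,a}(L_{a,j}-L_{j,j})^2$ and a ``row-deviation'' term $\sum_{j,b}(L_{j,b}-L_{j,j})^2$.

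Next I would rewrite the left-hand side as a sum over pairs, using the elementary identity $\Var(\lambda)=\tfrac{1}{2m^2}\sum_{j,k}(\lambda_j-\lambda_k)^2$, which reduces the goal to bounding $\sum_{j,k}(L_{j,j}-L_{k,k})^2$. The key step, and really the only idea in the argument, is to connect the two diagonal entries $L_{j,j}$ and $L_{k,k}$ through the off-diagonal entry $L_{k,j}$, whose row is $k$ and whose column is $j$:
\[
L_{j,j}-L_{k,k} = (L_{j,j}-L_{k,j}) + (L_{k,j}-L_{k,k}).
\]
Applying $(a+b)^2\le 2a^2+2b^2$ and summing over all $j,k\in[m]$ gives
\[
\sum_{j,k}(L_{j,j}-L_{k,k})^2 \le 2\sum_{j,k}(L_{k,j}-L_{j,j})^2 + 2\sum_{j,k}(L_{k,j}-L_{k,k})^2.
\]
Here the first double sum is exactly the column-deviation term (summation over the row index $k$ at fixed column $j$, with diagonal reference $L_{j,j}$) and the second is exactly the row-deviation term (summation over the column index $j$ at fixed row $k$, with diagonal reference $L_{k,k}$). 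Substituting back and combining with the computation of the norms yields $\Var(\lambda)\le \tfrac{1}{m^2}(\text{col}+\text{row})$, which is precisely the claimed bound; in fact the two sides agree exactly after the counting, so no slack is lost beyond the single use of $(a+b)^2\le 2a^2+2b^2$.

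Since the argument is entirely elementary once the decomposition is in hand, I do not expect a genuine obstacle. The one point requiring care is the bookkeeping that matches the two right-hand double sums with the column- and row-deviation terms: choosing $L_{k,j}$ rather than $L_{j,k}$ as the intermediate entry is exactly what makes the first bracket a pure positional (column) deviation and the second a pure symbolic (row) deviation. I would also double-check the permutation-counting factor $(m-1)!$, as this is what makes the final constant equal to $1$ and the bound tight.
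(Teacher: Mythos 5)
Your proof is correct and is essentially the paper's argument in combinatorial rather than probabilistic dress: the paper draws $j$ and $\pi$ uniformly at random and uses ${\rm Var}(X)=\mathbb{E}(X-Y)^2/2$, while you sum over all pairs via ${\rm Var}(\lambda)=\frac{1}{2m^2}\sum_{j,k}(\lambda_j-\lambda_k)^2$, but the key step is identical in both --- passing from $L_{j,j}$ to $L_{k,k}$ through the off-diagonal entry $L_{k,j}$ and applying $(a+b)^2\le 2a^2+2b^2$. Your explicit $(m-1)!$ permutation count correctly reproduces what the paper obtains from the uniformity of $\pi(j)$, so the constants match exactly.
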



\subsection{Positional and Symbolic Scores and their application to realistic models}

In this section, we introduce scores to measure the extent to which each head in a model behaves positionally or symbolically, and apply them to inspect the \texttt{google/gemma-2-2b-it} model (similar analyses on four other models are presented in the Appendix \ref{sec:binding_task}).
Our evaluation is grounded in the \emph{binding task}, which tests whether the model can correctly associate entities with their attributes \citep{feng2023language}. Specifically, we construct inputs consisting of $n=256$ entity--attribute pairs, where entities are proper names and attributes are colors (e.g., \emph{Alice likes the color Red; Bob likes the color Blue}). At the end of the sequence, a query probes the attribute of one entity (e.g., \emph{What color does Alice like the most?}).  
We choose this task for two reasons. First, it can intuitively be approached by a combination of positional and symbolic mechanisms. Second, we have control over the location at which a given entity-attribute pair appears in the prompt. This allows for a clearer comparison of the model behavior's when the pair location ranges from the beginning to the end of the prompt.  


\paragraph{From Logits to Attention weights.} Given an arbitrary head $\Hcal$ with logits function $L$ and an input $\bar{x}$, we define the logits produced when querying the final token $x_n$ as
$
L(\overline{x}) 
= \bigl( L(x_n, n, x_j, j) : j \leq n \bigr).$
Then, we define the corresponding attention weights as $D(\overline{x}) = \softmax(L(\overline{x}))
$.

\begin{figure}[H]
    \centering
    \includegraphics[width=1\textwidth, trim=0.4cm 0 0.3cm 0, clip]{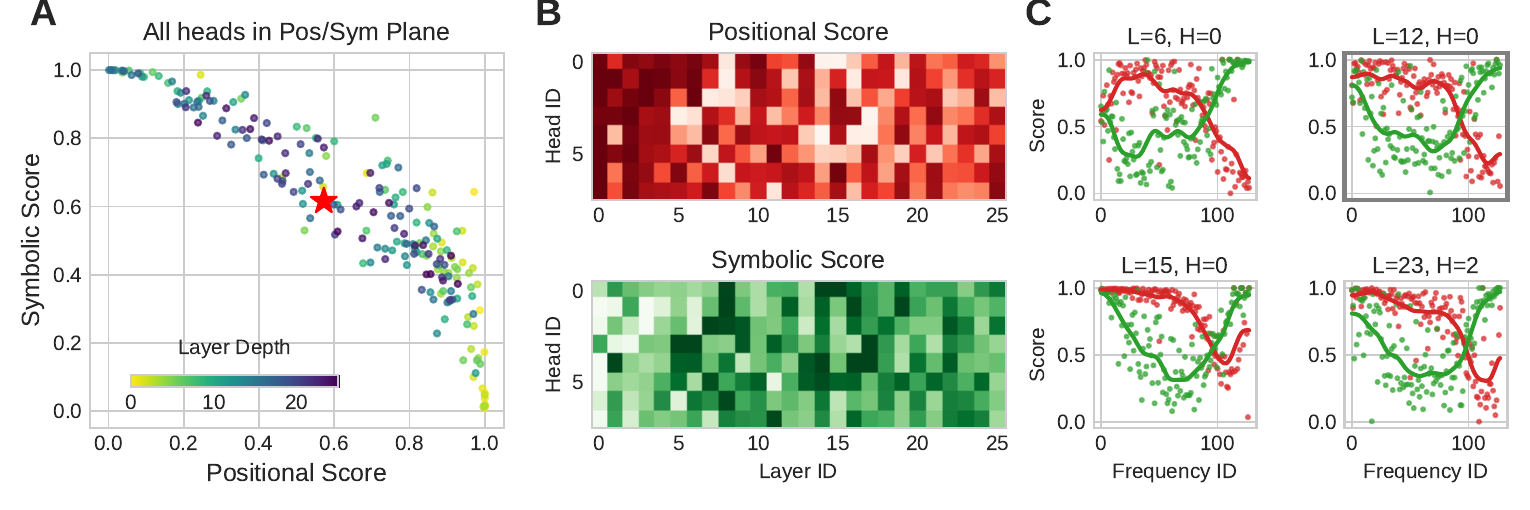}
    \includegraphics[width=\textwidth]{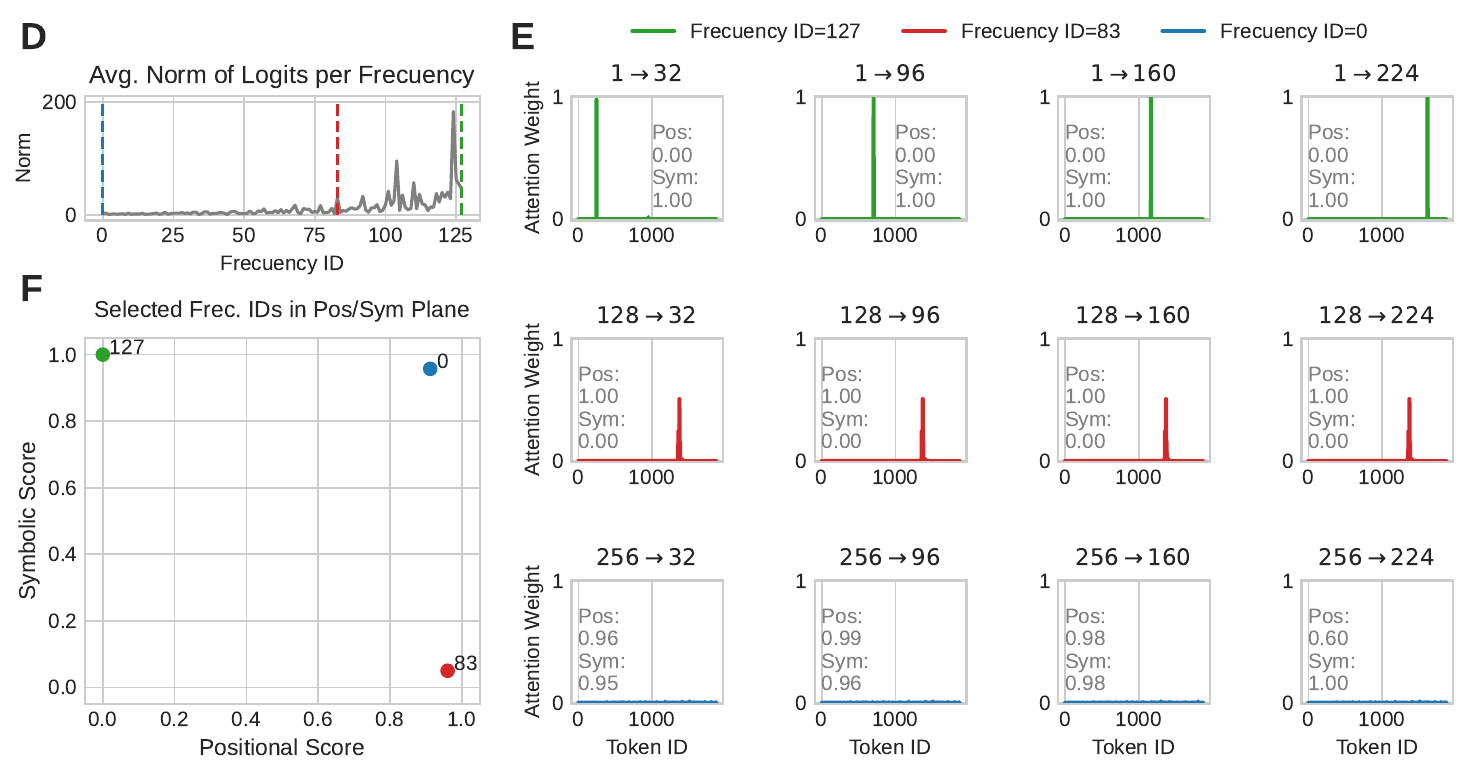}
    \caption{\textbf{Global and local analysis of attention head behavior.} \textbf{A.} Each head in the positional–symbolic plane. \textbf{B.} Heatmaps of positional and symbolic scores for each head across all layers. \textbf{C.} For the same heads, we plot their positional and symbolic scores as a function of RoPE frequencies. By convention, lower frequency IDs correspond to higher angular frequencies, and conversely, higher frequency IDs correspond to lower angular frequencies. \textbf{D.} norms of the logits at each frequency for head (12:0). \textbf{E.} Attention weight patterns as a function of permutations. Top rows (\textcolor{tabgreen}{\textbf{green}} traces): Behavior of a symbolic head whose attention weight mass follows the permutations. Middle rows (\textcolor{tabred}{\textbf{red}} traces): Behavior of a positional head whose attention weight mass is invariant to the permutations. Bottom rows (\textcolor{tabblue}{\textbf{blue}} traces): Behavior of a (mix) attention head with both high symbolic and positional scores, displaying a uniform mass with low attention weight scores. Note that symbolic, positional, and mix head behavior ar associated with low, relatively large, and the largest frequencies, respectively. \textbf{F.} Location on the positional-symbolic plane of head (12:0) as a function of the selected frequency.} 
    \label{fig:global_pos_sym_top}

\end{figure}

\paragraph{Computation of Positional and Symbolic Scores.} We partition the sequence $\overline{x}$ into contiguous $m$ blocks and compute the sequence $d=(d_1,...,d_m)$, where $d_i$ is the average attention weight of $D(\overline{x})$ over the block $i$. To analyze how attention behaves under input permutations, we focus on simple block swaps (e.g., exchanging block $i$ with block $j$). For each such permutation, we represent the corresponding block averages before and after the swap into two-dimensional vectors $v_{ij} = (d_i, d_j)$ (before the swap) and $v'_{ij} = (d'_i,d'_j)$ (after the swap, but on the same block locations). Using cosine similarity on the two-dimensional vectors $v$ and $v'$, we define two scores: the positional score $s_\textsc{pos}(\bar{x})$, which measures the stability of block averages under permutations (i.e, we compare each $v'_{ij}$ against $v_{ij}$), and the symbolic score $s_\textsc{sym}(\bar{x})$, which measures whether block averages move consistently with the permutation (i.e. we compare each $v'_{ij}$ against $v_{ji}$). The full details of how the metric is computed can be found in Appendix \ref{app:metrics_definition}.

\paragraph{Visualizing heads in the Positional-Symbolic Plane.} The pair of scores $\displaystyle{PS(H,\bar{x}):=(s_\textsc{pos}(\bar{x}),s_\textsc{sym}(\bar{x}))}$ defines a point in what we call the \emph{positional-symbolic plane}. In order to visualize the behavior of an entire model on a given task, we can fix a prompt $\bar{\sigma}=(\sigma_1,\dots,\sigma_n)$ and compute the scores for all heads in the model based on their corresponding inputs $\bar{x}=(y^{\ell-1}_1,\dots,y^{\ell-1}_n)$, with $\ell$ defining the layer number. 
This allows us to reveal  the behavior of the entire model and visualize its \emph{behavior's profile} in the \emph{positional-symbolic plane}. 

Figure \ref{fig:global_pos_sym_top} shows this profile for the entire \texttt{gemma-2-2b-it} model on the binding task. Each dot on Figure \ref{fig:global_pos_sym_top}A represents the location of an attention head on the positional-symbolic plane (with color coding layer depth). We observe that early layer heads are associated to high positional scores, whereas late layer heads tend to be more symbolic.
Figure \ref{fig:global_pos_sym_top}B, illustrates an important feature of our work. Compared to previous studies \citep{barbero2024round}, we now can depict a full model attention head behavior snapshot, for all heads and layers. This snapshot provides a strong empirical support for our theoretical positional-symbolic exclusion principle. Indeed, the scores in both heatmaps are negatively correlated. Furthermore, Figure \ref{fig:global_pos_sym_bottom} shows the predominance of symbolic  head behavior when querying block 1, and how the positional and symbolic scores gradually revert when the queried blocks gradually move from block 1 to 256.

\paragraph{Positional and Symbolic Scores in RoPE.} We now take a closer look into how attention heads use the different RoPE frequencies. We do this by first decomposing a given head $H$ into $m$ distinct \emph{projection heads}, each of which can be seen as an individual 2-dimensional head with a single associated frequency (see Appendix \ref{app:metrics_definition} for details of logits per frequency). By computing the positional and symbolic scores for each projected head, we can visualize how the behavior of the combined head $H$ depends on each of the frequencies in RoPE. As can be seen in Figure \ref{fig:global_pos_sym_top}C, there is a sharp correspondence between attention head behavior type (i.e., symbolic or positional) and frequency, with relatively high frequencies associated to positional behavior and lower frequencies to  symbolic behavior, mirroring the results observed in some specific heads by \cite{barbero2024round}, but now generalized to all heads and layers (see Figure \ref{fig:all_heads_gemma}). Our method allows to reveal a new empirical observation. We find that for the highest frequencies, some heads display large positional \textbf{and} symbolic scores. According to Theorem \ref{theo:exclusion_principle}, these should correspond to heads exhibiting a uniform attention pattern. 

We then took an even deeper look at the first head from layer 12 (head 12:0; see top right of Figure \ref{fig:global_pos_sym_top}C). Figure \ref{fig:global_pos_sym_top}E depicts the behavior of this head for three different frequencies exhibiting symbolic (\textcolor{tabgreen}{\textbf{green}} traces), positional (\textcolor{tabred}{\textbf{red}} traces) and both (\textcolor{tabblue}{\textbf{blue}} traces) behaviors, respectively. As expected from our definitions, symbolic behavior is reflected by changes in the attention weight mass that moves with the permutation, whereas the attention weight profile remains unchanged for the positional behavior. One can also clearly see a uniform attention pattern for the frequencies where the head displays both behaviors.
Finally, head (12:0) also allows us to understand the underlying reasons for the location of heads on the positional-symbolic plane, as this head exhibits both high symbolic and positional scores (see red star on Figure \ref{fig:global_pos_sym_top}A ). For this head, we analyzed the norms of the key vectors at each frequency (Figure \ref{fig:global_pos_sym_top}D), and noted that most of the norm mass is concentrated on the smallest frequencies, which therefore contribute more to the global head's behavior, and explains a relative high symbolic score. Additionally, some mass is distributed over a large range of intermediate frequencies, in turn explaining a relatively high positional score.

	\section{Positional and symbolic canonical tasks}\label{sec:toy_models}
	In this section\footnote{All proofs for this section can be found in Appendix~\ref{app:section 4}.}, we aim at deepening our theoretical understanding of positional and symbolic heads. In particular, we introduce non-trivial tasks that can be solved by pure positional (symbolic) heads using a 1--layer attention-only transformer. Our goal is to show how simple models can give us important clues with respect to the fundamental tension between tasks requiring the extraction of either positional or symbolic information from the input.

\paragraph{Index Task.}\label{def:pos_task}We now introduce a task intended to be intrinsically positional. Consider an input sequence $\spos = (\sigma_0, \sigma_1, \ldots, \sigma_{n-1}, j)$, where each \(\sigma_i\) is a symbol from a finite vocabulary $\Sigma$, and \(j\) is an integer index in the range \(0 \leq j < n\). The goal of the task is to output the symbol at position $j$ (determined by the last token in $\spos$), $\sigma_{j}$. Thus, to solve this task, one needs a transformer that computes the function $\fpos$ defined by $\fpos((\sigma_0, \sigma_1, \ldots, \sigma_{n-1}, j)) = \sigma_{j}$. Intuitively, any mechanism allowing to solve this task should be highly sensitive to the position being asked for, while it can ignore the actual symbols.
We now proceed to prove that $\fpos$ can never be solved by a 1-layer  decoder-only transformer $\toymodel$  with a single head $H$ that acts symbolically.

\begin{theorem}\label{theo:positional_task}
    For an arbitrary attention head $\Hcal$, if $\Hcal$ acts symbolically on a non constant input $\bar{x}$ 
    then $\toymodel \not= \fpos.$
\end{theorem}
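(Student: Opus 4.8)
The plan is to argue by contradiction: I would assume $\toymodel = \fpos$ while $\Hcal$ acts symbolically on the non-constant input $\bar x$, which I read as (the embedding of) an index-task instance $\spos=(\sigma_0,\dots,\sigma_{n-1},j)$ whose last token is the query $x_n=\embed(j)$ and whose key symbols are not all equal. The first step is to unpack what ``symbolic at query $i=n$'' really buys us. Since $L(x_n,n,x_p,\pi(p))=L(x_n,n,x_p,p)$ for every $\pi\in S_{n-1}$ and every $p<n$, and since $\pi(p)$ sweeps out all of $[n-1]$ as $\pi$ varies, the logit $L(x_n,n,x_p,\cdot)$ is constant in its position argument. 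Hence I can define a single function $g$ on the set of key vectors with $L(x_n,n,y,p)=g(y)$ for every key $y\in\{x_1,\dots,x_{n-1}\}$ and every position $p$: the logit depends only on the key symbol, never on where it sits.

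The second step is to turn this into permutation-invariance of the head's last output. For $\tau\in S_{n-1}$, let $\bar x^{\tau}$ be the input obtained by permuting the keys while leaving the query $x_n$ fixed. By Step~1 the logit at position $p$ in $\bar x^{\tau}$ is $L(x_n,n,x_{\tau(p)},p)=g(x_{\tau(p)})$, so $\Hcal$ remains symbolic on $\bar x^{\tau}$ and the normalizer is unchanged (the query's fixed self-logit contributes the same constant, and the key terms merely reindex). Consequently the attention vector at the last token, $a_n=\sum_{p<n}\tfrac{\exp(g(x_p))}{Z}\,\val(x_p)$ plus the fixed query self-contribution, is a sum whose summands are indexed by position but depend only on the symbol sitting there; reindexing $p\mapsto\tau(p)$ shows $a_n$ is identical for $\bar x$ and $\bar x^{\tau}$. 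Since $F$, $r$ and $\softmax$ are all deterministic, the transformer produces literally the same output distribution, hence the same answer, on $\spos$ and on its key-permutation $\spos^{\tau}$.

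The final step is a pigeonhole-style use of non-constancy to choose a permutation that changes the correct label. Because the key symbols are not all equal, there is a position $c$ with $\sigma_c\neq\sigma_j$ (if $\sigma_j$ agreed with every symbol the input would be constant). Taking $\tau$ to be the transposition swapping the queried position $j$ with $c$, the required index-task output flips from $\fpos(\spos)=\sigma_j$ to $\fpos(\spos^{\tau})=\sigma_c\neq\sigma_j$, whereas Step~2 forces $\toymodel(\spos)=\toymodel(\spos^{\tau})$. A single model cannot match both values of $\fpos$, contradicting $\toymodel=\fpos$.

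I expect the main obstacle to lie in Steps~1--2: making fully precise that ``symbolic on $\bar x$'' collapses the logit to one position-independent function $g$ of the key symbol, and that this collapse is \emph{inherited} by every key-permutation $\bar x^{\tau}$ rather than merely assumed there; once that is nailed down, the invariance of $a_n$ is a clean reindexing and the contradiction is immediate. A minor point I would address explicitly is tie-breaking in the final $\argmax$, but since $\spos$ and $\spos^{\tau}$ induce the exact same distribution $\mu$, they induce the same model answer under any fixed tie-breaking rule, so this causes no trouble.
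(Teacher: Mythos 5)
Your proposal is correct and follows essentially the same route as the paper: the paper's Lemma~\ref{lemma:monotonic_logits} is exactly your Step~2 (symbolic behavior makes the attention weights reindex under key permutations, so $a_n$ and hence the model output are permutation-invariant), and the paper then concludes, as you do in Step~3, by choosing a permutation of the non-constant key sequence that changes the value of $\fpos$. Your added care about the logit collapsing to a position-independent function $g$ and about tie-breaking only makes explicit what the paper leaves implicit.
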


Note that Theorem~\ref{theo:positional_task}  does not assume any specific form in which the positional information is incorporated into the logit function. 
On the other hand, our next result shows that $\fpos$ can be solved by a simple and purely positional attention head.

\begin{theorem}\label{lemma:positional_task} For each $n>0$ there is a purely positional attention head $\hpos$ that uses RoPE with a single angle and no value or activation function, such that $\toymodel = \fpos$ for every $\spos$ of length $n$.
\end{theorem}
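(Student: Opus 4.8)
The plan is to build $\hpos$ so that, when querying from the final (index) token, its attention mass collapses onto the single position holding $\sigma_j$, and then to read that symbol off directly. I would work with the one-hot embedding $\embed=\onehot$ (so $d=|\Sigma|$), take the value map to be the identity and the activation to be $F(a,x)=a$, so that the head's output at the last position is exactly the attention-weighted average of the input embeddings, $a_{n+1}=\sum_{p} w_p\, e_{\sigma_p}$. If the weight $w_{p^\star}$ on the target position $p^\star$ (where $\sigma_j$ sits) exceeds $1/2$, then the coordinate of $a_{n+1}$ indexed by $\sigma_j$ is $>1/2$ while every other coordinate is $\le 1-w_{p^\star}<1/2$; taking $r=\mathrm{Id}$ then forces $\arg\max_\sigma \softmax(a_{n+1})_\sigma=\sigma_j=\fpos(\spos)$. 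Thus the whole problem reduces to designing a \emph{positional} RoPE head whose attention concentrates, to weight $>1/2$, on the position of $\sigma_j$.

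For the RoPE construction I would use a single rotation angle $\theta$, acting on one $2$-dimensional plane into which $Q$ and $K$ project, and set the key map to be \emph{constant}: $Kx\equiv v_0$ for a fixed nonzero $v_0$ (e.g.\ the rank-one map sending every one-hot vector to $v_0$). By \eqref{eq_ropelike} and unitarity of $U$ this makes the logit $L(x_{n+1},n+1,x_p,p)=\langle v_0,\,U^{\,n+1-p}Q x_{n+1}\rangle$ independent of the key symbol $x_p$, depending only on its position $p$; hence the head is purely positional in the sense of Definition~\ref{def:act_pos_sym}. Writing the query image $Qx_{n+1}=Q\,\embed(j)$ as a planar vector of fixed length $s$, the logit takes the form $\lambda_p=s\|v_0\|\cos\!\big((p-p^\star)\theta\big)$ once its phase is chosen correctly; since the index tokens $0,\dots,n-1$ are distinct one-hot vectors, I can define $Q$ freely on each of them so that the target $p^\star=j+1$ becomes the peak of this cosine.

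The main obstacle is \emph{uniqueness} of the maximizer: because cosine is even, positions symmetric about $p^\star$ carry equal logits, so I must prevent any competitor from matching the peak. Choosing $\theta=\pi/n$ keeps $|(p-p^\star)\theta|\le\pi$ over the whole range $1\le p\le n+1$, so $\cos((p-p^\star)\theta)<1$ strictly for every $p\neq p^\star$ and $p^\star$ is the \emph{unique} argmax (the reflected positions merely tie with one another, below the peak). The gap to the runner-up is then at least $s\|v_0\|(1-\cos\theta)>0$, a fixed positive quantity depending only on $n$. I would finally pick $s$ large enough that $s\|v_0\|(1-\cos\theta)>\ln n$, which by the softmax estimate $w_{p^\star}\ge 1/(1+n e^{-g})$ (with $g$ the logit gap) guarantees $w_{p^\star}>1/2$ uniformly over all inputs of length $n$; the bound is input-independent precisely because $n,\theta,v_0,s$ are fixed. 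Combining this concentration with the decoding step of the first paragraph yields $\toymodel(\spos)=\fpos(\spos)$ for every such $\spos$. A last routine check is that attending to the query token itself (position $n+1$) is harmless: its logit $s\|v_0\|\cos((n-j)\theta)$ is strictly below the peak for $j<n$, and its small residual weight cannot overturn the $>1/2$ margin.
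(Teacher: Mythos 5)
Your construction is essentially the paper's: a constant key map (every token's key is a fixed planar vector $v_0$), a query map that sends the one-hot index token $j$ to a rotation $R_\theta^{\,j-n}$ of that vector, the angle $\theta=\pi/n$ so that the cosine logit $\cos(\theta(j-i))$ has a unique peak at the target position, identity value function, and a final argmax read-off. The one place you genuinely go beyond the paper is the decoding step: the paper concludes from ``the single largest attention weight sits at position $j$'' that the argmax coordinate of $a_n=\sum_i w_i\,e_{\sigma_i}$ is $\sigma_j$, which is only safe if no other symbol's \emph{aggregated} weight (summed over repeated occurrences) overtakes the peak --- with the paper's small logits of magnitude $\|v\|^2\cos(\cdot)$ this is not automatic. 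Your extra scaling argument (choose $s$ so the logit gap exceeds $\ln n$, forcing $w_{p^\star}>1/2$ and hence every competing coordinate below $1/2$) closes exactly this issue and makes the read-off robust to arbitrary symbol repetitions, at the cost of a norm constraint on $Q$ that the paper's fixed $v=(1/2,1/2)^\top$ does not satisfy. So: same route, but your version is the more airtight one.
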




Even though solving $\fpos$ seems rather simple, Theorems~\ref{theo:positional_task} and~\ref{lemma:positional_task}  characterize it as a purely positional task. Interestingly, not every angle of RoPE is appropriate for the construction of $\hpos$. In fact, if the angle is greater than $2\pi/n$, attention weights are not maximized in just one position of the input, which could lead to confusion on the model's output. Later we will see how this effect emerges also empirically, having a direct impact on a model's capacity for solving tasks like $\fpos$.

\paragraph{Information retrieval Task.}\label{def:sym_task} Here, we consider inputs of the form $\ssym = (\sigma_1\#i_1, \sigma_2\#i_2, \ldots, \sigma_{n-1}\#i_{n-1}, \sigma_j\#)$ where $i_k$ is an integer for every $k<n$. The goal for this task is to retrieve the correct symbol $\sigma_{j}\#i_{j}$, associated to the last symbol $\sigma_j$ (the query). We assume for this task that $\sigma_j$ appears just once in the sequence before the query. Formally, the solution function $\fsym$ is defined as $\fsym((\sigma_0\#i_0, \sigma_1\#i_1, \ldots, \sigma_{n-1}\#i_{n-1}, \sigma_j\#)) = \sigma_j\#i_j$. 
Clearly, this task only depends on the symbols and positional information is irrelevant to the answer. Similar to the previous task, we can show that any simple head that acts positionally can not compute  $\fsym$. Here, by $H$ being simple we mean that $\val$ is the identity, the activation function $F$ is just a projection, and that $\embed=\onehot$.  

\begin{theorem}\label{theo:symbolic_task}
    Let $H$ be any simple attention head.  If $H$ acts positionally on an input $\bar{x}$, then $\toymodeloh\not= \fsym$.
\end{theorem}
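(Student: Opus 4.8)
The plan is to argue by contradiction, mirroring (and dualizing) the permutation argument that underlies Theorem~\ref{theo:positional_task}. Assume $\Hcal$ is a simple head that acts positionally on the input $\bar x=(x_1,\dots,x_n)$ at the last query $i=n$, and suppose toward a contradiction that $\toymodeloh=\fsym$. First I would unfold the three ``simple'' hypotheses. Since $\val=\mathrm{id}$ and $\embed=\onehot$, the vector written at the final position collapses to $a_n=\sum_{j\le n} w_j^n\, e_{\tau_j}$, where $\tau_j$ is the token occupying position $j$ and $e_{\tau_j}$ its one-hot vector; because $F$ is just a projection, the model's answer on $\ssym$ is a fixed function of $a_n$ (and the query $x_n$) alone. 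The point of positionality is its effect on the weights: since $L(x_n,n,x_{\pi(j)},j)=L(x_n,n,x_j,j)$ for every $\pi\in S_{n-1}$, the logit at position $j$, and hence $w_j^n$, is \emph{independent of which key token sits at position $j$}. The weights are thus pinned by position, blind to content, and $a_n$ is nothing but this fixed positional weight profile spread over whatever tokens happen to occupy the positions.

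Next I would record the two competing invariances. On the one hand, permuting the first $n-1$ tokens of $\bar x$ by any $\pi$ leaves the entire weight sequence $(w_1^n,\dots,w_n^n)$ unchanged — each permuted input still draws its keys from the same value-set, on which positionality makes the logit constant — so the same $\bar x$ remains positional after any such permutation. On the other hand, $\fsym$ is likewise invariant under these permutations: the unique token carrying the query symbol $\sigma_j$ is merely relocated, so the correct answer stays $\tau^{\ast}=\sigma_j\#i_j$. The tension I want to exploit is that, although both the weights and the target answer are permutation-invariant, the network's decision is driven by $a_n=\sum_{j<n} w_j^n e_{\tau_{\pi(j)}}+w_n^n e_{x_n}$, which favours the present tokens according to their \emph{positions} rather than according to whether their symbol matches the query.

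To convert this into a contradiction, I would choose the key-permutation $\pi$ that relocates the query-matching token $\tau^{\ast}$ to a position whose positional weight is not the largest, while placing a distractor at the position the fixed weighting favours most. Since the weights are unchanged and content-blind, the network's vote (the weighted one-hot sum $a_n$) is then dominated by the distractor coordinate rather than the $\tau^{\ast}$ coordinate, even though the correct answer is still $\tau^{\ast}$; hence $\toymodeloh$ outputs a token other than $\tau^{\ast}$ on this legitimate retrieval instance, contradicting $\toymodeloh=\fsym$. The degenerate case in which all positional weights coincide must be treated separately: there $a_n$ assigns equal mass to every distinct present token and the decision is forced into a tie, so by the convention that ties cannot solve a function the model again fails.

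The main obstacle — and the reason the extra hypotheses (one-hot embedding, $F$ a projection) are needed here but not in Theorem~\ref{theo:positional_task} — is transferring content-blindness from the level of the attention weights to the level of the final decision. In the symbolic/positional-task case the argument is immediate because symbolic weights make $a_n$ an outright function of the key \emph{multiset}, yielding an exact collision of outputs; here $a_n$ is genuinely content-sensitive, so I must rule out that the (arbitrary) readout recovers the answer from the positionally-reweighted one-hot vote. This is precisely where one-hot embeddings and the projection form of $F$ do the work: they force the representation at the last token to be a positionally-weighted vote over the tokens actually present, so that mislocating the query symbol provably demotes the correct token at the output. Pinning down this final ``demotion'' step cleanly, together with the uniform-weight tie case, is the crux of the proof.
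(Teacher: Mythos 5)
Your proposal is correct and follows essentially the same route as the paper's proof: both argue by contradiction, use positionality to pin the attention weights to positions independently of content, use the one-hot embedding and identity value function to read the model's output as the token at the position of maximal weight, and then swap the answer token with a token at a strictly lower-weight position to force a wrong output on an input where $\fsym$ is unchanged. Your explicit treatment of the uniform-weight tie case is a minor addition that the paper handles implicitly via the uniqueness of the maximizer.
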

    We remark that if we allow $F$ to be a general MLP, then one can use it to override the attention mechanism and produce a (rather artificial) counterexample (see Appendix section~\ref{app:counter_example}). We proceed by showing that a purely symbolic attention head $H$ can naturally solve $\fsym$.

\begin{theorem}\label{lemma:symbolic_task}
    For every $n>0$ there exists a purely symbolic attention head $\hsym$, without value or activation function,  such that $\toymodeloh = \fsym$ on every $\ssym$ of size $n$. 
\end{theorem}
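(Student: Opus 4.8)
The plan is to realize $\hsym$ as a NoPE head, so that $U$ is the identity and the logit function $L(x_n,n,x_j,j)=\langle Kx_j,Qx_n\rangle$ carries no dependence on the positional indices. By the observation recorded in the preliminaries, such a head acts symbolically on \emph{every} input, which discharges the ``purely symbolic'' requirement at once, independently of how $Q,K$ are chosen. Since we are also told to take $\val$ the identity and $F(a,x)=a$ (no value or activation function), the entire burden of the construction falls on the query and key matrices, together with the readout $r$, and the task reduces to arranging that the last-token attention concentrates on the unique key that shares the query's symbol.

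First I would fix an injection $g\colon A\to[d]$ of the symbol alphabet $A$ into coordinates, which exists because $d=|\Sigma|\ge|A|$, and write $e_{g(\sigma)}$ for the associated basis vector. I would then define $Q,K$ on the one-hot embeddings by $K\,\onehot(\sigma\#i)=e_{g(\sigma)}$ on every index token, $K\,\onehot(\sigma\#)=0$ on every query token, and $Q\,\onehot(\sigma\#)=M\,e_{g(\sigma)}$ for a constant $M$ to be fixed. For the query token $\sigma_j\#$ at position $n$ this yields, against a key $\sigma_k\#i_k$, the logit $\langle K\,\onehot(\sigma_k\#i_k),\,Q\,\onehot(\sigma_j\#)\rangle=M\,[\sigma_k=\sigma_j]$, while the self-logit at position $n$ is $0$ because $K$ annihilates query tokens. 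As $\sigma_j$ occurs exactly once before the query, precisely one logit equals $M$ and the remaining $n-1$ logits equal $0$.

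The second, quantitative, step computes the output. Since $\val$ is the identity and $F(a,x)=a$, the last output is $u_n=\sum_k w_k^{\,n}\,\onehot(\mathrm{token}_k)$, so the coordinate at the target token $\sigma_j\#i_j$ equals $e^M/(e^M+n-1)$, whereas every other coordinate can pool at most $(n-1)/(e^M+n-1)$ of the mass, even if several off-target positions happen to carry the same token. Choosing $M$ with $e^M>n-1$ (for instance $M=\ln n$) forces the target coordinate to be strictly largest. Taking $r$ to be the identity and using that $\softmax$ preserves the argmax, we conclude $\toymodeloh(\ssym)=\arg\max_\sigma u_n=\sigma_j\#i_j=\fsym(\ssym)$ on every $\ssym$ of length $n$.

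The main obstacle is not the symbol-matching mechanism, which is immediate once NoPE is adopted, but the control of attention mass accumulated on repeated off-target tokens: a crude ``make the matching logit dominate'' argument is insufficient, since many non-matching positions could share one symbol and pool their weight into a single rival coordinate. The inequality $e^M>n-1$ is exactly what defeats this, and it is the sole place where the length $n$ enters the choice of parameters. A minor point to verify en route is that the query token's own self-attention, which has logit $0$ and feeds the $\sigma_j\#$ coordinate rather than the target, cannot produce a tie; this is covered by the same bound.
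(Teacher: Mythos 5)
Your proof is correct, and it follows the same overall strategy as the paper's: take a NoPE head (so symbolic behavior on every input is immediate from the remark in the preliminaries), send every indexed token $\sigma\#i$ to a key vector depending only on $\sigma$, annihilate the query tokens $\sigma\#$ under $K$, and align the query vector of $\sigma_j\#$ with the key of $\sigma_j$ so that the unique matching position receives the largest logit. The differences are in the realization and in one quantitative step. The paper packs the key vectors $v_{\sigma_i}$ onto the unit circle in $\mathbb{R}^2$ at angles $2\pi i/m$, so the head is genuinely two-dimensional but the logit gap between the match (value $1$) and the nearest non-match (value $\cos(2\pi/m)$) shrinks as the alphabet grows; you instead use orthogonal coordinates $e_{g(\sigma)}$, paying with a key space of dimension $|A|$ but getting an exact $0$ on every non-match. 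More substantively, you introduce the scale $M$ with $e^M>n-1$ precisely to rule out the scenario where many off-target positions share a symbol and pool their attention mass into a single rival coordinate of $a_n=\sum_k w_k^n\,\onehot(\mathrm{token}_k)$. The paper's concluding lemma passes from ``the logits are maximized at $j$'' to ``the argmax coordinate of $a_n$ is the target token'' by appeal to the argument of its Lemma on maximum attention, which does not address this pooling; your inequality $e^M/(e^M+n-1)>(n-1)/(e^M+n-1)$ closes that gap explicitly, so on this point your write-up is actually tighter than the paper's.
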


\paragraph{Experiments on positional versus symbolic heads.} Our results show that a simple 1-head transformer can perfectly solve $\fpos$ and $\fsym$. Namely, the 1-RoPE $\hpos$ for $\fpos$ and the NoPE $\hsym$ for $\fsym$. In order to test if such heads can be learned during training and at the same time explore how the choice of the angle $\theta$ affects accuracy, we trained a simple 1-RoPE architecture for different values of $\theta$ on both tasks.  While we observe that a 1-RoPE converges to a proper solution for the appropriate angles (Figure \ref{fig:positional_symbolic_per_angle}A), we also find that the chosen RoPE angle has a direct impact on the learning abilities of the model. Indeed, as expected from our theoretical results, small frequencies (large ID) are not suitable for solving the Index (positional) task, while larger ones are inadequate for the Information Retrieval (symbolic) one. Note the striking resemblance of these accuracy curves with the positional and symbolic scores curves from Figure \ref{fig:global_pos_sym_top}C.  

\begin{figure*}[h!]
\centering
    \includegraphics[width=1\textwidth, trim=0 0.8cm 0 0, clip]{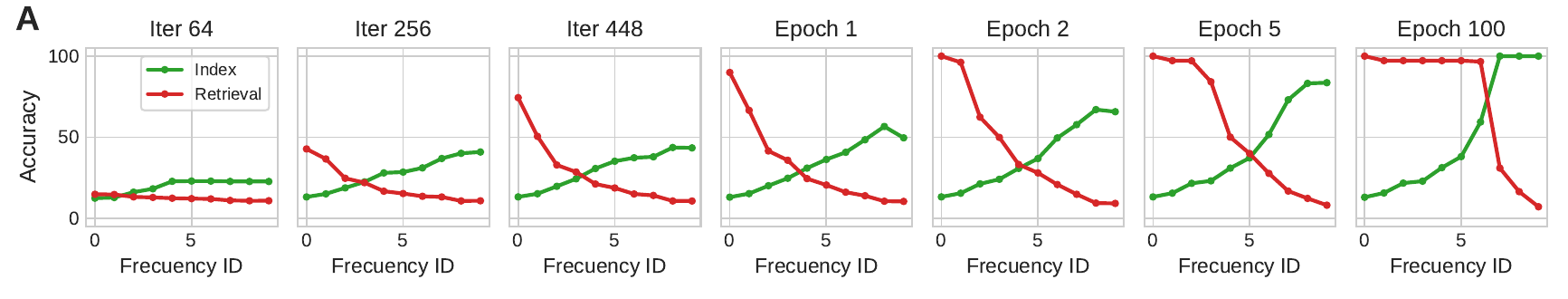}
    \includegraphics[width=1\textwidth]{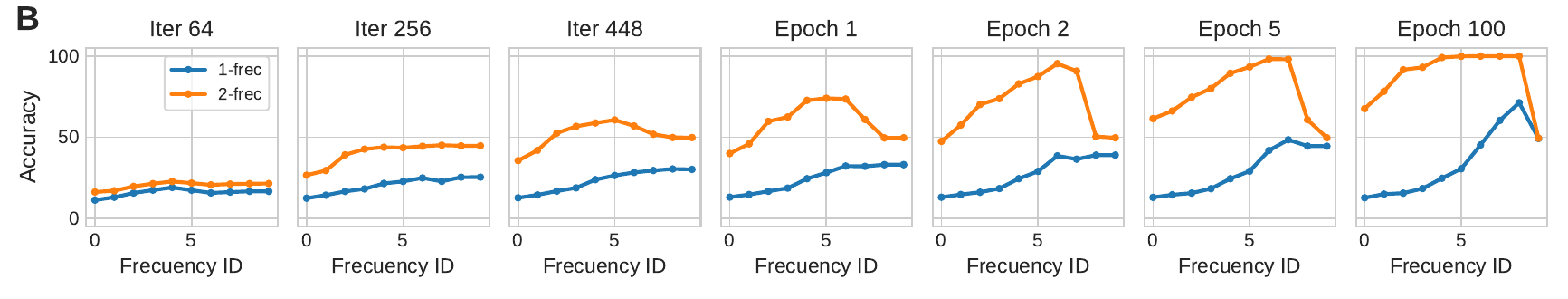}
    \includegraphics[width=0.9\textwidth]{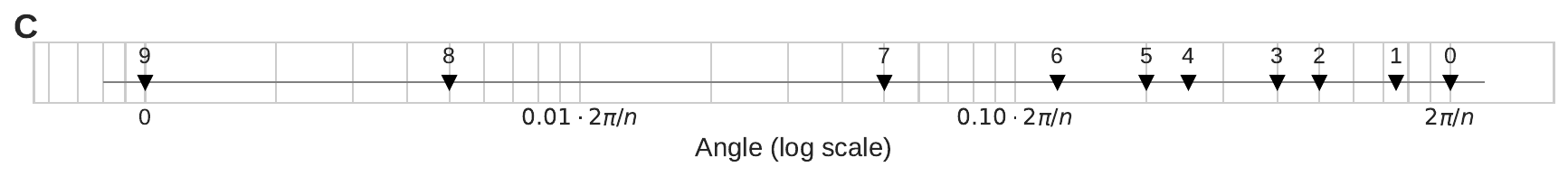}
    \caption{
        \textbf{Performance on the canonical tasks across training iterations and epochs.} 
        \textbf{A.} Tension between Index (positional) and Retrieval (symbolic) tasks. 
        \textbf{B.} Accuracy in the partial induction task for 1-frequency and 2-frequency models. 
        \textbf{C.} Frequency IDs mapped to the log-scaled angle axis. 
    }
    \label{fig:positional_symbolic_per_angle}
\end{figure*}

\paragraph{Partial induction Task.} \label{def:mix_task}Finally, we present a partial induction task\footnote{Named as such in allusion to induction heads operations~\citep{olsson2022context}.}
with inputs of the form $\smix= (\sigma_1\#i_1, \sigma_2\#i_2, \ldots, \sigma_{n-1}\#i_{n-1}, \sigma_j\#)$ where $i_k$ is an integer for every $k<n$. We assume for this task that $\sigma_j$ appears at least two times in the sequence, in positions $j_1$ and $j_2$, and that  integers $i_{j_1}$ and $i_{j_2}$ are different. The goal for this task is to retrieve the correct symbol-integer pair associated with the \textbf{last occurrence} of $\sigma_j$ (notice that the case with a single occurrence subsumes the information retrieval task). We denote by $\fmix$ the corresponding solution function. The following result (which follows directly from our previous results) show that if a 1-layer transformer computes $\fmix$, then it cannot behave in a purely positional or purely symbolical way.

\begin{corollary}\label{cor:mixed_task}
 If a simple model $\toymodeloh$ with head $H$ computes $\fmix$, then for every input ${\smix}$, $H$ does not behave positionally (or symbolically) on $\smix$.
 \end{corollary}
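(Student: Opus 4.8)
The plan is to treat the two behaviors separately and, in each case, assume $\toymodeloh=\fmix$ together with the forbidden behavior on some valid input $\smix$ (one in which the query symbol $\sigma_j$ occurs at two positions $j_1<j_2$ with distinct tags $i_{j_1}\neq i_{j_2}$), and then produce a sibling input on which the head is forced to return the same answer while $\fmix$ demands a different one. Here ``does not behave positionally (or symbolically)'' is read as ``neither'', so both implications must be established. The guiding principle is that $\fmix$ secretly bundles a symbolic sub-problem (locate the occurrences of $\sigma_j$) with a positional one (retain only the last of them), so a head that collapses to either pure behavior must fail one half. This is exactly what Theorems~\ref{theo:symbolic_task} and~\ref{theo:positional_task} isolate for $\fsym$ and $\fpos$, and I would present the corollary as the $\fmix$-incarnation of those two impossibilities.

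For the symbolic direction I would argue directly. Let $\smix^{\mathrm{sw}}$ be obtained from $\smix$ by exchanging the two occurrence tokens $\sigma_j\# i_{j_1}$ and $\sigma_j\# i_{j_2}$ (and nothing else). Symbolic behavior at $i=n$ means the logit of a key vector does not depend on the position it is placed at, so passing from $\smix$ to $\smix^{\mathrm{sw}}$ merely transposes the two logits $\lambda_{j_1},\lambda_{j_2}$ and leaves all others intact; hence the softmax weights undergo the same transposition. Since $\val$ is the identity, both the weights and the tokens at $j_1,j_2$ are transposed in $a_n=\sum_k w_k\,x_k$, so their weighted sum, and therefore $a_n$ itself, is unchanged; consequently the model's output is the same on $\smix$ and $\smix^{\mathrm{sw}}$. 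But $\fmix(\smix)=\sigma_j\# i_{j_2}$ while $\fmix(\smix^{\mathrm{sw}})=\sigma_j\# i_{j_1}$, and these differ because $i_{j_1}\neq i_{j_2}$, contradicting $\toymodeloh=\fmix$. This is precisely the mechanism behind Theorem~\ref{theo:positional_task}: a symbolic head cannot make the position-sensitive ``last occurrence'' distinction.

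For the positional direction I would mirror the proof of Theorem~\ref{theo:symbolic_task}. Positional behavior at $i=n$ forces the logit, hence the attention weight, at each position to be independent of the token sitting there; consequently the whole weight sequence $w_1,\dots,w_{n-1}$ is identical across every rearrangement of the token multiset of $\smix$. A head whose weights are pinned to absolute positions cannot track the content-dependent location of $\sigma_j$: taking $\smix$ to contain at least one token whose symbol is not $\sigma_j$ (as in any genuine partial-induction instance), I would exhibit two rearrangements of this fixed multiset whose last-$\sigma_j$ values differ and on which the head is forced to the same verdict. When two positions carry equal weight one gets an exact collision of $a_n$ by swapping the answer token with a distractor across them; when the positional weights are all distinct there is no such automatic collision and one instead runs the readout argument of Theorem~\ref{theo:symbolic_task} verbatim, since the retrieval-of-$\sigma_j$ sub-problem faced here is identical to the one handled there.

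The main obstacle is the positional case. The symbolic case is clean because $\val$ being the identity converts the weight transposition into an exact equality of $a_n$, so the output collides regardless of the readout maps $F$ and $r$. The positional case has no such automatic collision once the weights are pairwise distinct, so one genuinely has to reason about the readout, which is the content I would import from Theorem~\ref{theo:symbolic_task}; one must also keep in mind that ``positional on $\smix$'' only constrains rearrangements of that single input's tokens, so the entire contradiction has to be engineered within that orbit rather than by appealing to inputs of other shapes or lengths.
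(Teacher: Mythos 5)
Your proof is correct and follows essentially the same route as the paper's: the symbolic case is handled by showing the model's output is invariant under swapping the two occurrences of $\sigma_j$ while $\fmix$ is not (the paper cites its Lemma~\ref{lemma:monotonic_logits} for exactly this invariance), and the positional case is deferred to the argument of Theorem~\ref{theo:symbolic_task}, just as the paper does. Your extra case split on coinciding versus distinct positional weights is harmless but unnecessary, since the readout argument of Theorem~\ref{theo:symbolic_task} already yields the contradiction whenever the answer position carries the unique maximal weight, which it must if the model computes $\fmix$.
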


On the positive side, we demonstrate that a single attention head with two RoPE angles suffices.

\begin{theorem}\label{theo:induction_task}
    For every $n>0$ there exists an attention head $\hmix$ with two RoPE angles such that $\toymodeloh = \fmix$ on every $\smix$ of length $n$.
\end{theorem}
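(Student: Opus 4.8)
The plan is to build $\hmix$ by superposing the two constructions we already have: the purely symbolic head $\hsym$ from Theorem~\ref{lemma:symbolic_task} (which corresponds to the angle $\theta_1=0$ block) and a purely positional recency mechanism in the spirit of $\hpos$ from Theorem~\ref{lemma:positional_task} (the angle $\theta_2$ block); the superscript $0,\theta_2$ in $\hmix$ records exactly this choice of two RoPE angles. Concretely, I would split the embedding space into a \emph{symbolic} subspace (rotated by angle $0$, i.e.\ left fixed) and a \emph{positional} subspace (rotated by angle $\theta_2$), and choose $Q,K$ so that the logit function of \eqref{eq_ropelike} decomposes additively as
\[
L(x_n,n,x_j,j)\;=\;\underbrace{C\cdot\mathbf{1}[\sigma_j \text{ equals the query symbol}]}_{L_{\rm sym}}\;+\;\underbrace{g(n-j)}_{L_{\rm pos}},
\]
where $L_{\rm sym}$ is the (scaled) symbolic matcher of $\hsym$ and $g(n-j)=\cos((n-j)\theta_2)$ is a symbol-independent recency term produced by the single angle $\theta_2$.

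For the architecture I would keep the head free of value and activation functions exactly as $\hsym$, and use $\embed=\onehot$. The only genuinely new design constraint is that the positional contribution must be \emph{identical for every token}, so that the angle-$\theta_2$ block is truly positional: this is achieved by forcing the projection of $Kx_j$ onto the positional plane to be a fixed vector independent of $\sigma_j$ (a linear map can do this by making the two relevant rows of $K$ constant across columns). Since $U$ is unitary, $L_{\rm pos}=\langle Kx_j,U^{\,n-j}Qx_n\rangle$ restricted to this plane equals $\cos((n-j)\theta_2)$ up to a fixed phase, which we set to $0$; this makes the additive decomposition of $L$ above legitimate.

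Correctness then rests on two quantitative choices. First, taking $\theta_2\in(0,\pi/(n-1))$ guarantees $(n-j)\theta_2\in(0,\pi)$ for all $j<n$, so $g$ is \emph{strictly} decreasing in the relative distance $n-j$; hence among the positions whose symbol matches $\sigma_j$, the \emph{last} occurrence $j_2$ attains the strictly largest value of $L_{\rm pos}$. Second, since $L_{\rm pos}\in[-1,1]$, choosing the symbolic gain $C>2$ forces every matching position to outscore every non-matching one; combining the two facts, the total logit $L$ is uniquely maximized at $j_2$. Finally, scaling $Q$ by a large factor $\beta=\beta(n)$ (a temperature argument) makes $\softmax(L(\bar x))$ concentrate arbitrarily on $j_2$; with the value map the identity and $\onehot$ embeddings, the attention output then approaches $e_{\sigma_{j_2}\#i_{j_2}}$, so the readout recovers the token $\sigma_j\#i_{j_2}=\fmix(\smix)$. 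Because $n$ is fixed, a finite $\beta$ suffices to make the argmax correct on every $\smix$ of length $n$.

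I expect the main obstacle to be the quantitative balancing in the last paragraph rather than the architecture itself: one must simultaneously keep $\theta_2$ small enough that the cosine recency term stays monotone over the whole window (so that it breaks ties in favour of the most recent match), and make $C$ and then $\beta$ large enough that the symbolic match dominates the positional term and that the softmax is sharp enough for the readout to be exactly correct, all with constants depending only on $n$. A secondary point worth verifying is that the symbol-independence of $L_{\rm pos}$ does not accidentally perturb the symbolic matching in $L_{\rm sym}$; this is what justifies treating the two blocks as independent. Corollary~\ref{cor:mixed_task} shows that no single pure behavior can succeed, which is precisely why both angles are needed.
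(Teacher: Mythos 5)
Your construction is essentially the paper's: the paper also builds $\hmix$ by concatenating a NoPE ($\theta_1=0$) plane carrying the symbolic matcher $v_{\sigma_i}^\top v_{\sigma_j}$ of $\hsym$ with a second plane on which $K$ maps every token to the same constant vector $v$, so that the logit decomposes additively into a match term plus $\cos(\theta_2(n-j))$, and the last occurrence wins by monotonicity of the cosine over the window. One caveat: your idealized $L_{\rm sym}=C\cdot\mathbf{1}[\text{match}]$ is not exactly realizable by inner products in a single $2$-dimensional rotational plane once there are more than two base symbols, so you must fall back on the near-orthogonal unit vectors of $\hsym$, where the match/non-match gap is $1-\cos(2\pi/m)$ rather than $C$; your separation condition then becomes a scaled version of that gap exceeding the positional term's range, which is the mirror image of the paper's choice to instead shrink $\theta_2$ (taking $\theta_2 n < 2\pi/(n|\Sigma|)$) so that the positional term's dynamic range is smaller than the fixed symbolic gap. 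Your final temperature/sharpening step for the readout is in fact slightly more careful than the paper's appeal to its argmax lemma, since with one-hot values the output coordinate of a repeated distractor token aggregates several attention weights, and driving the softmax to concentrate on $j_2$ cleanly rules that out.
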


\paragraph{Experiments for the partial induction task.} Figure~\ref{fig:positional_symbolic_per_angle}B, shows that a toy model with 1-RoPE angle can not solve the partial induction task. On the other hand, a model with angles $\theta_1=0$ and $\theta_2$ (flexible), can solve it when $\theta_2$ is not too big. This is expected from the proof of Theorem~\ref{theo:induction_task} since bigger angles for $\theta_2$ confuse the model's output. Finally, the fact that the 1-RoPE version of this model has a decreasing accuracy as angles get bigger suggests that our version of the partial induction task has a more prominent symbolical difficulty. This could be explained by the fact that solving $\fmix$ requires to detect only two positions of the input (both occurrences of $\sigma_j$), but several symbols need to be distinguished at the same time.

	\section{On the relation between frequencies and accuracy shape}\label{sec:ushape}
\begin{figure*}[h!]
\centering
    \includegraphics[width=0.367\textwidth]{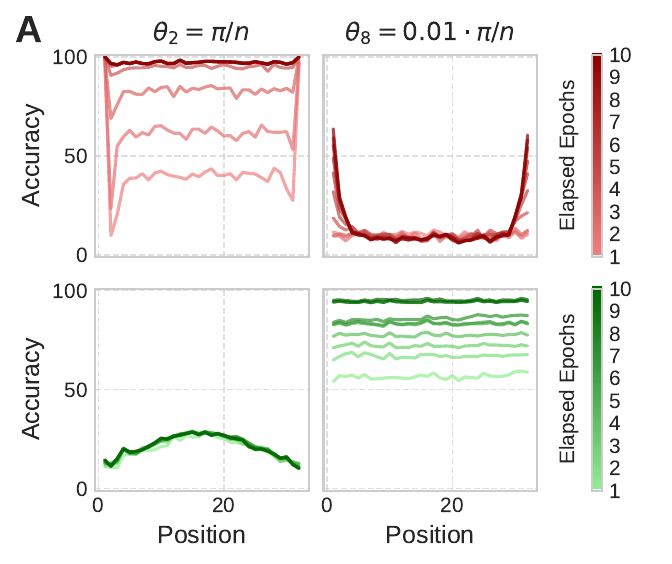}
    \includegraphics[width=0.307\textwidth]{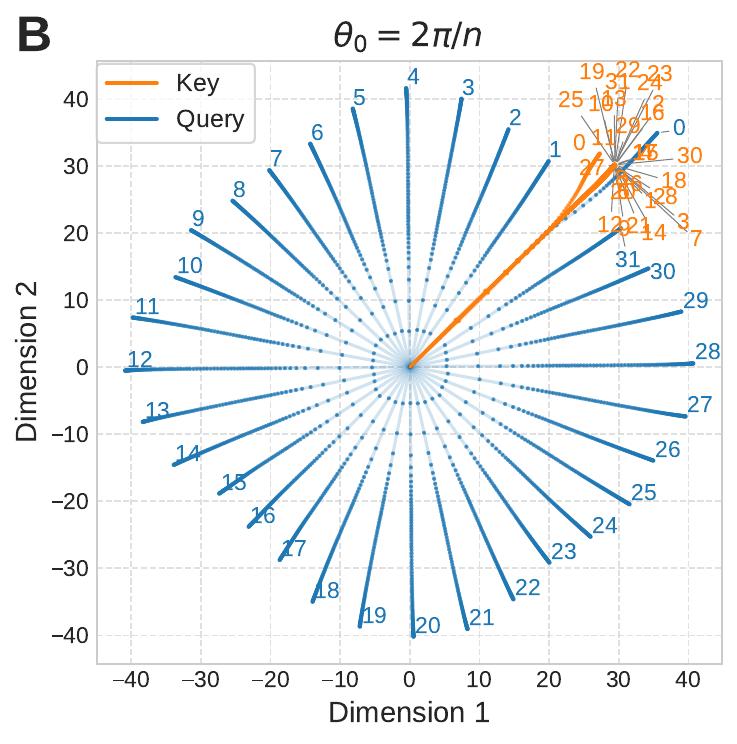}
    \includegraphics[width=0.307\textwidth]{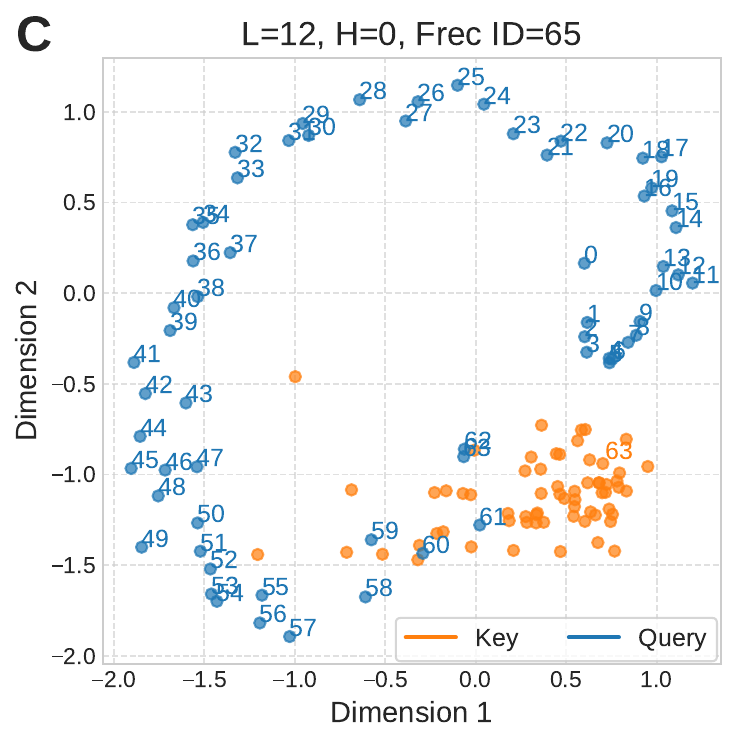}
    \caption{
        \textbf{A.} Shapes of Accuracy for the Index (in \textcolor{tabred}{\textbf{red}}) and Retrieving (in \textcolor{tabgreen}{\textbf{green}}) tasks, per Epochs. \textbf{B.} Query (in \textcolor{tabblue}{\textbf{blue}}) and key (in \textcolor{taborange}{\textbf{orange}}) vectors trajectories during training for the Index Task. \textbf{C.} Query/Key vector projections on a rotational plane from a \texttt{gemma-2-2b-it}'s Head for some Binding Task's input ($64$ pairs).
    }
\label{fig:u_shapes}
\end{figure*}

As we have seen, a head required to behave in a certain way needs to access certain frequencies. In this section, we are interested in understanding how the performance of a head that is forced to operate with the wrong frequency is affected. We thus analyzed the accuracy of our trained 1-layer Transformer on both of our canonical tasks, focusing on how it depends on the location of the answer within the prompt. 
Interestingly, as it can clearly be seen in Figure \ref{fig:u_shapes}A (upper graphs), the loss in accuracy when using frequencies that are too low for solving the Index task (\textcolor{tabred}{\textbf{red}}) is more severe when the answer lies at the middle of the prompt, and less severe when they closer to the border positions,  resulting in a $U$-shaped accuracy pattern. This is reminiscent of a phenomenon that has been reported before as “lost in the middle" \citep{liu2024lost}.  
On the other hand, when solving the Retrieving task (\textcolor{tabgreen}{\textbf{green}}) using frequencies that are not low enough, the opposite phenomenon is observed, and an \emph{inverted} $U$-shaped accuracy pattern emerges.

Figure \ref{fig:u_shapes}B shows the trajectories of the query (\textcolor{tabblue}{\textbf{blue}}) and key (\textcolor{taborange}{\textbf{orange}}) vectors during training. These trajectories reveal the emerging  mechanism the trained model relies on to solve the task. The query vectors are configured so that their angles code the different positions they are querying to, while the key vectors arrange into a single direction, which explains why the head behaves positionally. In fact, this is the exact same mechanism the theoretical solution  $\hpos$ implements. On the other hand,  Figure \ref{fig:u_shapes}C shows the projections  of the query (\textcolor{tabblue}{\textbf{blue}}) and key (\textcolor{taborange}{\textbf{orange}}) vectors on a selected rotational plane for a frequency where head 12:0 from \texttt{gemma-2-2b-it} behaves positionally. One can see the striking resemblance to the toy model.

By further studying the mechanisms implemented by our theoretical toy models, we are able to provide an explanation for the observed accuracy shapes, at least at the attention level pattern. Let us say that a function $f:[n] \to \mathbb{R}$ is \emph{u-shaped} if it has local maxima at $0$ and $n$ and over this interval it is first decreasing and then increasing. Similarly, we say it is \emph{inverted-u-shaped} if $-f$ is \emph{u-shaped}. Now, for either of our canonical tasks (index and information retrieval) let $j$ be the location of the answer within the input prompt, and let $w_{\max}(j)$ be the largest attention weight value of the heads $\hpos$ and $\hsym$ from the proofs of Theorem \ref{lemma:positional_task} and \ref{lemma:symbolic_task}, respectively. For the next Theorem, we consider a modified version $\hsymt$ of $\hsym$ where we allow for the use of an angle $\theta$. From the construction of these heads it follows that $w_{\max}(j)$ is precisely the  weight at position $j$, which is moreover equal for every such input. Besides our empirical findings, in the Appendix we give a rigorous mathematical proof of the following: 

\begin{theorem}\label{theo:shapes}
    The function $w_{\max}$ is $u$-shaped for  $\hpos$ and  inverted-u-shaped for a simplified version of $\hsymt$. 
\end{theorem}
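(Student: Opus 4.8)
The plan is to make the softmax ratios defining $w_{\max}$ explicit in both constructions and reduce the shape question to the monotonicity of a single sliding quantity; the two shapes then emerge from a clean duality in where the position dependence sits. Recall from the proof of Theorem~\ref{lemma:positional_task} that $\hpos$ collapses all key vectors onto one direction and uses a query whose RoPE phase encodes the requested index $j$, so that on an input with answer at position $j$ the logits are $\lambda_k=\cos(\theta(j-k))$ (after normalization), peaked at $k=j$ with value $1$. For the simplified $\hsymt$, the rotation is instead applied on top of a symbol match that is the same for every admissible input, so the logit carried by the matching position $p$ is $\cos(\theta(n-p)+\phi)$ for a fixed phase $\phi$, while the non-matching positions contribute a position-independent baseline. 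Thus for $\hpos$ the ``bump'' of logits is centered at the \emph{moving} answer position $j$ (constant numerator, varying denominator), whereas for $\hsymt$ the rotation fixes the bump and the symbol merely selects which coordinate $p$ we read off (constant denominator, varying numerator). This is the structural reason the two shapes come out opposite.

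For $\hpos$ we have $w_{\max}(j)=e/D(j)$ with $D(j)=\sum_{k=0}^{n-1}g(j-k)$ and $g(d)=e^{\cos(\theta d)}$, so it suffices to prove $D$ is inverted-u. First I would telescope the forward difference to $D(j+1)-D(j)=g(j+1)-g(j-(n-1))$, and then expand $\cos(\theta(j+1))-\cos(\theta(j-n+1))=-2\sin\!\big(\theta(j-\tfrac{n-2}{2})\big)\sin\!\big(\tfrac{\theta n}{2}\big)$. Under the admissibility bound $\theta<2\pi/n$ (the same bound that makes $\hpos$ solve $\fpos$), the factor $\sin(\theta n/2)$ is strictly positive and the argument $\theta(j-\tfrac{n-2}{2})$ stays inside $(-\pi,\pi)$ as $j$ runs over $\{0,\dots,n-2\}$; hence the sign of $D(j+1)-D(j)$ is that of $-\sin(\theta(j-\tfrac{n-2}{2}))$, which is positive before the midpoint and negative after it. Therefore $D$ increases then decreases (and is symmetric about $(n-1)/2$ since $g$ is even), so $w_{\max}=e/D$ is u-shaped.

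For the simplified $\hsymt$, the weight at the answer is $w_{\max}(p)=\dfrac{e^{\cos(\theta(n-p)+\phi)}}{e^{\cos(\theta(n-p)+\phi)}+C}$, with $C$ the constant mass of the remaining positions; this is a strictly increasing function of $m(p):=\cos(\theta(n-p)+\phi)$. It thus remains to show $m$ is inverted-u in $p$. As $p$ ranges over the answer locations, the argument $\theta(n-p)+\phi$ sweeps an interval of length $\theta(n-1)<2\pi$, so it crosses a maximizing value $\equiv 0 \pmod{2\pi}$ at most once; the construction places this crossing at an interior position, and on the two sides of it $\cos$ is monotone (increasing while the argument decreases toward $0$, decreasing afterwards). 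Hence $m$ increases then decreases, a strictly increasing transform preserves this, and $w_{\max}$ is inverted-u.

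The main obstacle is controlling the periodicity: a cosine can generate spurious local extrema, and both halves of the argument rely on staying within a single monotone lobe. This is exactly where the angle bound $\theta<2\pi/n$ does the work — it confines the relevant arguments to $(-\pi,\pi)$ (respectively to a window of length $<2\pi$) and thereby rules out extra oscillations; it is reassuring that this is the very condition already identified as necessary for the heads to solve their tasks. A secondary point to nail down is that the symbolic peak is genuinely interior (so the shape is inverted-u rather than monotone), which I would read off from the explicit phase $\phi$ of the simplified construction, together with the check that the matching weight is indeed the largest in the regime $m(p)>0$.
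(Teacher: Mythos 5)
Your first half is essentially the paper's own argument. The paper also writes $w_{\max}(j)$ as $e$ divided by a sliding-window sum of $\exp(\cos(\theta(j-i)))$, telescopes the forward difference of the denominator to a difference of two exponentials, and locates the sign change of that difference at $j\approx n/2$. Your product-to-sum identity $\cos A-\cos B=-2\sin\big(\tfrac{A+B}{2}\big)\sin\big(\tfrac{A-B}{2}\big)$ is a slightly cleaner way to extract that sign under $\theta<2\pi/n$ than the paper's comparison of $|j|$ with $|j-n|$, which implicitly requires the arguments to lie in a single monotone lobe of the cosine --- exactly the point your identity makes explicit. So the $u$-shape part is correct and matches the paper.

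The inverted-$u$ part has a genuine gap relative to what the paper proves. You assume that ``the non-matching positions contribute a position-independent baseline,'' which reduces $w_{\max}(p)$ to an increasing function of the single matching logit $m(p)$ and makes the inverted-$u$ shape an immediate consequence of the unimodality of one cosine over less than a period. But in the paper's simplified scenario the rotation acts on \emph{every} key: the background token $\tau_0$ carries the position-dependent logit $L(z_0,j)=\cos(\pi j/n+\pi/2)$, so with $\nu_i(j)=\exp(L(z_i,j))$ and $S=\sum_{j=1}^{n-1}\nu_0(j)$ the denominator is $S-\nu_0(\ell)+\nu_1(\ell)$; the ``remaining mass'' is $S-\nu_0(\ell)$, not a constant $C$. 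Handling this interaction is the entire technical content of the paper's second half: the antisymmetry $L(z_1,j)=-L(z_0,j)$ gives $\nu_0(\ell)\nu_1(\ell)=1$, hence $1/w_{\max}(t)=1+S\nu_0(t)-\nu_0(t)^2$, whose derivative factors as $\nu_0'(t)\left(S-2\nu_0(t)\right)$; the second factor is positive for $n\ge 5$ and the first changes sign exactly at $t=n/2$. Your constant-baseline assumption would essentially force the non-matching keys to be annihilated by $K$, which is not the head the paper analyzes and renders the claim nearly tautological. To close the gap you would need either to argue that your stronger simplification is the intended one, or to supply an argument that survives a varying denominator --- e.g.\ the paper's factorization above, or a direct monotonicity analysis of the ratio $\nu_1(\ell)/(S-\nu_0(\ell)+\nu_1(\ell))$.
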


	\section{Conclusions and future directions}
	
We have carried a thorough analysis of two core abilities that attention heads in Transformers require in order to work effectively. By providing a precise definition of positional and symbolic behavior, we came up with a metric to detect a working head's positional or symbolic nature. 
We were able to disentangle these mechanisms and mathematically understand why these two abilities are in tension with each other. By analyzing both real and toy models, we explained why heads prefer different frequencies in RoPE depending on the mechanisms they need to implement to process a given input, and finally related the resulting accuracy of a model to its ability to employ the appropriate frequencies. 

We believe the characterization of a model by its positional-symbolic profile is a powerful tool to analyze how particular models behave on different tasks. Even though in this paper we have concentrated on the Binding Task \citep{feng2023language}, we foresee a promising strategy in systematically performing similar analyses for a wider set of tasks and models. For example, one could use this tool to characterize the nature of models and tasks in terms of the “amount" of positional or symbolic mechanisms they tend to  implement (for models) or rely on (for tasks).


In fact, the invariant properties that underlie positional or symbolic behavior can be thought of as inductive biases that the model has to learn to solve the task at hand. In other architectures such as CNNs or GNNs for example, these type of inductive biases are imposed into the architecture by design, thereby improving generalization properties and learning efficiency (\cite{maron2019invariantequivariantgraphnetworks, NEURIPS2023_c35f8e2f, NEURIPS2022_b3847cda}). It is interesting to wonder whether a similar strategy could be applied to Transformers, for instance by imposing the appropriate degree of positional or symbolic behavior to selected heads. It is tempting to conjecture that these two core abilities constitute in fact the fundamental building blocks of any head, in the sense that their action can always be somehow decomposed into simpler heads behaving purely either positionally or symbolically. Such a theory could constitute the base for designing architectures targeted to specific tasks. We see this as an exciting avenue for future work.

	\bibliography{references}
    \bibliographystyle{unsrtnat}
\newpage
	\appendix
	
	\section{Additional illustration and proofs of section~\ref{sec:posvssym_attention}}

\subsection{Illustration of attention head behavior definition }\label{fig:attention_head_def}

\begin{figure*}[htp!]
\centering
    \includegraphics[width=0.95\textwidth]{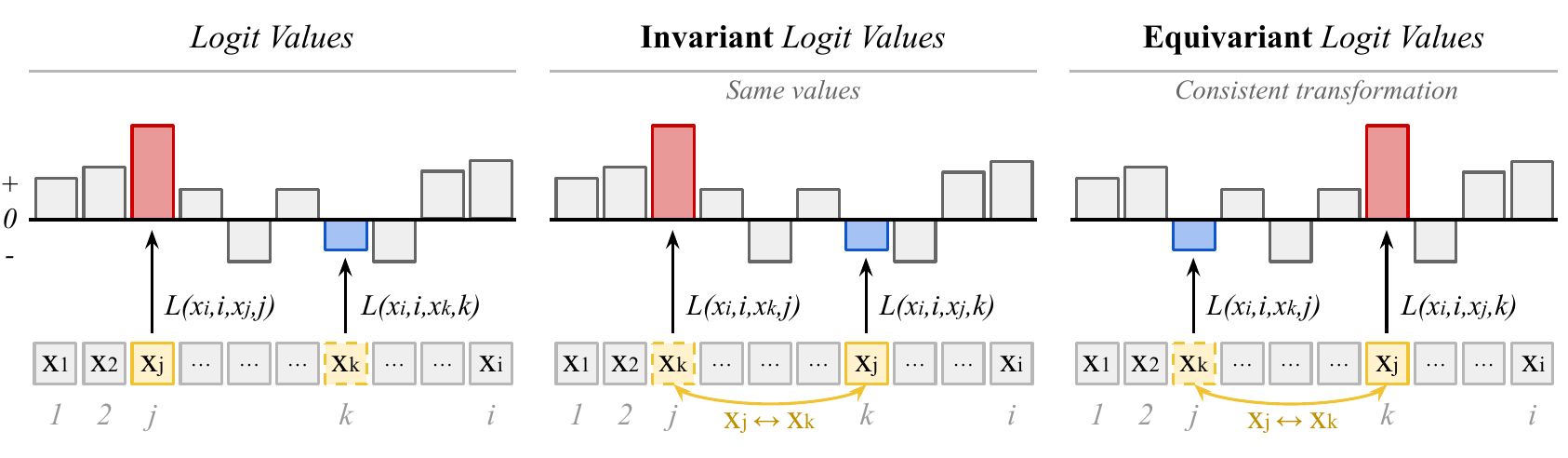}
\caption{Illustration using the same sequence as in Definition~\ref{def:act_pos_sym}. 
\textbf{Left:} Original logit values. 
\textbf{Middle:} Example of \emph{invariant} logit values after a simple permutation (swapping $x_j$ with $x_k$). 
\textbf{Right:} Example of \emph{equivariant} logit values under the same permutation.}
    \label{fig:remark_pos_sym}
\end{figure*}

\subsection{Proof of theorem~\ref{theo:exclusion_principle}}\label{proof:exclusion_principle}

Denote $a_{k,j} = L(x_n, n, x_k, j)$. Take an index $j\in [n -1]$ and a permutation $\pi\in S_{n-1}$ independently and uniformly at random. Note that $(j, \pi(j))$ is a uniformly random pair from $[n - 1]^2$ (under any fixation of $j$, the distribution of $\pi(j)$ is uniform). Observe that:
\begin{align*}
    {\rm Var}(\lambda) &= {\rm Var}(a_{jj}), \\
   \frac{ \|\delta^{\rm pos}_{L,\bar x}\|_2^2}{(n - 1)! \cdot (n - 1)} &= \mathbb{E}(a_{\pi(j)j} - a_{jj})^2,\\
   \frac{ \|\delta^{\rm sym}_{L,\bar x}\|_2^2}{(n - 1)! \cdot (n - 1)} &= \mathbb{E}(a_{j\pi(j)} - a_{jj})^2.
\end{align*}
Since $(j, \pi(j))$ is a uniformly random pair from $[n - 1]^2$, the pair $(\pi(j), j)$ has the same distribution, meaning that:
\[\frac{ \|\delta^{\rm sym}_{L,\bar x}\|_2^2}{(n - 1)! \cdot (n - 1)} = \mathbb{E}(a_{\pi(j)j} - a_{\pi(j)\pi(j)})^2.\]

Using the fact that ${\rm Var}(X) = \mathbb{E}(X - Y)^2/2$ for any random variable $X$ and its independent copy $Y$, we get:
\[  {\rm Var}(\lambda) = {\rm Var}(a_{jj}) = \mathbb{E}(a_{jj} - a_{\pi(j)\pi(j)})^2/2.\]
Finally, using an inequality $(x - y)^2 \le 2x^2 + 2 y^2$ for $x,y\in\mathbb{R}$, we derive the statement of the theorem
\begin{align*}
     {\rm Var}(\lambda) &= \mathbb{E}(a_{jj} - a_{\pi(j)\pi(j)})^2/2 \le \mathbb{E}(a_{jj} - a_{\pi(j)j})^2 +  \mathbb{E}(a_{\pi(j)j} - a_{\pi(j)\pi(j)})^2 \\
     &= \frac{ \|\delta^{\rm pos}_{L,\bar x}\|_2^2 + \|\delta^{\rm sym}_{L,\bar x}\|_2^2 }{(n - 1)! \cdot (n - 1)}
\end{align*}

\subsection{General metric to test positional or symbolic heads}\label{app:metrics_definition}

\paragraph{Blocks.}  
We define a fixed partition of the index set \([n]\) into \(m\) contiguous intervals (blocks)  
\[
I_1=[a_1,b_1],\; I_2=[a_2,b_2],\; \dots,\; I_m=[a_m,b_m],
\qquad a_1=1,\; b_m=n,
\]
with \(a_{k+1}=b_k+1\). For block \(I_k\) define its average attention weights under \(\overline{x}\) by
\[
d_k(\overline{x}) \;=\; \frac{1}{|I_k|}\sum_{t\in I_k} D(\overline{x})_t.
\]
We collect these into the block-average vector \(d(\overline{x})=(d_1(\overline{x}),\dots,d_m(\overline{x}))\in\mathbb{R}^m\).

\paragraph{Block permutations with dynamic intervals.} 
Let the original sequence be partitioned into contiguous blocks \(I_1,\dots,I_m\) as above, with lengths \(|I_1|,\dots,|I_m|\).  
A block permutation \(\pi\in S_m\) rearranges these blocks, producing a new sequence \(\pi(\overline{x})\). 
We then define new block intervals \(I_1',\dots,I_m'\) by preserving the original \emph{relative positions} but adjusting the boundaries according to the lengths of the permuted blocks. More precisely, we let
\[
|I_k'| = \bigl| I_{\pi^{-1}(k)} \bigr|,
\qquad
I_k' = [a_k',b_k'], 
\]
where the boundaries are given recursively by
\[
a_1' = 1,\qquad 
b_k' = a_k' + |I_{\pi^{-1}(k)}| - 1,\qquad 
a_{k+1}' = b_k' + 1.
\]
Thus, the \(k\)-th block position in the permuted sequence has the same length as the block that was moved into that position.
The block averages under the permuted sequence are
\[
d_k\bigl(\pi(\overline{x})\bigr) \;=\; \frac{1}{|I_k'|}\sum_{t \in I_k'} D\bigl(\pi(\overline{x})\bigr)_t.
\]

\paragraph{Two--dimensional representation.}
A simple type of permutation $\pi$ are a swap between block $i$ and block $j$. We represent the relevant attention mass by the vector
\[
v_{ij}(\overline{x}) = (d_i(\overline{x}),\, d_j(\overline{x})),
\]
and after the permutation by
\[
v_{ij}(\pi(\overline{x})) = (d_i(\pi(\overline{x})),\, d_j(\pi(\overline{x}))).
\]

\paragraph{Permutation weights.}
Not all block swaps are equally informative: we weight each permutation $\pi$ according to the amount of mass it moves,
\[
\alpha(\pi) = \softmax \!\bigl(\bigl| d_i(\overline{x}) - d_j(\overline{x}) \bigr|/\tau\bigr),
\]
where $\tau > 0$ is a temperature parameter controlling how sharply the softmax focuses on high-mass swaps.

\paragraph{Positional and symbolic scores.}
We can now quantify how the head behaves with respect to block permutations using cosine similarity\footnote{$\cos \text{sim}(a, b) \;=\; \frac{\langle a, b \rangle}{\|a\|\,\|b\|}$.}.  The \emph{positional score} measures how stable the block averages remain under a permutation:
\[
s_\textsc{pos}(\overline{x}) 
= \sum_{\pi} \alpha(\pi)\,
\cos \text{sim}\Bigl(v_{ij}(\pi(\overline{x})),\, v_{ij}(\overline{x})\Bigr).
\]
The \emph{symbolic score} instead measures whether the block averages transform exactly as the permutation prescribes:
\[
s_\textsc{sym}(\overline{x}) 
= \sum_{\pi} \alpha(\pi)\,
\cos \text{sim}\Bigl(v_{ij}(\pi(\overline{x})),\,v_{ji}(\overline{x})\Bigr).
\]

\paragraph{Scores per frequency.}

Even though previous definitions are applicable to any logits function $L$, in the case of RoPE we can leverage its frequency decomposition (as a Hadamard attention head) to have a more refined understanding of the positional and symbolic behavior of each RoPE's frequency as single angle attention heads.

We begin by isolating the contribution of a single attention head at a given layer, and decomposing its logits into frequency components induced by RoPE. Consider layer $\ell$ and head $h$. The logits between query position $n$ with token $x_n$ and key position $j$ with token $x_j$ are given by
\begin{align*}
    L^{\ell h}(x_n, n, x_j, j) 
    &= \left\langle K^{\ell h} x_j, \, R_\Theta^{\,n-j} Q^{\ell h} x_n \right\rangle \\
    &= \sum_{t=1}^{d/2} 
       \left\langle K_{2t,2t+1}^{\ell h} x_j, \, 
       R_{\theta_t}^{\,n-j} Q_{2t,2t+1}^{\ell h} x_n \right\rangle \\
    &= \sum_{t=1}^{d/2} L_t^{\ell h}(x_n, n, x_j, j).
\end{align*}
Here each term
\[
L_t^{\ell h}(x_n, n, x_j, j) 
= \bigl\langle K_{2t,2t+1}^{\ell h} x_j, \,
R_{\theta_t}^{\,n-j} Q_{2t,2t+1}^{\ell h} x_n \bigr\rangle
\]
corresponds to the logit contribution from frequency $\theta_t$ where the metrics $s_\textsc{pos}(\overline{x})$ or $s_\textsc{sym}(\overline{x})$ can be measured individually.


\subsection{Template Prompts for Real-world Models on Binding Task}

In our template prompts, we consider the span-block format defined as follows:
\begin{verbatim}
span_block = "\n".join([f"{name} likes the color {color}" 
for name, color in sequence])
\end{verbatim}

See bellow the template prompts for each model.

\begin{tcolorbox}[colback=white!95!gray, colframe=black, title=\texttt{gemma-2-it}, fonttitle=\bfseries,
  boxsep=2pt,     
  left=4pt,       
  right=2pt       
]
\scriptsize
\begin{verbatim}<bos><start_of_turn>user
You are a helpful assistant that can answer questions about entities and their associated 
attribute.
---
Context:
{{span_block}}
---
Question: What color does {{entity[j]}} like the most?<end_of_turn>
<start_of_turn>model
Answer: {{entity[j]} likes the color\end{verbatim}
\end{tcolorbox}

\begin{tcolorbox}[colback=white!95!gray, colframe=black, title=\texttt{llama-3.2-it}, fonttitle=\bfseries,
  boxsep=2pt,     
  left=4pt,       
  right=2pt       
]
\scriptsize
\begin{verbatim}<|begin_of_text|><|start_header_id|>system<|end_header_id|>
You are a helpful assistant that can answer questions about entities and their associated 
attribute.<|eot_id|>
<|start_header_id|>user<|eot_id|>
Context:
{{span_block}}
---
Question: What color does {{entity[j]}} like the most?<|eot_id|>
<|start_header_id|>assistant<|eot_id|>
Answer: {{entity[j]} likes the color\end{verbatim}
\end{tcolorbox}

\begin{tcolorbox}[colback=white!95!gray, colframe=black, title=\texttt{qwen2-it}, fonttitle=\bfseries,
  boxsep=2pt,     
  left=4pt,       
  right=2pt       
]
\scriptsize
\begin{verbatim}<|im_start|>system
You are a helpful assistant that can answer questions about entities and their associated 
attribute.<|im_end|>
<|im_start|>user
Context:
{{span_block}}
---
Question: What color does {{entity[j]}} like the most?<|im_end|>
<|im_start|>assistant
Answer: {{entity[j]} likes the color\end{verbatim}
\end{tcolorbox}

\subsection{Gemma-2-2B-Instruct on the Binding Task}\label{sec:binding_task}

For the \texttt{google/gemma-2-2b-it} model, we compute global positional and symbolic scores for each attention head, and for each head we compute the scores of the projected heads for every frequency. The same analysis is also performed for other models, with results deferred to the next sections. Our evaluation is grounded in the \emph{binding task}, which tests whether the model can correctly associate entities with their attributes \citep{feng2023language}. Specifically, we construct inputs consisting of $n=256$ entity--attribute pairs, where entities are proper names and attributes are colors (e.g., \emph{Alice likes the color Red; Bob likes the color Blue}). At the end of the sequence, a query probes the attribute of one entity (e.g., \emph{What color does Alice like the most?}).  
The choice of the task is made based on two reasons. First, it is a task that can intuitively be approached by a combination positional and a symbolic mechanisms. And second, we have complete control of the location of the correct answer the question at the end of the prompt requires the model to find. This allows for a clearer comparison of the model behavior's when we change this location by changing the query. 

We segment each input into 256 blocks, with each block containing exactly one entity--attribute pair. Then we select 9 uniformly spaced blocks. For a given query targeting the $k$-th block, we generate alternative inputs by permuting the position of $i$-th block with a different $j$-th block, for all $i, j=1...9$. This procedure yields nine permutations per query, and a total of nine queries per input.

For each attention head, we compute the positional and symbolic scores across RoPE frequencies. Recall that Gemma employs RoPE, where frequency identifiers are mapped to angular frequencies according to $\theta_t = 10000^{-2\cdot t/d}$, with $t$ the frequency index and $d=256$ the head dimension. By convention, lower frequency IDs correspond to higher angular frequencies, and conversely, higher frequency IDs correspond to lower angular frequencies. This mapping allows us to analyze the contribution of each frequency to the model’s binding behavior.  

We first evaluate the metrics separately for each frequency. Figure~\ref{fig:positional_symbolic_plane} shows the location of selected attention heads in the positional--symbolic plane, together with their detailed frequency profiles. Since the model uses a head dimension of 256, this corresponds to 128 RoPE frequencies. By convention, higher frequency IDs correspond to lower angular frequencies, while lower frequency IDs correspond to higher angular frequencies.

We then aggregate these measurements by summing logits across frequencies, yielding a global positional--symbolic profile. The results, presented in Figure~\ref{fig:global_pos_sym_top}, \ref{fig:global_pos_sym_bottom}, provide an overall view across all layers and heads of the model.

\begin{figure*}[htp!]
\centering
    \includegraphics[width=1\textwidth]{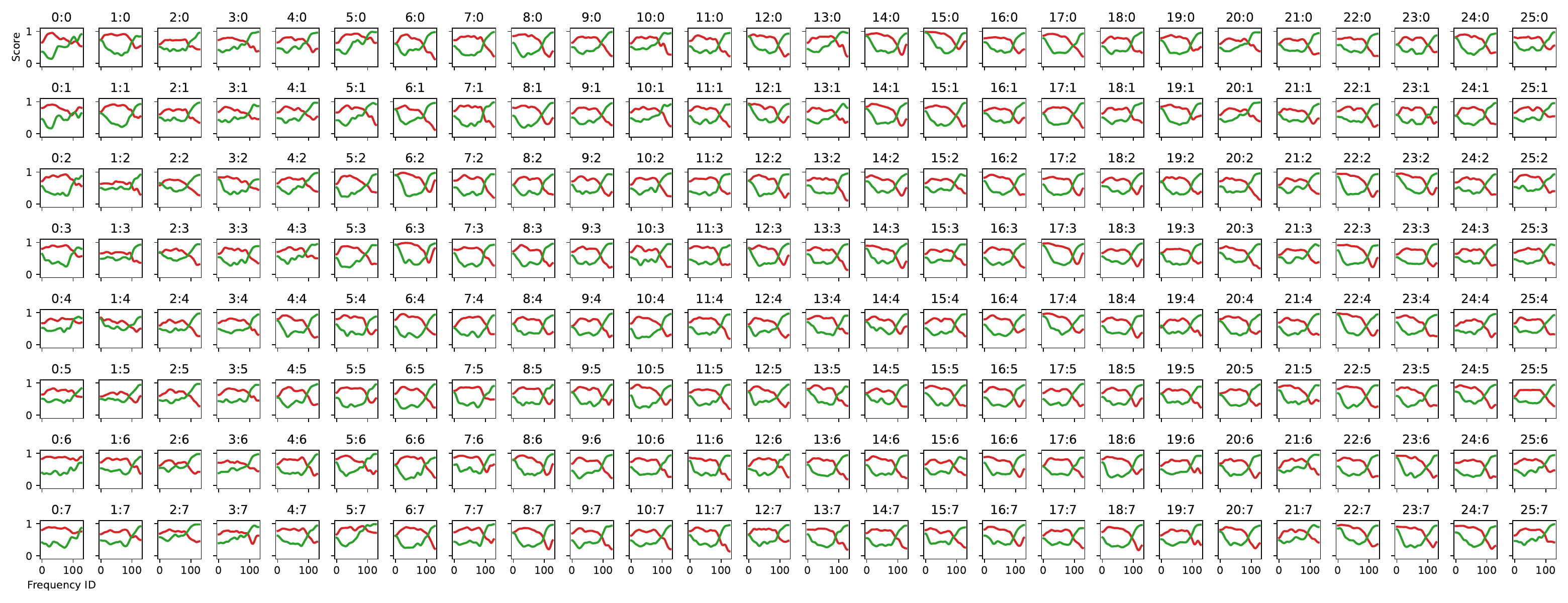}
    \caption{
    \textbf{Positional--symbolic profiles of all attention heads in \texttt{google/gemma-2-2b-it}.} 
    Each subplot corresponds to an attention head, 26 layers, 8 heads per layer.
}
    \label{fig:all_heads_gemma}
\end{figure*}


\begin{figure*}[h!]
\centering
    \includegraphics[width=0.8\textwidth]{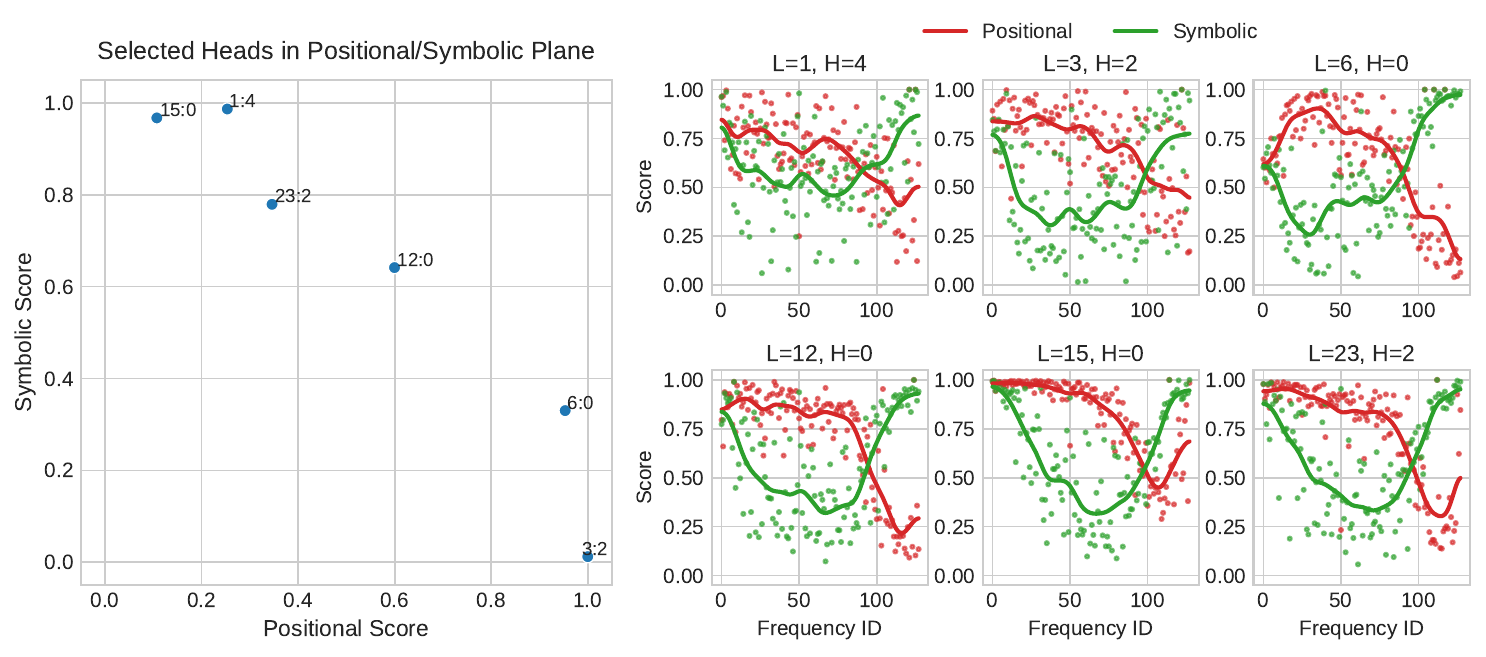}
    \caption{
        \textbf{Positional--Symbolic Plane across Frequencies.} 
        Visualization of six attention heads from the \texttt{google/gemma-2-2b-it} model. 
        (Left) Each point represents a head in the positional–symbolic plane, labeled by its layer and head index. 
        (Right) For the same heads, we plot their positional and symbolic scores as a function of RoPE frequencies.
    }
    \label{fig:positional_symbolic_plane}
\end{figure*}

\begin{figure*}[htp!]
\centering
\includegraphics[width=0.7\textwidth]{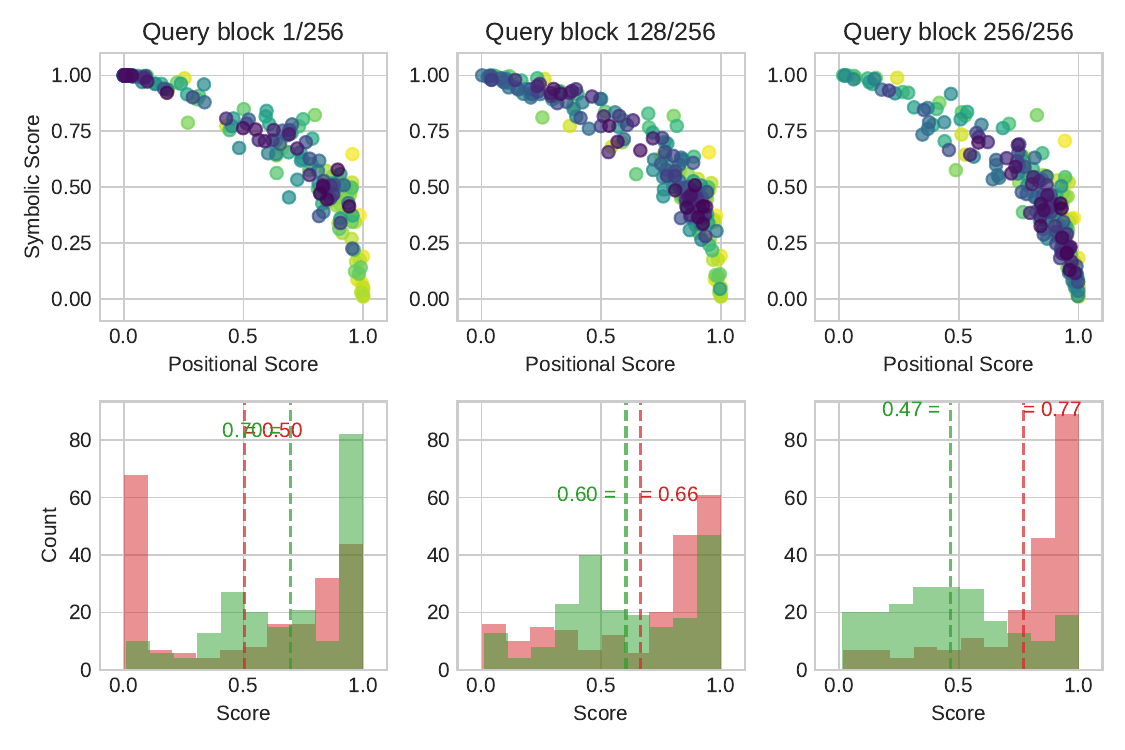}
\caption{
    \textbf{Heads when the block moves (1 to 9).} 
    (Above) Heads in the positional–symbolic plane for each block position. 
    (Bellow) Histograms showing how positional and symbolic scores change as the relevant block moves.
}
\label{fig:global_pos_sym_bottom}
\end{figure*}

    \newpage
    
    \section{Additional illustration and proofs of section~\ref{sec:toy_models}}\label{app:section 4}

\subsection{Illustration of canonical tasks}\label{ill:toy_tasks}
See Figure~\ref{fig:task_pos_sym_mix} for an illustration of how the canonical tasks presented in Section~\ref{sec:toy_models} should work.

\begin{figure}[h!]
\centering
    \includegraphics[width=0.95\textwidth]{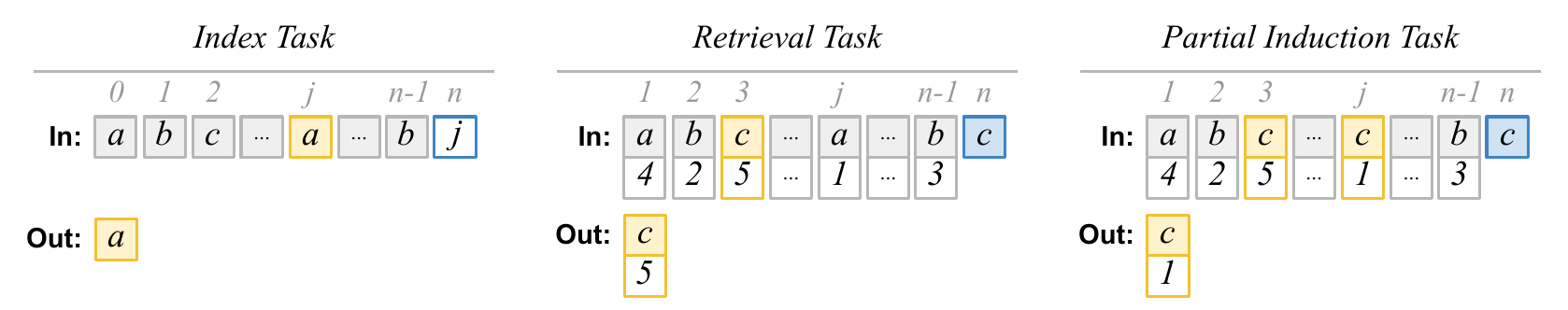}
\caption{Illustration of the canonical tasks.}
    \label{fig:task_pos_sym_mix}
\end{figure}

\subsection{Proof of theorem~\ref{theo:positional_task}}\label{proof:positional_task}

First, we will prove that whenever a head $\Hcal=(L, \val, F)$ acts symbolically on input $\bar{x}$, the output of the model $\toymodel$ is invariant over permutations of the input.

\begin{lemma}\label{lemma:monotonic_logits}
     For a fixed vocabulary $\Sigma=\{s_1,s_2,...,s_{|\Sigma|}\}$ and input $s^{*}= (s_{i_1},...,s_{i_n})$ such that $\Hcal$ acts symbolically on $\embed(s^{*})$, then $$\toymodel(s^{*}) = \toymodel(\pi(s^{*})) \qquad \forall \pi\in S^{n-1}$$
\end{lemma}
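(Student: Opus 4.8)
The plan is to trace the permuted input through the single head and verify that every quantity feeding the final prediction is left invariant. Fix $s^* = (s_{i_1},\dots,s_{i_n})$ with embeddings $x_k = \embed(s_{i_k})$, and let $\pi\in S_{n-1}$ act on the first $n-1$ coordinates while holding the query token at position $n$ fixed. Write $\tilde{x}$ for the embedding sequence of $\pi(s^*)$, so that the key vector occupying position $j<n$ in $\tilde{x}$ is $x_{\pi(j)}$, while the query vector is again $\tilde{x}_n = x_n$. The goal is to show that the output distribution computed at the last position coincides for $\bar{x}=\embed(s^*)$ and $\tilde{x}$, whence the two $\argmax$'s agree.

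First I would unpack the symbolic hypothesis. Since $\Hcal$ acts symbolically on $\bar{x}$ at query $i=n$, the logit attached to a key vector is independent of the position it occupies: for each $m<n$ and every position $p\in[n-1]$ one has $L(x_n,n,x_m,p)=L(x_n,n,x_m,m)$, because any such $p$ is realized as $\sigma(m)$ for a suitable $\sigma\in S_{n-1}$ in Definition~\ref{def:act_pos_sym}. Applying this with $m=\pi(j)$ and $p=j$ gives $L(x_n,n,x_{\pi(j)},j)=L(x_n,n,x_{\pi(j)},\pi(j))$; that is, the logit produced at position $j$ on the permuted input equals the original logit $\lambda^n_{\pi(j)}$. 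Hence passing from $\bar{x}$ to $\tilde{x}$ merely reindexes the logit sequence by $\pi$, leaving the diagonal term $L(x_n,n,x_n,n)$ untouched.

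Next I would propagate this through the softmax and the value aggregation. Because the multiset of logits is unchanged, the softmax normalizer is identical, so the attention weights are permuted in lockstep: writing $\tilde{w}^n_j$ for the weights on $\tilde{x}$, one gets $\tilde{w}^n_j = w^n_{\pi(j)}$ for $j<n$, with the weight on the query preserved. The attention vector is $a_n = \sum_{j\le n} w^n_j\,\val(x_j)$; substituting $\tilde{w}^n_j = w^n_{\pi(j)}$ and $\tilde{x}_j = x_{\pi(j)}$ and reindexing the sum by $k=\pi(j)$ shows the permuted attention vector $\tilde{a}_n$ equals $a_n$ exactly. Since the head output is $y_n = F(a_n,x_n)$ and neither argument changes, $\tilde{y}_n = y_n$; feeding this through $r$ and $\softmax$ yields the same distribution $\mu$, and therefore the same maximizer, proving $\toymodel(s^*)=\toymodel(\pi(s^*))$.

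The only delicate point is the bookkeeping in the second step: one must apply the symbolic identity to the key vector that actually lands at position $j$ after the permutation, namely $x_{\pi(j)}$, rather than to $x_j$, and confirm that the normalizing denominator is genuinely a sum over the same set of values (just reordered). Everything downstream is a mechanical reindexing of a finite sum, and no property of $\val$, $F$, or $r$ beyond being well-defined functions is used.
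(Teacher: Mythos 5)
Your proposal is correct and follows essentially the same route as the paper's proof: use the symbolic (position-equivariance) hypothesis to show the logits, and hence the attention weights, are merely reindexed by $\pi$, then reindex the value sum to conclude $a_n$ (and thus $y_n=F(a_n,x_n)$ and the output distribution) is unchanged. The only cosmetic difference is your convention for which token lands at position $j$ after permuting ($x_{\pi(j)}$ versus the paper's $x_{\pi^{-1}(j)}$), which is immaterial since $\pi$ ranges over all of $S_{n-1}$.
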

\begin{proof}
    Set $\embed(s^{*}) = (x_1,...,x_n)$ and fix a position $k\leq n$. Now, we have that the attention weight at position $k$ with input $s^{*}$ is $$w_{nk}(s^{*}) =\displaystyle\frac{\exp(L(x_{n},n,x_{k}, k))}{\displaystyle\sum_{l\leq n}\exp(L(x_{n},n,x_{l}, l))} = \displaystyle\frac{\exp(L(x_{n},n,x_k, \pi(k)))}{\displaystyle\sum_{l\leq n}\exp(L(x_{n},n,x_{\pi(l)}, \pi(l)))} = w_{n\pi(k)}(\pi(s^{*})),$$
    because $\Hcal$ acts symbolically on $s^{*}$.
    Now, if we look at the attention vector for the last position with input $s^{*}$ we get 
    
    \begin{align}       
    a_n(s^{*}) = \displaystyle\frac{\displaystyle\sum_{j\leq n} w_{nj}(s^{*}) \val(x_j)}{\sum_{j\leq n}w_{nj}(s^{*})} = &\displaystyle\frac{\displaystyle\sum_{j\leq n} w_{n\pi(j)}(s^{*}) \val(x_{\pi(j)})}{\sum_{j\leq n}w_{n\pi(j)}(s^{*})} \\ =& \displaystyle\frac{\displaystyle\sum_{j\leq n} w_{n\pi(j)}(\pi(s^{*})) \val(x_{\pi(j)})}{\sum_{j\leq n}w_{n\pi(j)}(\pi(s^{*}))}\\ =&   a_{n}(\pi(s^{*})).
    \end{align}

Which implies that $\toymodel$ has the same output for $s^{*}$ and $\pi(s^{*})$.
\end{proof}

Consider an arbitrary vocabulary $\Sigma$ and $s\in\Sigma^{*}$ such that  $\bar{x} = \embed(s) = (x_1,x_2,...,x_n)$ is the input where $H$ acts symbolically. Since $\bar{x}$ is not constant, there exists a permutation $\pi\in S^{n-1}$ such that $\fpos(\pi(s)) \not= \fpos(s)$ (we just change what is in the position $j$ being asked). Then, from Lemma~\ref{lemma:monotonic_logits} it follows that $\toymodel(s)\not= \fpos(s)$ or $\toymodel(\pi(s))\not=\fpos(\pi(s))$.

\subsection{Proof of theorem~\ref{lemma:positional_task}}\label{app:lemma_pos}

The goal is to design an attention head $\hpos$ such that the model $\toymodeloh$ using $\hpos$ returns the symbol $\sigma_j$ at position \(j\) with input $\spos$, i.e. it performs an indexing operation.

We fix a sequence length \(n > 0\), and define the desired inner product condition: 

\[
x_iK^\top R_\theta^{n - i} Q x_n(j) = v^\top R_\theta^{n - i} q_n(j) = \cos\left(\theta(j - i)\right)
\]
for $i = 0,...,n$.

\jorge{In order to achieve this, first we  fix a vector} \(v = (1/2, 1/2)^\top\) and the rotation angle \(\theta = \pi / n\) \jorge{(so maximum is reached just for $i=j$)}. \jorge{We define the query vector as:}
\[
q_n(j) = R_\theta^{j - n} v
\]
\jorge{Therefore, we get a query matrix} \(Q_{\textsc{Pos}}\) such that:
\[
Q_{\textsc{Pos}}e_j = q_n(j)
\]
where \(e_j\) is the one-hot encoding of \jorge{the symbol \(j\). For our solution, we use the $\onehot$ embedding function. On the other hand, we define a constant key matrix \(K_{\textsc{Pos}}\)} such that for any position \(i\), the key embedding of the symbol at that position is:
\[
\jorge{K_{\textsc{Pos}} \onehot(\sigma_i) = v.}
\]
With these choices, the result is a one-layer transformer $\toymodeloh$ with attention head $\hpos = (K_{\textsc{Pos}},Q_{\textsc{Pos}}, Id, \pi_1)$ that always maximizes its attention at the right position. Therefore, $\toymodeloh$ perfectly copies the symbol at the position asked by an input $\spos$, at least when $\spos$ has length $n$. Now, by Theorem~\ref{lemma:positional_task} we know that $\hpos$ can not be a symbolic head for any input. We formally state its positional nature in the following lemma.

\begin{lemma}
    For every input $\spos$, $\hpos$ is a positional head on $\onehot(\spos)$.
\end{lemma}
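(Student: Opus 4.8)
The plan is to verify Definition~\ref{def:act_pos_sym} directly: I must show that, for the head $\hpos$, the logit $L(x_i,i,x_j,j)$ evaluated on the one-hot embedded input $\onehot(\spos)$ does not depend on the key vector $x_j$, so that permuting the key vectors leaves the logit sequence unchanged. Since $\hpos$ uses a RoPE logit as in~\eqref{eq_ropelike} with unitary operator $R_\theta$, I would first rewrite
\[
L(x_i,i,x_j,j)=\langle R_\theta^{\,j}K_{\textsc{Pos}}x_j,\ R_\theta^{\,i}Q_{\textsc{Pos}}x_i\rangle=\langle K_{\textsc{Pos}}x_j,\ R_\theta^{\,i-j}Q_{\textsc{Pos}}x_i\rangle,
\]
using the unitarity identity $\langle R_\theta^{\,j}a,R_\theta^{\,i}b\rangle=\langle a,R_\theta^{\,i-j}b\rangle$, which holds because $R_\theta^{\top}=R_\theta^{-1}$.

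The key step is the observation that, by construction, $K_{\textsc{Pos}}$ sends every one-hot vector to the single fixed vector $v=(1/2,1/2)^{\top}$; equivalently, every column of $K_{\textsc{Pos}}$ equals $v$. Hence for the one-hot input we have $K_{\textsc{Pos}}x_j=v$ for every key position $j$, irrespective of which symbol occupies that position. Substituting this back gives
\[
L(x_i,i,x_j,j)=\langle v,\ R_\theta^{\,i-j}Q_{\textsc{Pos}}x_i\rangle,
\]
an expression that depends only on the query vector $x_i$ and the indices $i,j$, and not at all on $x_j$.

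Finally, I would conclude positionality: for any $\pi\in S_{i-1}$ and any $j<i$ we have $K_{\textsc{Pos}}x_{\pi(j)}=v=K_{\textsc{Pos}}x_j$, so $L(x_i,i,x_{\pi(j)},j)=\langle v,R_\theta^{\,i-j}Q_{\textsc{Pos}}x_i\rangle=L(x_i,i,x_j,j)$, which is exactly the invariance required by Definition~\ref{def:act_pos_sym}. This holds at every query position $i$ and for every input $\spos$, which proves the lemma. I do not anticipate a real obstacle here: the whole argument reduces to the fact that the key map is constant on one-hot inputs, which collapses the symbol dependence, and the only point requiring a line of care is the unitarity identity used to pull the two rotations together.
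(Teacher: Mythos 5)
Your proof is correct and follows essentially the same route as the paper's: both arguments reduce to the observation that $K_{\textsc{Pos}}$ maps every one-hot key vector to the fixed vector $v$, so the logit collapses to a function of the positions alone (the paper writes it explicitly as $\cos(\theta(j-i))$), making invariance under key permutations immediate. Your version merely spells out the unitarity manipulation that the paper leaves implicit.
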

\begin{proof} We just apply the definition of a positional head on $y,z\in\{x_1, \ldots, x_{n-1}\}, i\in 
[n-1]$
then    \[
    L(x_n(j),n,y,i) = \cos\left(\theta(j - i)\right) = L(x_n(j),n,z,i).
    \]
\end{proof}

\begin{lemma}\label{lemma:maximum_attention}
    For the head $\hpos$ and any $\spos$ with $|\onehot(\spos)|=n$ then $$\toymodeloh(\spos) = \fpos(\spos).$$ 
\end{lemma}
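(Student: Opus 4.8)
The plan is to follow the computation of $\toymodeloh$ on an input $\spos=(\sigma_0,\dots,\sigma_{n-1},j)$ from the logits all the way to the final $\argmax$, and to show that the only symbol that can win is $\sigma_j$. First I would record the explicit form of the logits. By construction the keys are constant, $K_{\textsc{Pos}}\onehot(\sigma_i)=v$, and the query is $Q_{\textsc{Pos}}\onehot(j)=R_\theta^{\,j-n}v$, so using that $R_\theta$ is unitary (hence $\langle R_\theta^{\,i}a,R_\theta^{\,n}b\rangle=\langle a,R_\theta^{\,n-i}b\rangle$), the logit obtained when the last token queries key position $i$ collapses to
\[
L(x_n,n,x_i,i)=\langle v,\;R_\theta^{\,j-i}v\rangle=\|v\|^2\cos\!\bigl(\theta(j-i)\bigr),
\]
a quantity depending on $i,j$ only. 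This is exactly the positional behavior already recorded for $\hpos$.

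Next I would pin down where this logit is maximized. With $\theta=\pi/n$ and indices $i,j\in\{0,\dots,n-1\}$, the angle $\theta\,|j-i|=\tfrac{\pi}{n}|j-i|$ lies in $[0,\pi)$, an interval on which $\cos$ is strictly decreasing; hence $L(x_n,n,x_i,i)$ is a strictly decreasing function of $|j-i|$ and is uniquely maximized at $i=j$. This is precisely where the restriction on the angle enters: as noted after Theorem~\ref{lemma:positional_task}, once $\theta$ exceeds $2\pi/n$ the cosine can revisit its maximal value at some $i\neq j$, so the maximum need no longer be attained at a single position. Consequently the attention weights, $w_i\propto\exp\!\bigl(L(x_n,n,x_i,i)\bigr)$, satisfy $w_j>w_i$ for every $i\neq j$.

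Finally I would convert the attention profile into the model's answer. Since the value map is the identity and the activation is the projection $\pi_1$, the output at the last position is $a_n=\sum_{i} w_i\,\onehot(\sigma_i)$, whose $\sigma$-th coordinate is the total attention mass placed on positions carrying the symbol $\sigma$; these coordinates are non-negative and sum to $1$, and $\toymodeloh(\spos)=\argmax_\sigma (a_n)_\sigma$. The one remaining gap — and the step I expect to be the main obstacle — is that a symbol repeated across several positions other than $j$ could in principle accumulate more mass than the single weight $w_j$, so the uniqueness of the top \emph{individual} weight does not yet settle the $\argmax$. I would close this by exploiting the freedom in the norm of $v$: replacing $v$ by $cv$ scales every logit by $c^2$ without altering their ordering, and since there are only finitely many positions and the top logit strictly exceeds the second largest, a sufficiently large $c$ forces $w_j>\tfrac12$. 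Then $(a_n)_{\sigma_j}\ge w_j>\tfrac12$, while every other coordinate is at most $1-(a_n)_{\sigma_j}<\tfrac12$; hence $\sigma_j$ is the unique maximizer and $\toymodeloh(\spos)=\sigma_j=\fpos(\spos)$, as required.
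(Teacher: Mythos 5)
Your proof follows the same route as the paper's: compute the logits as $\cos(\theta(j-i))$, observe that with $\theta=\pi/n$ the cosine is strictly decreasing in $|j-i|$ on the relevant range so the unique maximizer is $i=j$, and then read off the answer from $a_n=\sum_i w_i\,\onehot(\sigma_i)$. The one place where you go beyond the paper is the final step, and you are right to flag it: the paper's proof passes directly from ``$w_j$ is the largest individual weight'' to ``the $\sigma_j$-coordinate of $a_n$ is the largest,'' which is not automatic, since the coordinate of a symbol repeated at several positions $i\neq j$ is a \emph{sum} of weights and can exceed $w_j$. Indeed, with the construction as literally stated the logits lie in a bounded interval of length at most $2\|v\|^2$, so the ratio of the largest to the smallest attention weight is bounded by a constant, and a wrong symbol occupying enough positions would accumulate more mass than $w_j$; the lemma as stated would then fail on such inputs. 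Your repair --- rescaling $v$ to $cv$, which scales all logits by $c^2$ without changing their ordering, so that for $c$ large enough the softmax forces $w_j>\tfrac12$, after which the $\sigma_j$-coordinate exceeds $\tfrac12$ and every other coordinate is below $\tfrac12$ --- is correct and closes the gap; since Theorem~\ref{lemma:positional_task} is existential, adjusting the construction in this way is legitimate. In short, your argument is the paper's argument plus a necessary patch to its last step.
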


\begin{proof}
First, we see that logits are maximized at position $j$.
    \[
    \text{arg}\max_{i\in[n-1]} L(x_n(j),n,x_i,i) = \text{arg}\max_{i\in[n-1]} \cos(\theta(j-i)) = j.
    \].

    Therefore, the biggest attention weight is $w_{n}^{j}$, i.e $$\max\{w_{n}^{i}\}_{i=1}^{n} = w_{n}^{j}.$$

Now, given that the value function $\val = id$, then we have that $$a_n = \sum_{i=1}^{n} w_{n}^{i}x_i,$$

which implies that $\arg\displaystyle\max_{\sigma_i\in\Sigma} \textsc{Softmax}(a_n)_i = \onehot^{-1}(x_j) = \sigma_j$ concluding the proof.
\end{proof}

\subsection{Proof of theorem~\ref{theo:symbolic_task}}\label{proof:symbolic_task}

Suppose there exist an input $\ssym = (s_1,...,s_n)$, an attention head $\Hcal=(Q,K,U,id,\pi_{1})$ that acts positionally on $\bar{x} = \onehot(\ssym(\Hcal))$ and $\toymodeloh=\fsym$. Now, by defining the function $$D(x_n,n,x_j,j) = \displaystyle\frac{\exp(L(x_n,n,x_j,j))}{\displaystyle\sum_{k\leq n}\exp(L(x_n,n,x_k,k))},$$
    we can express the attention vector $a_n(\ssym(\Hcal))$ as $$a_n(\ssym(\Hcal))=\displaystyle\sum_{j\leq n}D(x_n,n,x_j,j)x_j.$$

    We also have that \begin{align}\toymodeloh(\ssym) = & \arg\max_{\sigma_i\in \Sigma}\textsc{Softmax}(\pi_1(a_n, x_n))_{i} \\ 
     =& \onehot^{-1}(\arg\max_{x_j}D(x_n,n,x_j,j))\\ =& s_{i^{*}}
   \end{align}
   by exploiting the fact that $\{x_j\}_{j=1}^{n}$ is an orthonormal basis and $\pi_1(a_n,x_n) = a_n$. In this case, $i^{*}$ is the position of the input 
    $\ssym(H)$ where the answer appears. Since $\toymodeloh$ solves $\fsym$, there is a unique vector $x_{i^{*}}$ that maximizes the output distribution.  Therefore, there exists a position $k<n$ with $k\not= i^{*}$ such that $$D(x_n,n,x_k,k)<D(x_n,n,x_{i^{*}},i^{*}).$$
    Now, applying the positional behavior of $\Hcal$, we get that $$D(x_n,n,x_{i^{*}}, i^{*}) = D(x_n,n,x_{k},i^{*}) \quad \text{and} \quad D(x_n,n,x_{k}, k) = D(x_n,n,x_{i^{*}},k).$$

    Finally, consider a permutation $\pi_{i^{*}\to k}\in S^{n-1}$ that just interchange $s_{i^{*}}$ with $s_k$. We have that 
    $$a_n(\pi_{i^{*}\to k}(\ssym(\Hcal)) = \displaystyle\sum_{j\not= i^{*}}D(x_n,n,x_j,j)x_j + D(x_n,n,x_k,i^{*})x_k,$$
    which implies that 
    \begin{align}\toymodeloh(\pi_{i^{*}\to k}(\ssym)) = & \arg\max_{\sigma_i\in \Sigma}\textsc{Softmax}(\pi_1(a_n, x_n))_{i} \\ 
     =& \onehot^{-1}(\arg\max_{x_j}D(x_n,n,x_j,j))\\ =& s_{k}
   \end{align}
    And therefore,
    $$\toymodeloh(\ssym(\Hcal)) \not= \toymodeloh(\pi_{i^{*}\to k}(\ssym(\Hcal))),$$

    but $\fsym(\ssym(\Hcal)) = \fsym(\pi_{i^{*}\to k}(\ssym(\Hcal)))$ which is a contradiction.

\subsection{Proof of theorem~\ref{lemma:symbolic_task}}\label{app:lemma_sym}

Fix a sequence length \(n > 0\) and a vocabulary $\Sigma = \{\sigma_1\#,\sigma_1\#1,\sigma_1\#2,...,\sigma_m\#k\}$ with $m,k>0$. We want a matrix $\ksym$ such that for each symbol $\sigma_i$ there exist a vector $$v_{\sigma_i} = \ksym\onehot(\sigma_i\#\alpha)$$ for any $\alpha\in\{1,...,k\}$ and $\ksym\onehot(\sigma_i\#) = 0$ for any $\sigma_i$. 

On the other hand, we want a matrix $\qsym$ such that
\[
\qsym x_{n}(\sigma_j\#) =q_n(\sigma_j\#) = R_\theta^{\,l - n} v_{\sigma_j}.
\]
In essence, we want that vectors $v_{\sigma_i}\in\mathbb{R}^2$ codify all symbols $s\in\Sigma$ such that $\sigma_i$ appears in $s$. In general, this can be done for $\theta<\frac{2\pi}{m}$ in order to avoid overlapping in the vectors $v_{\sigma_i}$ after the rotation with $R_{\theta}$. For example, we can fix the vector $v_{\sigma_1} = (1,0)^\top$ and an angle $\beta=\frac{2\pi}{m}$. Then, for $i\in[2,...,m]$ we have that $v_{\sigma_i} = R_{\beta}^{i}v_{\sigma_1}$.

Having such matrices for a fixed $\theta$, we get a logits function defined as 
\begin{align}
L(\onehot(\sigma_{j}\#) ,n,\onehot(\sigma_i\#\alpha),t)\;=\;&x{_t}^{T}\ksym^\top R_\theta^{\,n - t} \qsym x_n
  \\ \;=&\; v_{\sigma_i}^\top R_\theta^{\,n - t} q_n(\sigma_{j}\#) 
  \\ \;=&\; v_{\sigma_i}^\top R_\theta^{\,l - t} v_{\sigma_j}
  \end{align}
for $t = 0, \dots, n-1$.

We refer to the attention head using this logits function as $\hsymt$. Notice that when \(\theta = 0\), $\hsymt$ uses the \emph{NoPE} (No Positional Encoding) since the operator \(R_\theta\) becomes the identity. In consequence, $\hsym^{0}$ is purely symbolic for any input.  

In the case that $\theta >0$, \(R_\theta\) introduces a rotation that perturbs the query vectors.  
To maintain symmetry of the inner product structure, the vectors must be centered in the middle of the context. This requirement naturally leads to the choice $l = \frac{n-1}{2}$. However, in this case the $\hsymt$ does not act symbolically given that logits depends on the positions.

To formalize our construction for $\theta=0$, we prove the following lemma.

\begin{lemma}
    For the head $\hsym^{0}$ and any $\ssym$ with $|\onehot(\ssym)|=n$ then $$\toymodeloh(\ssym) = \fsym(\ssym)$$
\end{lemma}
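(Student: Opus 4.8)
The plan is to exploit the fact that the choice $\theta = 0$ collapses the rotation $R_\theta$ to the identity, so that the logits of $\hsym^{0}$ become position-independent and the whole problem reduces to a clean maximum-inner-product retrieval. First I would substitute $R_0 = I$ into the logit formula derived in the construction and use that $\ksym$ kills the trailing query token, $\ksym\onehot(\sigma_j\#)=0$. This yields, for every key position $t \le n$,
\[
L\bigl(\onehot(\sigma_j\#),\, n,\, x_t,\, t\bigr) = \bigl\langle \ksym x_t,\, \qsym\,\onehot(\sigma_j\#)\bigr\rangle = v_{\sigma_{i_t}}^\top v_{\sigma_j},
\]
with the convention that the trailing token contributes $\langle 0, v_{\sigma_j}\rangle = 0$. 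Since this value depends only on the symbol carried by position $t$ and not on $t$, it confirms the claim made just before the lemma that $\hsym^{0}$ acts symbolically, and it is the only place where $\theta = 0$ is used in an essential way.

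Next I would identify the unique maximizer of the logit sequence. By construction the $v_{\sigma_i}$ are distinct unit vectors with $v_{\sigma_i}^\top v_{\sigma_j} = \cos\!\bigl(\beta(i-j)\bigr)$, so the inner product attains its maximal value $1$ exactly when $\sigma_{i_t} = \sigma_j$, and is strictly smaller for every mismatched symbol (and equals $0$ at the trailing token). Because the task guarantees that $\sigma_j$ occurs exactly once before the query, namely at position $j$, the logits, and hence the attention weights $w_n^t$, are maximized at the single position $t = j$.

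I would then read off the output exactly as in the proof of Theorem~\ref{theo:symbolic_task}. Since $\val$ is the identity and the activation is the projection $\pi_1$, the last representation is the convex combination $a_n = \sum_{t \le n} w_n^t\, x_t$, and the model answers $\arg\max_{\sigma}\softmax(a_n)_\sigma$, which by monotonicity of the softmax is the coordinate of $a_n$ carrying the largest mass. Using that the one-hot embeddings $\{x_t\}$ form an orthonormal basis, the coordinate indexed by $\sigma_j\#i_j$ equals $w_n^j$, whereas every other coordinate is fed only by positions whose logit is strictly below $1$; hence the argmax is $x_j = \onehot(\sigma_j\#i_j)$, giving $\toymodeloh(\ssym) = \sigma_j\#i_j = \fsym(\ssym)$.

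The step I expect to be the main obstacle is this last one: I must guarantee that the single peaked weight $w_n^j$ actually dominates the \emph{aggregated} mass landing in each output coordinate, even though a mismatched symbol could in principle repeat many times and accumulate weight. Under the distinct-token (orthonormal-basis) convention used throughout Section~\ref{sec:toy_models} this is immediate, since each coordinate then receives weight from a single position. In full generality, one must instead ensure a logit gap exceeding $\ln(n-1)$; as the lemma only asserts existence for each fixed $n$, I would, if needed, rescale the vectors $v_{\sigma_i}$ by a constant $R = R(n)$ so that the gap $R^2\bigl(1-\cos\beta\bigr)$ beats $\ln(n-1)$, forcing the attention distribution to be essentially one-hot at position $j$ and closing the argument.
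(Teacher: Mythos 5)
Your proof follows essentially the same route as the paper's: with $\theta=0$ the logits collapse to $v_{\sigma_{i_t}}^\top v_{\sigma_j}$, which is uniquely maximized at the single position carrying $\sigma_j$, and the output is then read off exactly as in Lemma~\ref{lemma:maximum_attention}. The only real difference is that you explicitly flag and patch the weight-aggregation subtlety (a repeated mismatched token accumulating mass in a single coordinate of $a_n$), which the paper's one-line proof passes over in silence; your rescaling of the $v_{\sigma_i}$ to force a logit gap exceeding $\ln(n-1)$ is a legitimate way to close that gap in the case where non-query tokens may repeat.
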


\begin{proof} 

Using the fact that 
    \[
    \text{arg}\max_{i\in[n-1]} L(x_n(\sigma_j),n,x_i,i) = \text{arg}\max_{i\in[n-1]} v_{\sigma_i}^\top v_{\sigma_j} = j.
    \]

    and the argument follows from the same idea of the proof of Lemma~\ref{lemma:maximum_attention}.
\end{proof}

Finally, by Theorem~\ref{theo:symbolic_task} we know that $\hsym^{0}$ can not be a positional head. Now we proceed to show that it is indeed a symbolic one.

\begin{lemma} 
    For every input $\ssym$ , the head $\hsym^{0}$ is symbolic on $\onehot(\ssym)$.
\end{lemma}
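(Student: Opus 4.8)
The plan is to verify the equivariance condition of Definition~\ref{def:act_pos_sym} directly, exploiting the fact that taking $\theta = 0$ degenerates the rotation $R_\theta$ into the identity and thereby removes every positional dependence from the logits.

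First I would recall the explicit logit function produced by the construction above: writing $x_t = \onehot(\sigma_i\#\alpha)$ for the key at position $t$ and $x_n = \onehot(\sigma_j\#)$ for the query, we have
\[
L(x_n, n, x_t, t) = v_{\sigma_i}^\top R_\theta^{\,n-t}\, v_{\sigma_j}.
\]
Specializing to $\theta = 0$ gives $R_0 = I$, hence $R_0^{\,n-t} = I$ for every $t$, and the logit collapses to $v_{\sigma_i}^\top v_{\sigma_j}$. This quantity depends only on the key symbol $\sigma_i$ (through $v_{\sigma_i}$) and on the query symbol $\sigma_j$, and is entirely independent of the position index $t$.

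Having isolated this independence, I would substitute it into the symbolic condition. For any permutation $\pi \in S_{n-1}$ and any position $t < n$,
\[
L(x_n, n, x_t, \pi(t)) = v_{\sigma_i}^\top v_{\sigma_j} = L(x_n, n, x_t, t),
\]
since neither expression involves the position argument. This is precisely the equivariance requirement, so $\hsym^{0}$ acts symbolically on $\onehot(\ssym)$ for every input $\ssym$. Equivalently, one can simply invoke the observation made earlier in the paper that any NoPE head---of which $\hsym^{0}$ is an instance, once $R_\theta$ becomes the identity---acts symbolically on every input.

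I do not expect any genuine obstacle here: the entire content is the substitution $R_0 = I$, after which the positional argument drops out of the logit by inspection. The only point worth stating carefully is that this independence holds \emph{uniformly} over all positions $t$, which is exactly what makes the equivariance hold for an arbitrary $\pi$ rather than merely for a single transposition.
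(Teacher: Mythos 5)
Your proposal is correct and follows essentially the same route as the paper's proof: both reduce to the observation that with $\theta=0$ the rotation is the identity, so the logit collapses to $v_{\sigma_i}^\top v_{\sigma_j}$, which is independent of the position argument, and the equivariance condition holds trivially. The paper phrases this by comparing two arbitrary positions $k_1,k_2$ while you compare $t$ with $\pi(t)$, but these are the same argument.
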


\begin{proof} For $i,j\in[m],y=\onehot(\sigma_{i}\#\alpha)$ and $,k_1, k_2\in 
[n-1]$
    \[
    L(x_n(\sigma_j\#),n,y,k_1) = v_{\sigma_i}^\top v_{\sigma_j} = L(x_n(\sigma_j\#),n,y,k_2).
    \]    
\end{proof}

\subsection{A counter example with MLP layer}\label{app:counter_example}

\paragraph{Counter Example.}
Fix the sequence length to be $n = 3$ and consider a vocabulary consisting of symbols $\{a, b\}$ and integers $\{0,1\}$.  
We take as input a sequence $\ssym$ with associated embedded soft-vectors $\overline{x}$,
\[
\ssym = (\sigma_1 \# i_1,\; \sigma_2 \# i_2,\; \sigma\#), \quad \overline{x} = \onehot(\ssym) = (y_1, y_2, x).
\]

\paragraph*{Attention Head Construction.}

Consider a head $H$ with logit function $L$ satisfying:
\[
L(x,3,y_1,1) = \cos(2\pi) =  1, \quad L(x,3,y_2,2) = \cos(\pi) = -1, \quad L(x,3,x,3) = 0.
\]
This can be achieved by choosing RoPE with $\theta = \pi$, and matrices $Q, K$ such that
  \[
  Qx = Ky_j = (1,0,0,0, 0,0) \;\text{(for positions $j=1,2$)}, \quad
  Kx = (0,0,0,0, 0,0)\;\text{(at the last token)}.
  \]

With $\val = id$, the attention vector at the last position is
\[
a_3 = \bigl(\gamma(a\#1,\overline{x}),\, \gamma(a\#2,\overline{x}),\, \gamma(b\#1,\overline{x}),\, \gamma(b\#2,\overline{x}),\, \gamma(a\#,\overline{x}),\, \gamma(b\#,\overline{x})\bigr),
\]
where $\gamma$ is defined as follows
  \[
  \gamma(\sigma_j\#,\overline{x}) =  \lambda_3  \cdot \mathbb{I}(\sigma = \sigma_j), \quad  \gamma(\sigma_j\#i_j,\overline{x}) =
  \begin{cases}
    \tfrac{e+e^{-1}}{\Sigma} & \text{if $\sigma_j\#i_j$ occurs twice},\\[6pt]
    0   & \text{if $\sigma_j\#i_j$ never occurs},\\[6pt]
    \lambda_j & \text{if $\sigma_j\#i_j$ occurs exactly once}.
  \end{cases}
  \]
  where $\lambda_1 = \frac{e}{\Sigma}$, $\lambda_2 = \frac{e^{-1}}{\Sigma}$ and $\lambda_3 = \frac{1}{\Sigma}$, with $\Sigma = e+e^{-1}+1$.

\paragraph*{Final Output Function.}

Define the function $F(a_3) \in \mathbb{R}^6$ by
\[
F(a_3)_{\sigma_j\#i_j} = \textsc{And}(\textsc{Eq}(\gamma(\sigma_j\#,\overline{x}), \lambda_3), \textsc{Or}(\textsc{Eq}(\gamma(\sigma_j\#i_j,\overline{x}), \lambda_1), \textsc{Eq}(\gamma(\sigma_j\#i_j,\overline{x}), \lambda_2)))
\]
and $F(a_3)_{\sigma_j\#} = 0$. The logical operations $\textsc{And}, \textsc{Or}$ and $\textsc{Eq}$ can be implemented by an MLP with ReLU activations and fixed precision with sufficient depth (see Lemma A.1 in \cite{yao2021self}).

\subsection{Proof of corollary~\ref{cor:mixed_task}}\label{proof:mixed_task}

By Lemma~\ref{lemma:monotonic_logits} we know that if $\Hcal$ acts symbolically on an input $\smix(\Hcal) = (\sigma_0\#i_0, \sigma_1\#i_1, \ldots, \sigma_{n-1}\#i_{n-1}, \sigma_j\#)$, then for every permutation of $\smix(\Hcal)$ the answer of the one-head decoder-only transformer model $\toymodeloh$ using $\Hcal$ is invariant. However, the answer of $\fmix$ must change if we permute both appearances of $\sigma_j$ in $\smix(H)$ which means $\toymodeloh$ can not solve $\fmix$.

On the other hand, by using the same argument as the proof~\ref{proof:symbolic_task}, we know that if $\Hcal$ acts positionally on $\smix$, then we can force its model's output to change for permutations of $\smix$ where $\fmix$ is invariant.

\subsection{Proof of Theorem~\ref{theo:induction_task}}
In this case we mix the ideas we used to define heads $\hpos$ and $\hsymt$, by using a RoPE with 2 frequencies. First, fix the vocabulary $\Sigma = \{\sigma_1\#,\sigma_1\#1,\sigma_1\#2,...,\sigma_m\#k\}$ with $m,k>0$. In the same way as we did for $\hsymt$ we can define vectors $v_{\sigma_i}\in\mathbb{R}^{2}$ that codify the presence of $\sigma_i$ in a symbol of $\Sigma$. On the other hand, as we did for $\hpos$, we set a constant vector $v=(1/2,1/2)^\top$. Now, we define a matrix $K\in\mathbb{R}^{4\times|\Sigma|}$ such that $$\kmix\onehot(\sigma_1\#\alpha) = [v_{\sigma_{i}}, v],$$

for any $\alpha\in\{1,...,k\}$, while $\onehot(\sigma_i\#) \kmix = 0$ for any $i\in\{1,...,m\}$.

For the query matrix $\qmix$, we define a query matrix such that $$\qmix \onehot(\sigma_j\#) =q_n(\sigma_j\#) = [R_\theta^{\,l - n} v_{\sigma_j}, v].$$

Finally, for angles $\theta_1,\theta_2$, we get a logits function 
\begin{align}
\onehot(\sigma_i\#\alpha) \kmix^\top R_{\Theta}^{n-i} \qmix \onehot(\sigma_j\#) &\;=\;  v_{\sigma_i}^\top R_{\theta_1}^{\,n - i} q_n(\sigma_j) + v^\top R_{\theta_2}^{n - i} v \\ &\;=\; v_{\sigma_i}^\top R_{\theta_1}^{\,l - i} v_{\sigma_j} +  \cos\left(\theta_2(n - i)\right) 
\end{align}

If we set $\theta_1 = 0$ (NoPE), then our logits function becomes

$$L(\onehot(\sigma_j\#), n,\onehot(\sigma_i\#\alpha), k) = v_{\sigma_i}^\top  v_{\sigma_j} +  \cos\left(\theta_2(n - k)\right),$$ with $0< \theta_2 n < 2\pi / (n\lvert \Sigma \rvert)$ properly chosen to define our solution head $\hmix$.



\begin{lemma}
    For $\hmix$ and any $\smix$ with $|\onehot(\smix)|=n$ then $\toymodeloh(\smix) = \fmix(\smix)$. 
\end{lemma}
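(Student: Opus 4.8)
The plan is to show that, for the head $\hmix$ with $\theta_1=0$ and $\theta_2$ chosen as prescribed, the logit sequence produced when querying the final token $\sigma_j\#$ attains its unique maximum exactly at the position of the \emph{last} occurrence of $\sigma_j$. Write the matching positions of $\sigma_j$ as $j_1<j_2$ (by hypothesis there are at least two, with $i_{j_1}\neq i_{j_2}$). From the construction, the logit at key position $k\le n$ is
\[
L_k \;=\; v_{\sigma_{i_k}}^{\top} v_{\sigma_j} \;+\; \cos\!\bigl(\theta_2(n-k)\bigr),
\]
with the self-position $k=n$ contributing $L_n=0$ since $\kmix\,\onehot(\sigma_j\#)=0$. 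Once $\arg\max_k L_k = j_2$ is established, the conclusion $\toymodeloh(\smix)=\sigma_j\#i_{j_2}=\fmix(\smix)$ follows verbatim from the final step of Lemma~\ref{lemma:maximum_attention}: with $\val=\mathrm{id}$ and the one-hot embedding, the attention vector is a convex combination of the orthonormal key vectors, so the decoded token is the one at the position of maximal attention weight. Here the hypothesis $i_{j_1}\neq i_{j_2}$ is exactly what guarantees that the last occurrence carries a one-hot vector distinct from the earlier one, so that retrieving $j_2$ is well defined.

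I would split the analysis of $\arg\max_k L_k$ into two regimes. First, the symbolic term: by the $v_{\sigma_i}$ construction (unit vectors spaced by $\beta=2\pi/m$), one has $v_{\sigma_{i_k}}^{\top}v_{\sigma_j}=1$ precisely when $\sigma_{i_k}=\sigma_j$, and $v_{\sigma_{i_k}}^{\top}v_{\sigma_j}\le\cos(2\pi/m)<1$ otherwise. Second, the positional term: because the prescribed bound gives $\theta_2(n-1)<\theta_2 n<2\pi/(n|\Sigma|)\le\pi$, the map $k\mapsto\cos(\theta_2(n-k))$ is strictly decreasing in $n-k$ on the relevant range, so among the matching positions its value is strictly largest at $k=j_2$. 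Hence among matching positions $L_{j_2}$ is the strict maximum, and $L_{j_2}\ge 1+\cos(\theta_2(n-1))>0=L_n$.

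The crux is to show that matching positions dominate non-matching ones, i.e.\ that the positional tie-breaker never overrides the symbolic selection. Here I bound $L_{j_2}\ge 1+\cos(\theta_2(n-1))$ from below and any non-matching logit by $L_k\le\cos(2\pi/m)+1$ from above, so it suffices to verify $\cos(\theta_2(n-1))>\cos(2\pi/m)$. Since both angles lie in $[0,\pi)$ and cosine is decreasing there, this reduces to the single inequality $\theta_2(n-1)<2\pi/m$, which is guaranteed by the choice of $\theta_2$ together with $|\Sigma|\ge m$ (indeed $\theta_2(n-1)<2\pi/(n|\Sigma|)\le 2\pi/m$). This is precisely the step where the quantitative constraint $0<\theta_2 n<2\pi/(n|\Sigma|)$ is essential, and I expect it to be the main obstacle: the angle $\theta_2$ must be simultaneously large enough to break ties among equal symbols, yet small enough that its swing stays below the symbolic resolution gap $1-\cos(2\pi/m)$. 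Combining the two regimes yields that $L_{j_2}$ is the global strict maximum, completing the argument.
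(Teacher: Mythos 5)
Your proposal is correct and follows essentially the same route as the paper's proof: show that the symbolic term forces the maximizing logit to lie among the matching positions $\{j_1,j_2\}$ (using the quantitative constraint on $\theta_2$ so that the positional swing stays below the symbolic gap), then use monotonicity of $\cos(\theta_2(n-k))$ to select $j_2$, and conclude via the argmax-decoding argument of Lemma~\ref{lemma:maximum_attention}. The only differences are cosmetic (you bound the gap by $\cos(2\pi/m)$ where the paper uses $\cos(2\pi/|\Sigma|)$, and you explicitly handle the self-position logit $L_n=0$).
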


\begin{proof} Notice that:
    \[
    \theta_2 < 2\pi / (n\lvert \Sigma \rvert) \Rightarrow  \cos(2\pi / \lvert \Sigma \rvert) < \cos\left(\theta_2 n\right)
    \]
    and $v_{\sigma_i}^\top  v_{\sigma_j} < \cos(2\pi / \lvert \Sigma \rvert)$ for $\sigma_i \neq \sigma_j$. Then 
    \begin{align*}
    v_{\sigma_i}^\top  v_{\sigma_j}  + \cos\left(\theta_2(n - k)\right) &\leq \cos(2\pi / \lvert \Sigma \rvert) + \cos\left(\theta_2(n - k)\right) \\
    &< \cos\left(\theta_2 n\right) + \cos\left(\theta_2(n - k)\right) \\
    &\leq \cos\left(\theta_2 n\right) + 1 \\
    &\leq \cos\left(\theta_2 (n-j_p)\right) + 1 \\
    &= \cos\left(\theta_2 (n-j_p)\right) + v_{\sigma_{j_p}}^\top v_{\sigma_j}
    \end{align*}
    for all $k \in [n-1] \setminus \{j_1, j_2\}$ with $p=1,2$, where $v_{\sigma_{j_1}} = v_{\sigma_{j_2}} = v_{\sigma_j}$.
    
    Using this and $j_1 < j_2$, we can conclude:
    \begin{align*}
        \arg\max_{i\in[n-1]} L(x_n(j),n,x_i,i) &= \arg\max_{i\in[n-1]} \left(v_{\sigma_i}^\top  v_{\sigma_j} +  \cos\left(\theta_2(n - i)\right) \right)\\
        &= \arg\max_{i\in\{j_1, j_2\}} \left(v_{\sigma_i}^\top  v_{\sigma_j} +  \cos\left(\theta_2(n - i)\right) \right)\\
        &= \arg\max_{i\in\{j_1, j_2\}} \cos\left(\theta_2(n - i)\right)\\
        &= j_2.
    \end{align*}    

    And the result follows from the fact that the most attended symbol of the input is the output for this architecture as seen in the proof of Lemma~\ref{lemma:maximum_attention}.
\end{proof}

	\newpage
    
	\section{Appendix for section~\ref{sec:ushape}}
	
\subsection{Specialized heads for index, information retrieval and partial induction tasks}
To further disentangle the role of positional and symbolic information, we train from scratch a family of minimal transformer models, each consisting of a single layer, a single attention head, and a single RoPE frequency. Each model is trained independently with a different frequency choice, and performance is evaluated on two tasks: an \emph{index task} and an \emph{information retrieval task} (see Section~\ref{sec:toy_models} for precise definitions).  

All experiments are carried out on sequences of length $n=32+1$, where $32$ tokens form the context and the additional token is the query. Both tasks are defined over mixed symbol–integer sequences. In the \emph{information retrieval task}, the model must retrieve the integer associated with the query symbol, while in the \emph{index task}, the model must instead return the symbol associated with the query integer (which directly encodes a position). We fix a vocabulary of $16$ symbols and $32$ possible integers.  

We sweep over a range of base angles for the RoPE:  
\[
[0.0,\; 0.01,\; 0.1,\; 0.25,\; 0.4,\; 0.5,\; 0.8,\; 1.0,\; 1.5,\; 2.0].
\]
Here, $0.0$ corresponds to a NoPE model (no rotary position encoding), while $2.0$ corresponds to a RoPE configuration that completes one full angular lap over the $n=33$ tokens of the sequence. More generally, the effective frequency is given by
\[
\theta = \frac{\pi}{n} \cdot \text{base\_angle},
\]
and the number of laps is defined as $n\theta / 2\pi$.  

Training is performed for $100$ epochs with batch size $64$, using $40{,}960$ training samples and $20{,}480$ validation samples. Figure~\ref{fig:positional_symbolic_per_angle} reports the resulting accuracies as a function of frequency. The results show a clear tension: index task and information retrieval achieve their best performance at different frequency ranges, indicating that the model’s inductive bias over RoPE frequencies favors one type of task at the expense of the other.  



\subsection{Proof of Theorem \ref{theo:shapes}: U-shape and inverted U-shape.} 

\paragraph{U-shape accuracy.}Let us start with the statement $w_{\max}$ is $U$-shaped for $\hpos$. As established in the proof of Theorem \ref{lemma:positional_task}, the logits of $\hpos$ are of the form:
\[L(x_n, n, x_i, i) = \cos(\theta(j-i)),\]
where $\theta < \frac{2\pi}{n}$ and $j$ is the location of the answer. Thus, we obtain the following expression for $w_{\max}(j)$:
\[w_{\max}(j) = \frac{\exp(1)}{\exp(\cos(\theta(j - 1)) + \ldots + \exp(\cos(\theta(j - n)))}.\]
What happens when we go from $w_{\max}(j)$ to $w_{\max}(j +1)$? We add $\exp(\cos(\theta j))$ and delete $\exp(\cos(\theta(j - n)))$ from the denumerator. That is, if $\cos(\theta j) > \cos(\theta(j - n))$, the value of $w_{\max}(j)$ decreases, and otherwise, it stays the same or increases. 

Since $\theta < \frac{2\pi}{n}$, we have $\cos(\theta j) > \cos(\theta(j - n))$ if and only if $|j| < |j - n|$. Hence, starting from $j = 1$ when $|j|$ is much smaller than $|j - n|$, the function $w_{\max}$ is decreasing, having a local maximum at $j = 1$. Wheh $j$ is about $n/2$, the absolute values $|j|$  and $|j - n|$ become equal and then the first one becomes larger, and afterwards the function $w_{\max}$ starts increasing, reaching another local maximum at $j = n - 1$ in the form of a $U$-shape, as required.

\paragraph{Inverted U-shape accuracy.} 
We consider a simple RoPE head with one angle, equal to $\pi/n$, as the one used for the empirical results depicted in Figure \ref{fig:u_shapes} where the inverted U-shape is observed in the retrieving (symbolic) task. We will rigorously prove this is the case in a simplified scenario. The simplified task will be the following: we have two possible tokens $\tau_0,\tau_1$ and we consider prompts $x_1\dots x_n$ consisting of $n$ copies of $\tau_0$, except at two positions where it has $\tau_1$. These two positions are the query ($j=n$) and some $\ell\le m=n-1$. The task for the attention head is to detect the requested token in position $\ell$, which means that we expect a distribution of attention wights $\alpha_j$ ($j=1,...,m$) with small values except for a peak at $j=\ell$. This must hold for every prompt of this form.

As we have empirically demonstrated, RoPE cannot be expected to behave symbolically for this choice of angle (it is too big) so we accept a weak solution of the symbolic task: the logit sequence $\lambda_1,...,\lambda_m$ is negative, except for a positive value when we reach the desired token $\tau_1$. We also want that the logits are bounded away from $0$ (positive or negative) as much as possible.
Since the query position is fixed, if $q=Qx_n\in\mathbb{R}^2$ is the query vector, then we can write the logit function as
$$
L(x_n,n,x,j)=L(x,j)=xKU^jq
$$
($U$ is the RoPE matrix for angle $\pi/n$). For any embedding vector $x\in \mathbb{R}^d$ let $x'=xK\in\mathbb{R}^2$. Let $z_i$ be the embedding vector of $\tau_i$ for $i=0,1$. We make the simplifying assumption that $q$, $z_0'$ and $z_1'$ have norm $1$ ---the discussion can be adapted to a more general case, but we choose to keep the computations simple while still exhibiting the main features of the inverted U-shape phenomenon.

Let $\gamma(x)$ be the angle between $q$ and the vector $x'=xK$ for  $x\in\mathbb{R}^d$. Then the formula for the RoPE logit and the assumption $\|q\|=1$ gives
$$
L(x,j)= \|x'\| \cos(\pi j/n - \gamma(x)).
$$
In particular, our assumption that this logit function gives an approximate (in the sense of sign) solution to the symbolic task described above, with logits bounded away from $0$ as much as possible, gives
$$
\gamma(z_0)=-\pi/2,\quad \gamma(z_1)= \pi/2
$$
so that
$$
L(z_i,j)=\begin{cases}
\cos(\pi j/n +\pi/2) & \mbox{ if }i=0\\
\cos(\pi j/n -\pi/2) & \mbox{ if }i=1
\end{cases}
$$
(here we used the simplifying assumption $\|z_i'\|=1$). We note the symmetry
$$
L(z_1,j)=-L(z_0,j).
$$
Let 
$$
\nu_i(j)=\exp(L(z_i,j))
$$
and note that $\nu_0(\ell)\nu_1(\ell)=1$. Define
$$
S= \sum_{j=1}^{n-1} \nu_0(j).
$$
Recall that the position of the sought token is $\ell$. Then the attention weight at $j=\ell$ is maximal (due to the sign of the logits --this is the only positive logit for the given prompt) and it is
$$
w_{\max}(\ell)=\frac{\nu_1(\ell)}{ S + \nu_1(\ell) - \nu_0(\ell)}.
$$
This is the only peak of the attention weights for the prompt with sought token at position $\ell$.

Finally, we can give a rigorous proof of the inverted U-shape phenomenon in this setting.

\begin{theorem}[Inverted U-shape] Assume $n\ge 5$. As a function of $\ell$, the sequence $w_{\max}(\ell)$ is increasing for $0<\ell<n/2$ and decreasing for $n/2<\ell <n$.
\end{theorem}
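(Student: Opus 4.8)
The plan is to collapse the entire $\ell$-dependence of $w_{\max}$ into a single scalar variable, reducing the statement to the monotonicity of one explicit one-variable function composed with $\sin(\pi\ell/n)$. First I would set $a=\nu_1(\ell)$. Since $L(z_1,\ell)=\cos(\pi\ell/n-\pi/2)=\sin(\pi\ell/n)$, we have $a=\exp(\sin(\pi\ell/n))$, and the relation $\nu_0(\ell)\nu_1(\ell)=1$ gives $\nu_0(\ell)=1/a$. Because $0<\ell<n$ forces $\sin(\pi\ell/n)>0$, we always have $a>1$. Substituting into the given formula and clearing $1/a$ yields
\[
w_{\max}(\ell)=\frac{a}{S+a-1/a}=\frac{a^2}{Sa+a^2-1}=:g(a),
\]
where $S=\sum_{j=1}^{n-1}\nu_0(j)$ is a \emph{constant} independent of $\ell$. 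For $a>1$ the denominator $Sa+a^2-1$ is positive, so $g$ is smooth on the relevant range.

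Next I would differentiate. A direct computation gives
\[
g'(a)=\frac{2a(Sa+a^2-1)-a^2(S+2a)}{(Sa+a^2-1)^2}=\frac{a(Sa-2)}{(Sa+a^2-1)^2},
\]
so that, for $a>0$, the sign of $g'(a)$ equals the sign of $Sa-2$. Everything therefore reduces to the uniform lower bound $Sa>2$ over all admissible positions $\ell\in\{1,\dots,n-1\}$.

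This uniform bound is the main obstacle, and the key idea that makes it clean is to pair the two \emph{extreme} terms of $S$ with the \emph{minimal} value of $a$, so that the exponentials cancel. Concretely, dropping all but the boundary terms $j=1$ and $j=n-1$ of $S$ and using $\sin(\pi(n-1)/n)=\sin(\pi/n)$ gives
\[
S\ge \nu_0(1)+\nu_0(n-1)=2\exp(-\sin(\pi/n)),
\]
with strict inequality for $n\ge 5$ since additional strictly positive terms remain. On the other hand $\sin(\pi\ell/n)\ge \sin(\pi/n)$ for every $\ell\in\{1,\dots,n-1\}$, hence $a=\exp(\sin(\pi\ell/n))\ge \exp(\sin(\pi/n))$. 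Multiplying these bounds, the exponentials cancel and we obtain
\[
Sa> 2\exp(-\sin(\pi/n))\cdot a\ge 2\exp(-\sin(\pi/n))\exp(\sin(\pi/n))=2
\]
for every admissible $\ell$. Consequently $g'(a)>0$ for all $a\ge \exp(\sin(\pi/n))$, i.e. $g$ is strictly increasing on the whole range of values taken by $a$.

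Finally I would assemble the pieces. Viewing $\ell$ as a real variable, $a(\ell)=\exp(\sin(\pi\ell/n))$ is strictly increasing on $(0,n/2)$ and strictly decreasing on $(n/2,n)$, because $\sin(\pi\ell/n)$ has exactly this behaviour on $(0,n)$ and $\exp$ is increasing. Since $w_{\max}(\ell)=g(a(\ell))$ with $g$ strictly increasing, the composition inherits precisely the monotonicity of $a(\ell)$: increasing for $0<\ell<n/2$ and decreasing for $n/2<\ell<n$. In particular this holds at the integer positions, which proves the theorem. The only delicate point is the cancellation argument above; I expect the derivative computation and the composition step to be routine.
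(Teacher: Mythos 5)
Your proposal is correct and follows essentially the same route as the paper: both reduce the problem to the sign of a one-variable derivative, which comes down to the condition $Sa-2>0$, i.e. $S>2\nu_0(\ell)$ (the paper phrases this via $f=1/w_{\max}=1+S\nu_0-\nu_0^2$ and the factor $S-2\nu_0$), and both establish it by pairing two summands of $S$ against the worst-case value of $\nu_0(\ell)$ at $\ell=1$ or $n-1$. Your explicit restriction of the monotonicity of $g$ to the range $a\ge\exp(\sin(\pi/n))$ actually handles the integer-versus-real-$t$ subtlety a bit more carefully than the paper's write-up, but the underlying argument is the same.
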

\begin{proof}
We can use the same formula defining $w_{\max}(\ell)$ on a real variable $t$ instead of the discrete variable $\ell$, and then restrict back to integers $\ell$.

Let $f(t)=1/w_{\max}(t)$. It suffices to prove that $f'(t)$ is negative for $0<t<n/2$ and positive for $n/2<t<n$.

From the equation $\nu_0(\ell)\nu_1(\ell)=1$ we have
$$
f(t) = 1 + S\nu_0(t) - \nu_0(t)^2. 
$$
Then
$$
f'(t) = S\nu'_0(t) - 2\nu_0'(t)\nu_0(t) = \nu'_0(t)\left(S-2\nu_0(t)\right).
$$
If $n\ge 5$ and $t=\ell$ is an integer, the parenthesis is strictly positive because we are substracting $2$ different sumands from $S$ (here we use $n$ is odd). On the other hand, we explicitly compute $\nu_0'(t)$:
$$
\nu_0'(t) = -\frac{\pi}{n}\nu_0(t)\sin(\pi t/n + \pi/2).
$$
As $\nu_0(t)>0$ always, the sign of the previous expression is negative for $\pi t/n\in (0,\pi/2)$ and positive for $\pi t/n\in (\pi/2 , \pi)$. The result follows.
\end{proof}

    \newpage
    
    \section{Additional results for realistic models}
    \subsection{Llama-3.2-1B-Instruct}

\begin{figure}[H]
\centering
\begin{subfigure}[t]{\linewidth}
    \centering
    \includegraphics[width=0.8\linewidth]{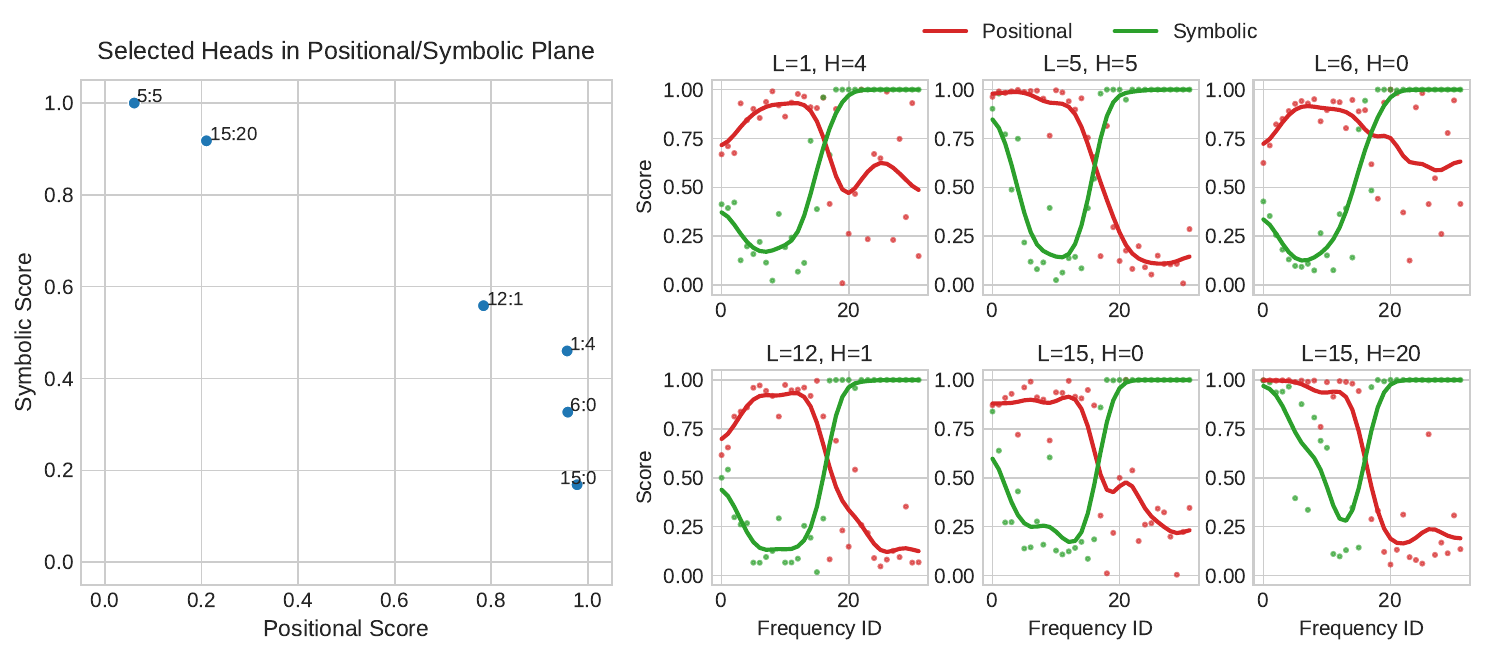}
    \caption{
        \textbf{Positional--Symbolic Plane across Frequencies.}
        Visualization of six attention heads from the \texttt{meta-llama/Llama-3.2-1B-Instruct} model.
        (Left) Each point represents a head in the positional–symbolic plane, labeled by its layer and head index.
        (Right) For the same heads, we plot their positional and symbolic scores as a function of RoPE frequencies.
    }
    \label{fig:positional_symbolic_plane_llama}
\end{subfigure}

\begin{subfigure}[t]{\linewidth}
    \centering
    \includegraphics[width=0.68\linewidth, trim=2.0cm 0 2.5cm 0, clip]{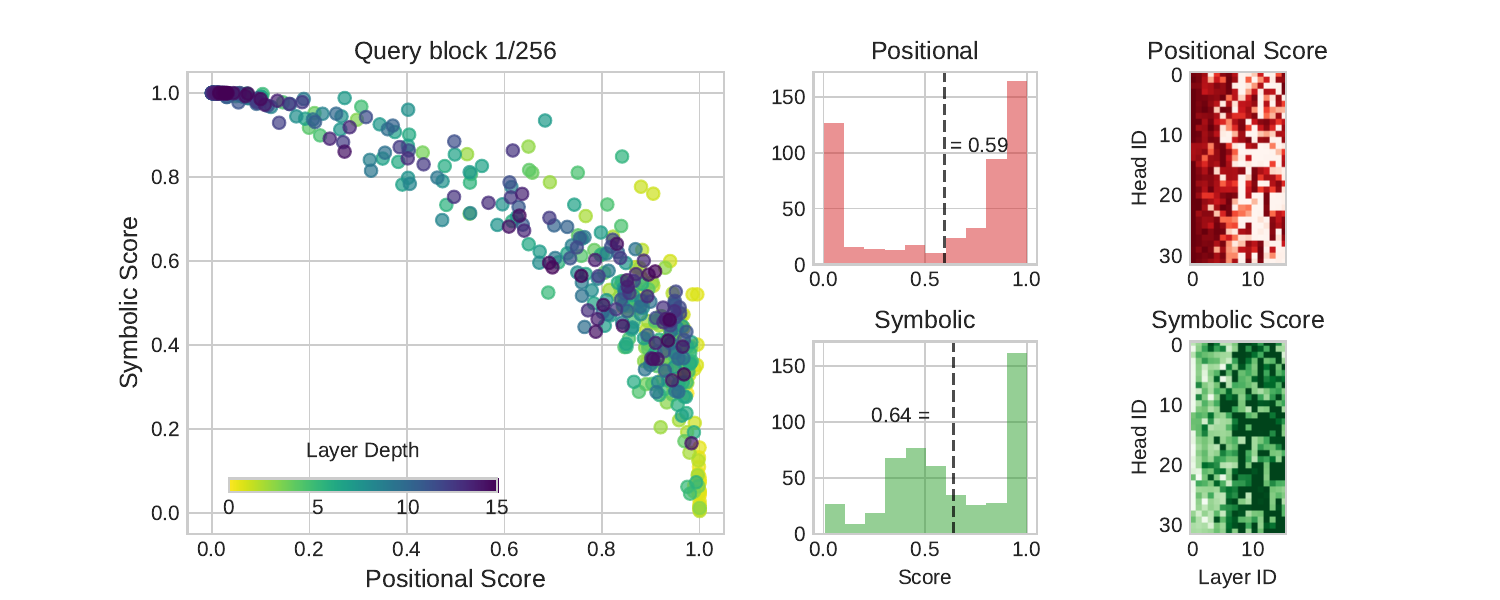}
    \caption{
        \textbf{All heads, fixed first block.}
        (Left) Each head in the positional–symbolic plane.
        (Middle) Histogram showing predominance of symbolic over positional behavior.
        (Right) Heatmaps of positional and symbolic scores for each head across layers.
    }
    \label{fig:global_pos_sym_llama_top}
\end{subfigure}

\caption{
    \textbf{Positional–Symbolic Plane across Heads.}
    16 Layers, 32 Heads per layer. Since the model uses a head dimension of 64, this corresponds to 32 RoPE frequencies.
}
\label{fig:global_pos_sym_combined_llama}
\end{figure}

\begin{figure}[H]
\ContinuedFloat
\centering
\begin{subfigure}[t]{\linewidth}
    \centering
    \includegraphics[width=0.7\linewidth]{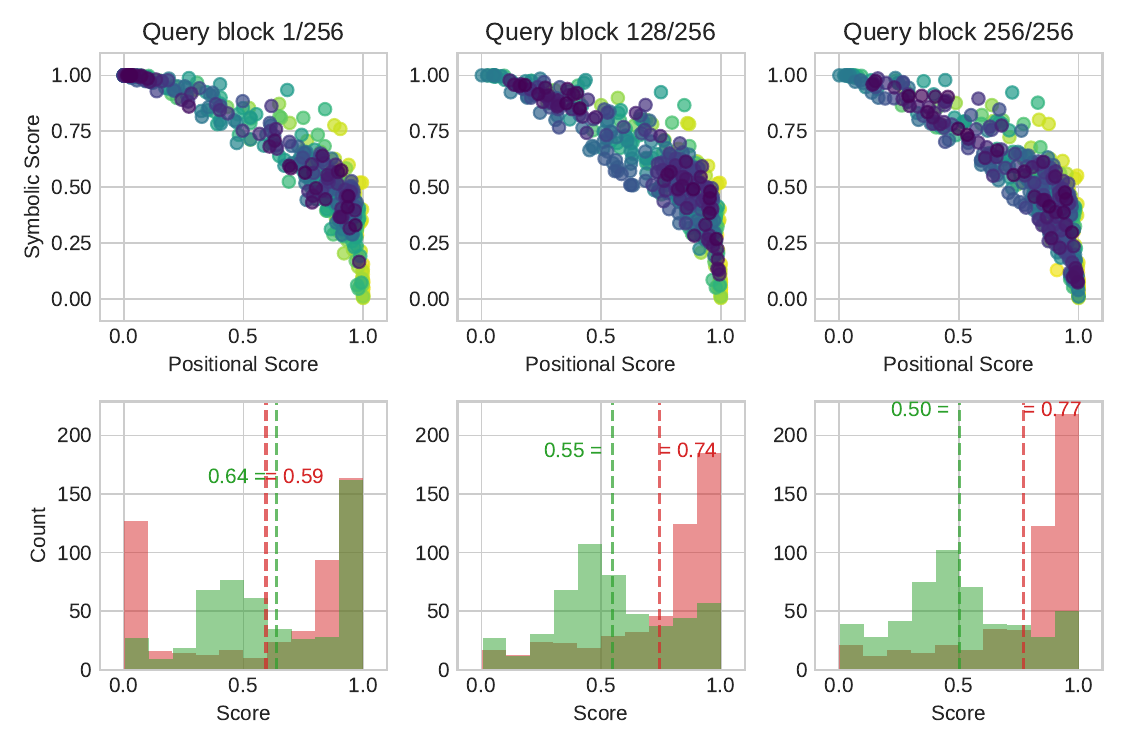}
    \caption{
        \textbf{Heads when the block moves (1 to 9).}
        (Above) Heads in the positional–symbolic plane for each block position.
        (Below) Histograms showing how positional and symbolic scores change as the relevant block moves.
    }
    \label{fig:global_pos_sym_llama_bottom}
\end{subfigure}
\caption*{Figure~\ref{fig:global_pos_sym_combined_llama} (continued).}
\end{figure}

\FloatBarrier

\subsubsection{Llama-3.2-3B-Instruct}

\begin{figure}[H]
\centering
\begin{subfigure}[t]{\linewidth}
    \centering
    \includegraphics[width=0.8\linewidth]{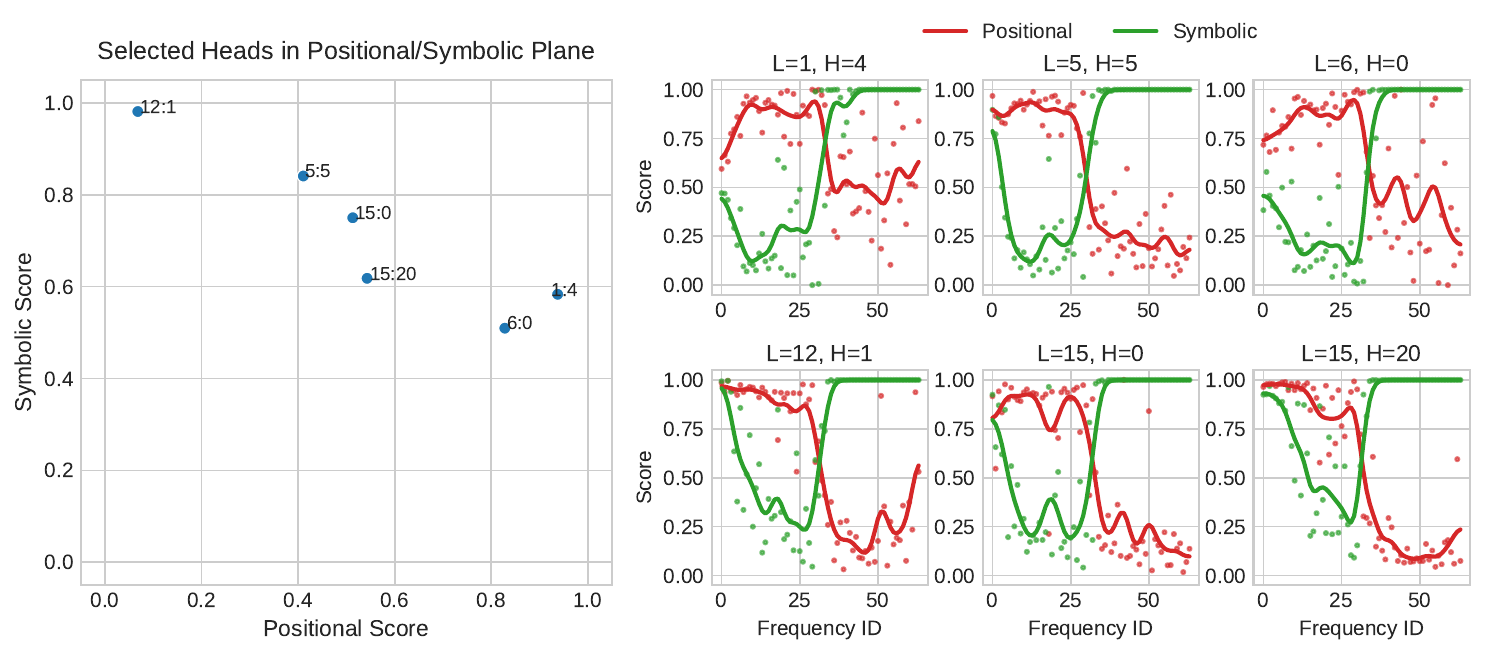}
    \caption{
        \textbf{Positional--Symbolic Plane across Frequencies.}
        Visualization of six attention heads from the \texttt{meta-llama/Llama-3.2-3B-Instruct} model.
        (Left) Each point represents a head in the positional–symbolic plane, labeled by its layer and head index.
        (Right) For the same heads, we plot their positional and symbolic scores as a function of RoPE frequencies.
    }
    \label{fig:positional_symbolic_plane_llama_3}
\end{subfigure}

\begin{subfigure}[t]{\linewidth}
    \centering
    \includegraphics[width=0.68\linewidth, trim=2.0cm 0 2.5cm 0, clip]{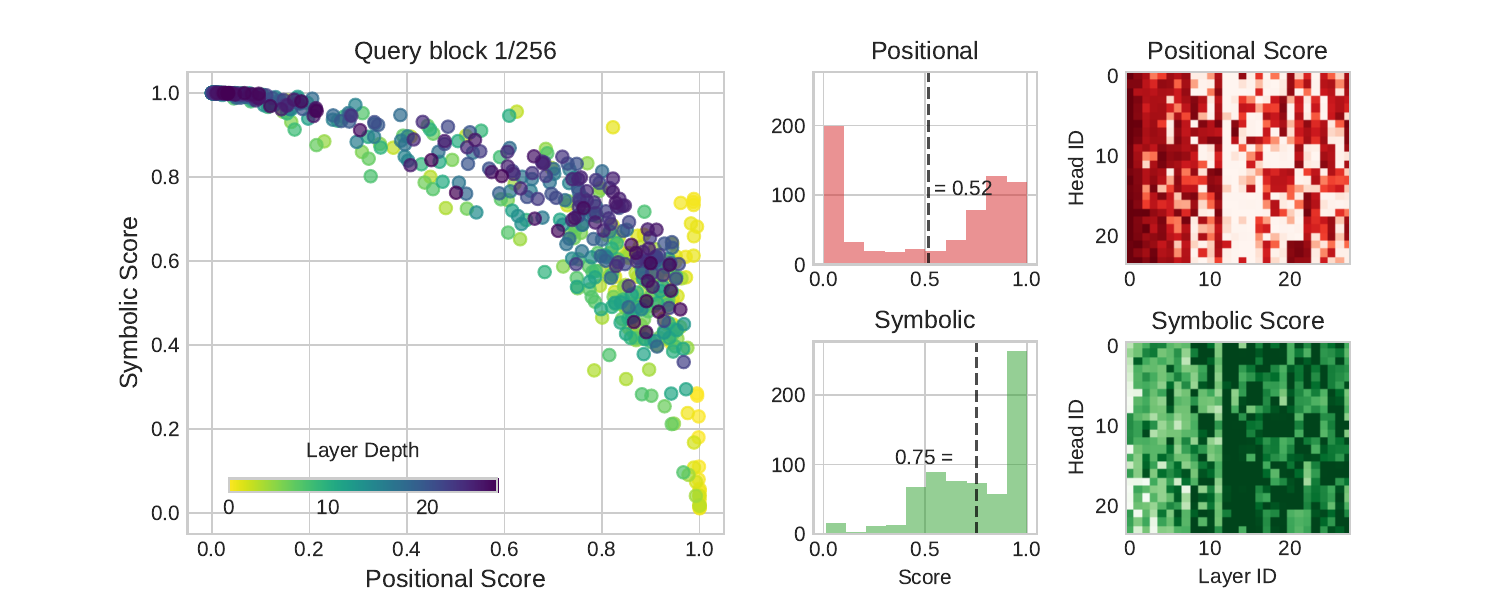}
    \caption{
        \textbf{All heads, fixed first block.}
        (Left) Each head in the positional–symbolic plane.
        (Middle) Histogram showing predominance of symbolic over positional behavior.
        (Right) Heatmaps of positional and symbolic scores for each head across layers.
    }
    \label{fig:global_pos_sym_llama_3_top}
\end{subfigure}

\caption{
    \textbf{Positional–Symbolic Plane across Heads.}
    28 Layers, 24 Heads per layer. Since the model uses a head dimension of 128, this corresponds to 64 RoPE frequencies.
}
\label{fig:global_pos_sym_combined_llama_2}
\end{figure}

\begin{figure}[H]
\ContinuedFloat
\centering
\begin{subfigure}[t]{\linewidth}
    \centering
    \includegraphics[width=0.7\linewidth]{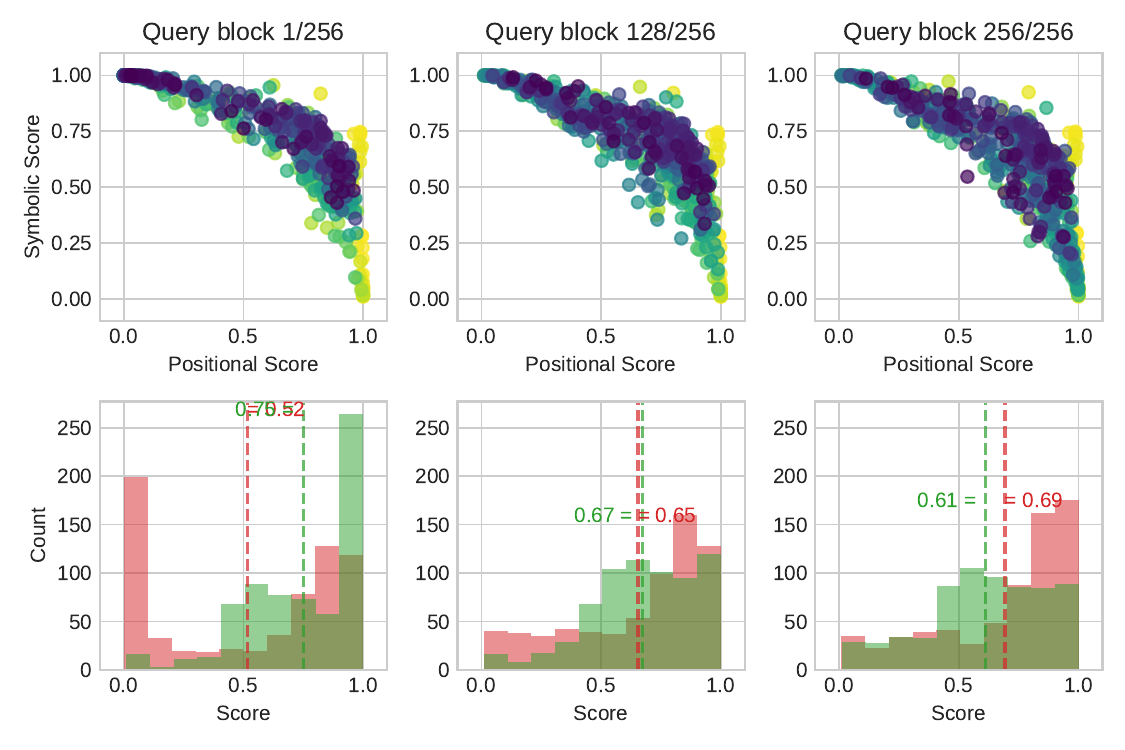}
    \caption{
        \textbf{Heads when the block moves (1 to 9).}
        (Above) Heads in the positional–symbolic plane for each block position.
        (Below) Histograms showing how positional and symbolic scores change as the relevant block moves.
    }
    \label{fig:global_pos_sym_llama_3_bottom}
\end{subfigure}
\caption*{Figure~\ref{fig:global_pos_sym_combined_llama_2} (continued).}
\end{figure}

\FloatBarrier

\subsection{Qwen2-1.5B-Instruct}

\begin{figure}[H]
\centering
\begin{subfigure}[t]{\linewidth}
    \centering
    \includegraphics[width=0.9\linewidth]{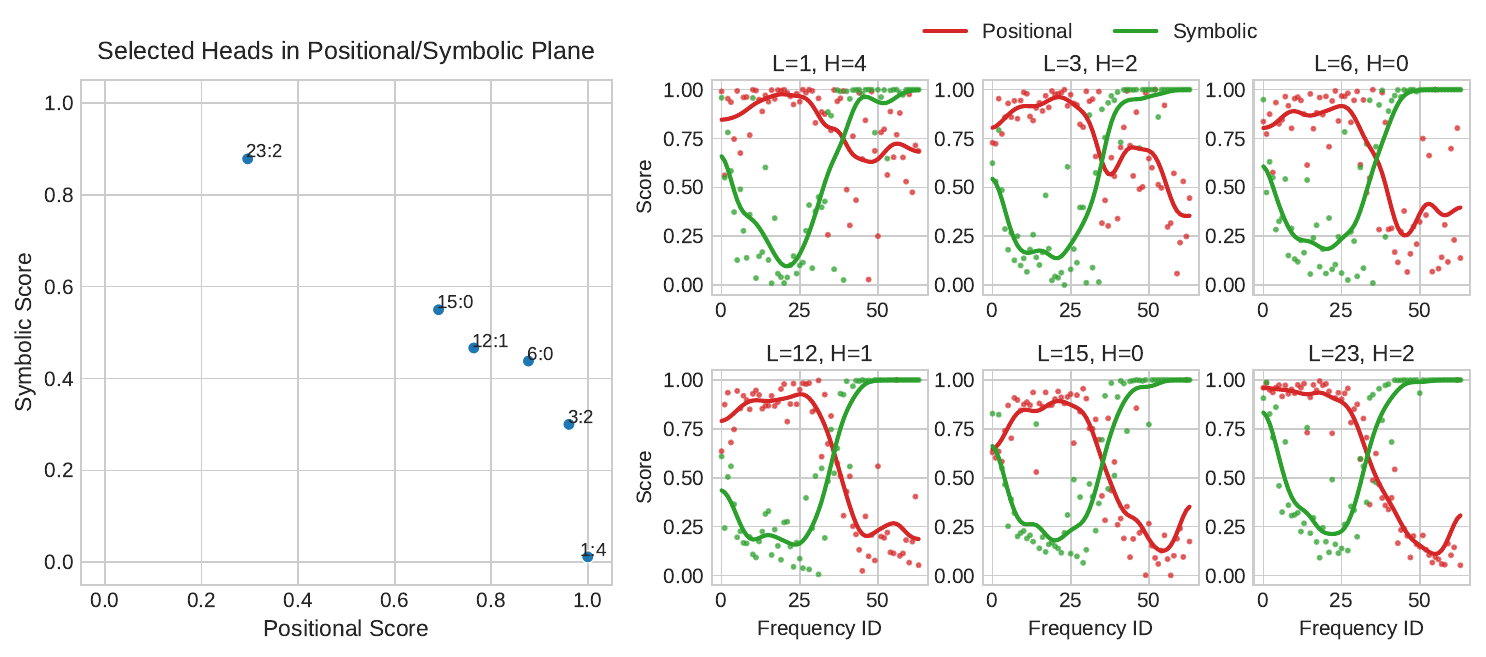}
    \caption{
        \textbf{Positional--Symbolic Plane across Frequencies.}
        Visualization of six attention heads from the \texttt{qwen/Qwen2-1.5B-Instruct} model.
        (Left) Each point represents a head in the positional–symbolic plane, labeled by its layer and head index.
        (Right) For the same heads, we plot their positional and symbolic scores as a function of RoPE frequencies.
    }
    \label{fig:positional_symbolic_plane_qwen}
\end{subfigure}

\begin{subfigure}[t]{\linewidth}
    \centering
    \includegraphics[width=0.9\linewidth, trim=2.75cm 0 3.5cm 0, clip]{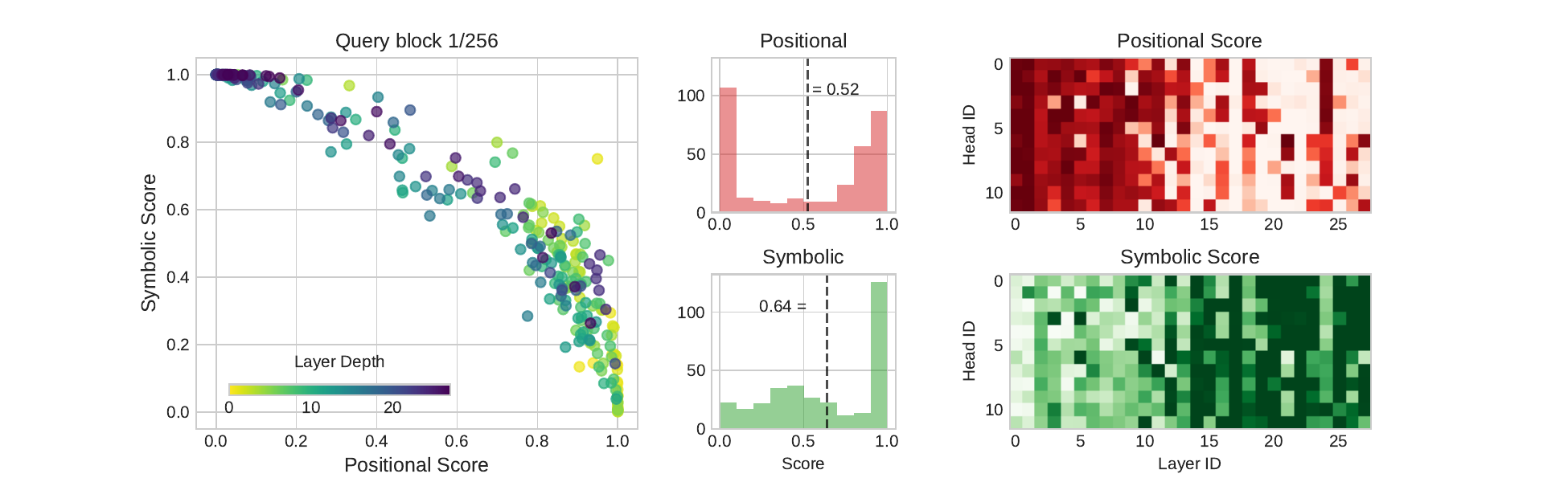}
    \caption{
        \textbf{All heads, fixed first block.}
        (Left) Each head in the positional–symbolic plane.
        (Middle) Histogram showing predominance of symbolic over positional behavior.
        (Right) Heatmaps of positional and symbolic scores for each head across layers.
    }
    \label{fig:global_pos_sym_qwen_top}
\end{subfigure}

\caption{
    \textbf{Positional–Symbolic Plane across Heads.}
    28 Layers, 12 Heads per layer. Since the model uses a head dimension of 128, this corresponds to 64 RoPE frequencies.
}
\label{fig:global_pos_sym_combined_qwen}
\end{figure}

\begin{figure}[H]
\ContinuedFloat
\centering
\begin{subfigure}[t]{\linewidth}
    \centering
    \includegraphics[width=0.7\linewidth]{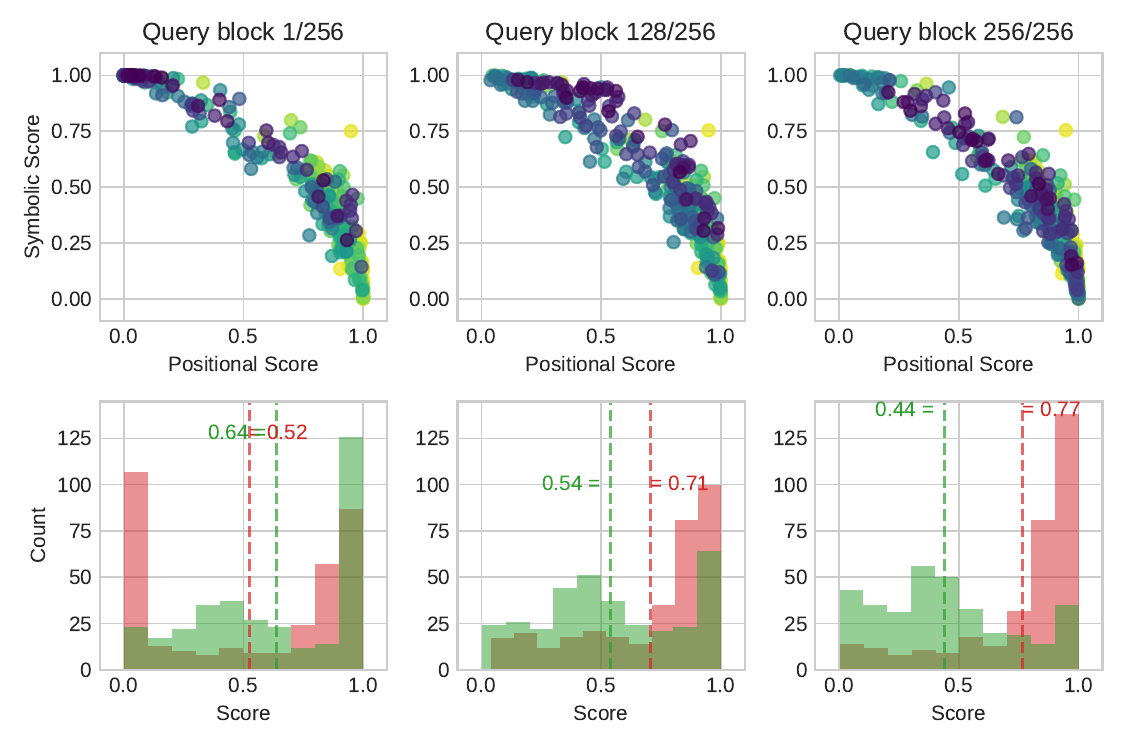}
    \caption{
        \textbf{Heads when the block moves (1 to 9).}
        (Above) Heads in the positional–symbolic plane for each block position.
        (Below) Histograms showing how positional and symbolic scores change as the relevant block moves.
    }
    \label{fig:global_pos_sym_qwen_bottom}
\end{subfigure}
\caption*{Figure~\ref{fig:global_pos_sym_combined_qwen} (continued).}
\end{figure}

\FloatBarrier

\subsection{Qwen2-0.5B-Instruct}

\begin{figure}[H]
\centering
\begin{subfigure}[t]{\linewidth}
    \centering
    \includegraphics[width=0.9\linewidth]{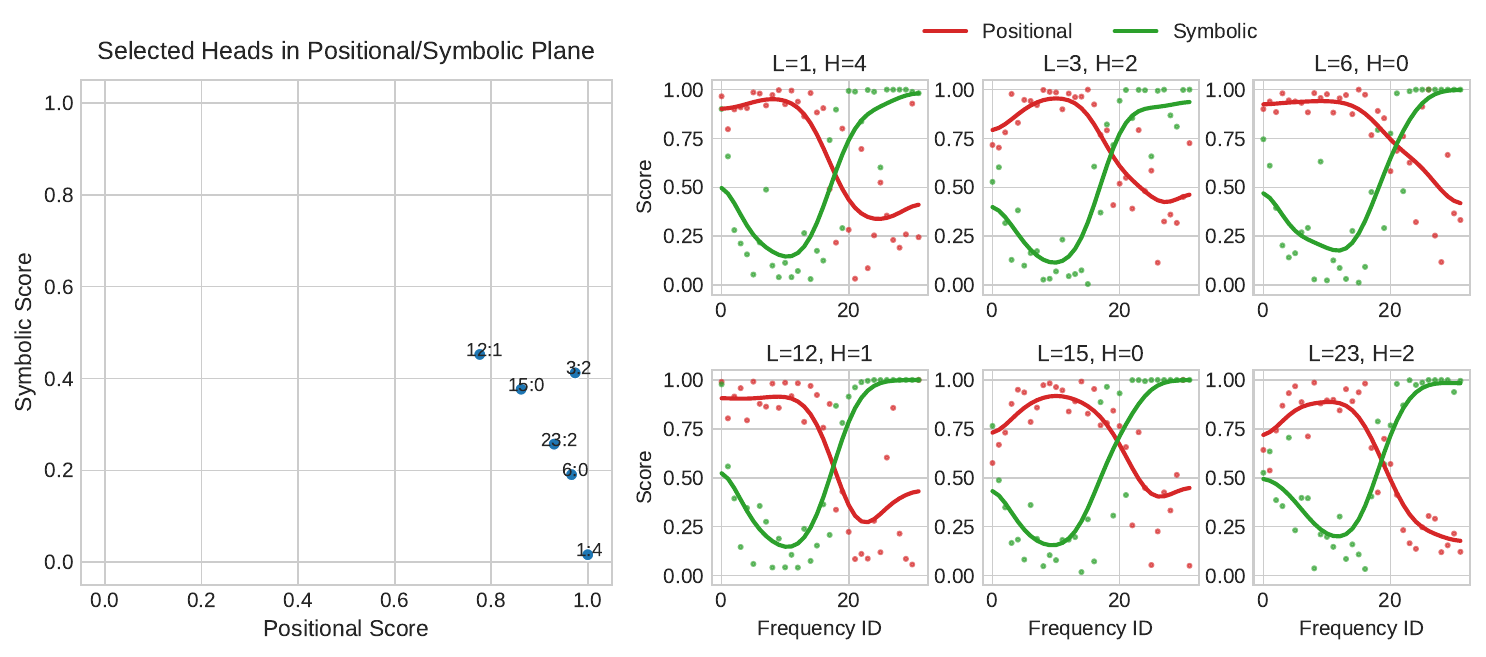}
    \caption{
        \textbf{Positional--Symbolic Plane across Frequencies.}
        Visualization of six attention heads from the \texttt{qwen/Qwen2-0.5B-Instruct} model.
        (Left) Each point represents a head in the positional–symbolic plane, labeled by its layer and head index.
        (Right) For the same heads, we plot their positional and symbolic scores as a function of RoPE frequencies.
    }
    \label{fig:positional_symbolic_plane_qwen_05}
\end{subfigure}

\begin{subfigure}[t]{\linewidth}
    \centering
    \includegraphics[width=0.9\linewidth, trim=2.75cm 0 3.5cm 0, clip]{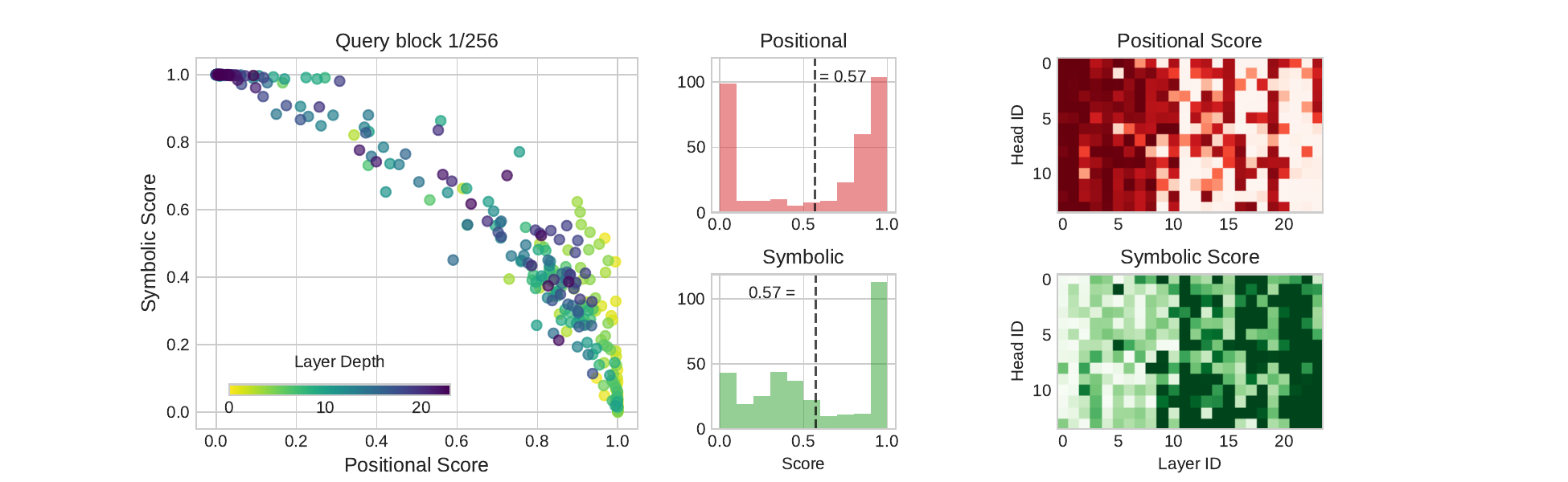}
    \caption{
        \textbf{All heads, fixed first block.}
        (Left) Each head in the positional–symbolic plane.
        (Middle) Histogram showing predominance of symbolic over positional behavior.
        (Right) Heatmaps of positional and symbolic scores for each head across layers.
    }
    \label{fig:global_pos_sym_qwen_05_top}
\end{subfigure}

\caption{
    \textbf{Positional–Symbolic Plane across Heads.}
    24 Layers, 14 Heads per layer. Since the model uses a head dimension of 64, this corresponds to 32 RoPE frequencies.
}
\label{fig:global_pos_sym_combined_qwen_05}
\end{figure}

\begin{figure}[H]
\ContinuedFloat
\centering
\begin{subfigure}[t]{\linewidth}
    \centering
    \includegraphics[width=0.7\linewidth]{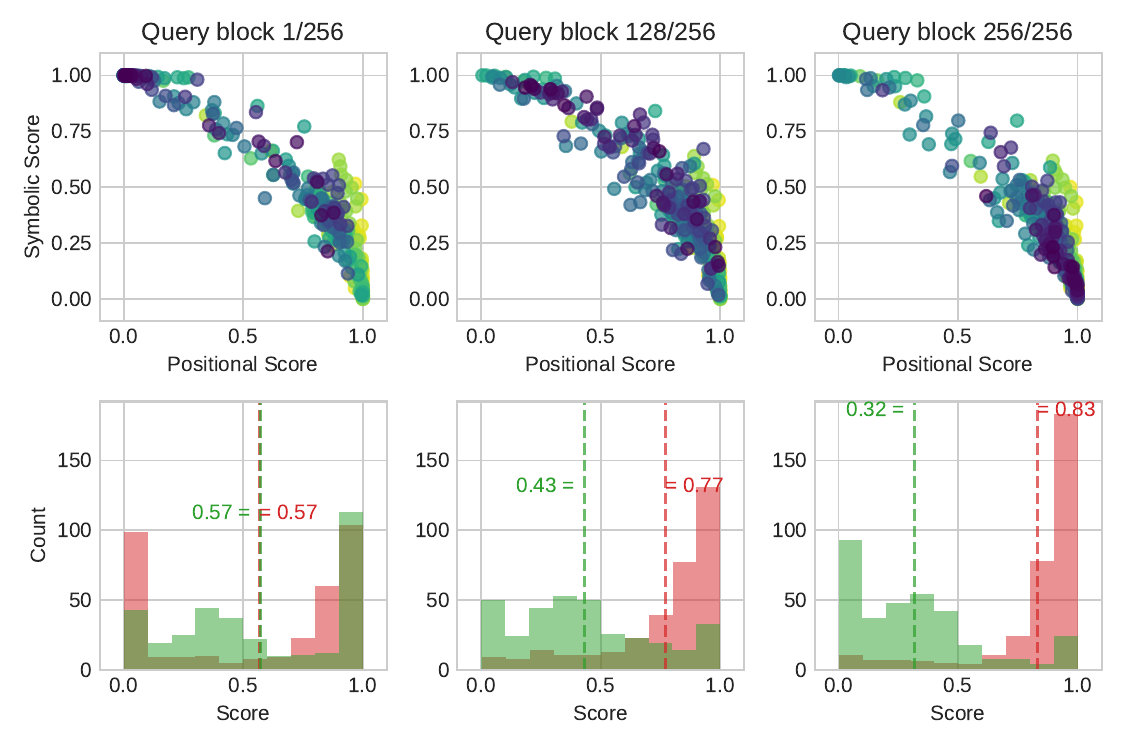}
    \caption{
        \textbf{Heads when the block moves (1 to 9).}
        (Above) Heads in the positional–symbolic plane for each block position.
        (Below) Histograms showing how positional and symbolic scores change as the relevant block moves.
    }
    \label{fig:global_pos_sym_qwen_05_bottom}
\end{subfigure}
\caption*{Figure~\ref{fig:global_pos_sym_combined_qwen_05} (continued).}
\end{figure}

\FloatBarrier

\end{document}